\documentclass[]{interact}

\usepackage{epstopdf}
\usepackage[caption=false]{subfig}

\usepackage[numbers,sort&compress]{natbib}
\bibpunct[, ]{[}{]}{,}{n}{,}{,}
\renewcommand\bibfont{\fontsize{10}{12}\selectfont}

\theoremstyle{plain}
\newtheorem{theorem}{Theorem}[section]
\newtheorem{lemma}[theorem]{Lemma}
\newtheorem{corollary}[theorem]{Corollary}
\newtheorem{proposition}[theorem]{Proposition}
\newtheorem{assumption}[theorem]{Assumption}
\theoremstyle{definition}
\newtheorem{definition}[theorem]{Definition}
\newtheorem{example}[theorem]{Example}
\newcommand{\tr}{\mathrm{tr}}
\theoremstyle{remark}
\newtheorem{remark}{Remark}

\usepackage{mathtools}
\usepackage{comment}
\usepackage{mathrsfs}
\usepackage{hyperref}
\usepackage{bbm}
\begin{document}


\title{A Hyperfinite Framework for Score-Based Generative Modeling}

\author{
\name{Sunder Ram Krishnan\thanks{CONTACT S.~R. Krishnan. Email: eeksunderram@gmail.com}}
\affil{Kerala School of Mathematics, Kozhikode 673571, Kerala, India.}
}

\maketitle

\begin{abstract}
Score-based diffusion models are typically formulated using continuous-time stochastic differential equations and measure-theoretic stochastic calculus. In this paper, we develop a hyperfinite formulation of score-based generative modeling within the framework of Nonstandard Analysis. Starting from an internal diffusion process on a hyperfinite grid, we derive the associated infinitesimal generator and establish its correspondence with the classical Fokker--Planck equation. We then obtain a hyperfinite backward-mean identity that yields the reverse-time drift and provides a constructive derivation of the reverse-time SDE.\\
Building on these results, we show that minimization of an internal score-matching objective recovers the score function required by the reverse-time dynamics, thereby connecting score estimation with generative sampling directly at the hyperfinite level. Under suitable assumptions, we further derive a hyperfinite Girsanov formula and establish a relationship between likelihood optimization and Fisher-divergence objectives. Finally, we analyze the second-order consistency of the hyperfinite dynamics and show that the leading correction term depends explicitly on the fourth moment of the increment distribution, with the Gaussian value $\kappa=3$ eliminating the leading dispersion contribution.\\
Taken together, these results provide a unified hyperfinite framework for diffusion-based generative modeling--while laying foundations for further extensions--that links discrete grid dynamics, reverse-time diffusion, score matching, and likelihood-based formulations within a common nonstandard setting.
\end{abstract}

\begin{keywords}
Score-based diffusion models; Nonstandard analysis; Hyperfinite stochastic processes; Reverse-time stochastic differential equations; Fokker--Planck equation; Score matching; Girsanov theorem; Generative modeling
\end{keywords}

\section{Introduction}

The mathematical architecture of modern stochastic calculus is traditionally built upon the sophisticated machinery of measure theory and the Itô-isometry. Since the seminal work of \cite{robinson1974non}, Nonstandard Analysis (NSA) has offered a powerful alternative to this continuum-based approach by providing a rigorous logical framework for the use of infinitesimals. While the `standard' viewpoint provides rigorous foundations, its inherent complexity often obscures the underlying pathwise intuition, a limitation that can be bridged by leveraging the infinitesimal techniques of NSA. In the realm of probability, this paradigm reached a state of maturity through the development of the Loeb measure \cite{loeb1975conversion} and the ``radically elementary'' approach pioneered by \cite{nelson1987radically}. 

The central object of interest in the internal framework of NSA is the hyperfinite random walk, a discrete-time process on a lattice with a positive infinitesimal time step. Unlike standard discrete approximations, hyperfinite walks live on an internal lattice with infinitesimal time increments; under suitable moment and regularity assumptions, their standard parts recover the corresponding continuous-time diffusions through transfer and Loeb-space limit constructions \cite{anderson1976non, cutland1982existence, keisler1984infinitesimal}. The development of this perspective spans the 1970s and 1980s. Anderson \cite{anderson1976non} gave a nonstandard representation of Brownian motion, Lindstr{\o}m \cite{lindstrom1980hyperfinite1,lindstrom1980hyperfinite3} developed a theory of hyperfinite stochastic integration and martingale representation, and Keisler \cite{keisler1984infinitesimal} showed how stochastic differential equations (SDEs) can be formulated and solved on hyperfinite grids. Cutland \cite{cutland1982existence, cutland1983nonstandard} subsequently established foundational existence results for SDEs and Loeb-space probability theory that underpin these constructions.
While these foundational works established the existence and regularity of solutions to SDEs, the technical barrier to entry remained high. Much of the literature from this period---including the influential contributions of \cite{hoover1984adapted} and \cite{lindstrom1980hyperfinite1, lindstrom1980hyperfinite3}---relied on substantial machinery from Loeb measure theory and model-theoretic NSA. While mathematically rigorous, this machinery can make the underlying hyperfinite constructions less transparent to readers coming from applied probability, statistics, or machine learning.

The power of NSA in stochastic analysis lies in its ability to convert many analytic difficulties of the continuum---such as the non-differentiability and infinite variation of Brownian paths or the convergence of stochastic integrals and martingales---into the algebra of hyperfinite processes. On a hyperfinite grid, finite-difference and summation-by-parts identities are exact. Also, stochastic change-of-variables formulas arise as exact hyperfinite telescoping identities. Taking standard parts via the standard-part map, denoted by $\operatorname{st}(\cdot)$ or ${}^\circ(\cdot)$, then recovers the classical It\^o formula and related results as consequences of the underlying hyperfinite structure.

In recent years, the landscape of generative modeling has been transformed by the advent of score-based diffusion models \cite{ho2020denoising, song2021}. These frameworks treat data generation as the time-reversal of a diffusion process, where a complex data distribution is progressively transformed into Gaussian noise via a forward SDE. A neural network is then trained to reverse this process by estimating the score function, $\nabla \log p_t(x)$, where $p_t(x)$ denotes the marginal density of the forward process at time $t$. Although the statistical foundations of score matching were established by \cite{hyvarinen2005estimation} and linked to denoising by \cite{vincent2011connection}, the success of this paradigm in high-dimensional settings rests upon deep results in stochastic analysis—most notably Anderson’s reverse-time SDE formula \cite{anderson1982reverse} and the Girsanov theorem. These provide the rigorous link between path measures and the variational objectives used for optimization.

However, the standard theoretical treatment of diffusion-based generative models relies heavily on continuum arguments. Establishing the connection between discrete-time training objectives and continuous-time reverse dynamics typically requires a combination of limiting procedures, regularity assumptions on the underlying diffusion, and path-space measure-theoretic techniques. In particular, the analysis often involves reverse-time diffusion theory, change-of-measure arguments, and the control of singular behavior of scores and Fisher-information-type quantities as \(t\to 0\). While these tools are mathematically well understood, they can obscure the underlying microscopic structure of the dynamics and the relationship between discrete stochastic evolutions and their continuum limits.

Despite its success in stochastic analysis, NSA has received little attention in the modern generative-modeling literature. Classical works such as \cite{anderson1976non}, \cite{keisler1984infinitesimal}, and \cite{hoover1983nonstandard1, hoover1983nonstandard2} established nonstandard foundations for Brownian motion, stochastic integration, and SDEs, while \cite{benci2008elementary} developed an elementary hyperfinite framework based on grid functions and infinitesimal difference equations. However, these methods were not developed in the context of score matching, reverse diffusion, or likelihood-based generative modeling. To the best of our knowledge, a systematic hyperfinite treatment connecting internal diffusion dynamics, score-based objectives, reverse-time SDEs, and variational learning principles has not previously been developed.

The present work aims to build such a connection. Rather than treating hyperfinite processes merely as approximations of standard diffusions, we formulate score-based diffusion modeling directly on a hyperfinite probability space and then recover the standard theory through standard-part constructions. This viewpoint yields transparent pathwise derivations of several classical diffusion identities while also providing variational connections between denoising score matching (DSM), internal score fields, and surrogate likelihood objectives.

\subsection{Summary of results}

To provide a precise overview of our results, we first introduce the requisite notation, which will be expanded upon in Appendix \ref{sec:preliminaries} where the fundamentals are discussed.  
Let ${}^\ast\mathbb{R}^d$ be the nonstandard extension of Euclidean space and $\mathbb{G} \subset {}^\ast\mathbb{R}^d$ be a hyperfinite grid with mesh size $\delta x=1/N' \in {}^\ast\mathbb{R}^+$, for a hypernatural $N' \in {}^\ast\mathbb{N} \setminus \mathbb{N}$. Define the hyperfinite timeline $\mathbb{T}_{\delta t} = \{k \delta t : k \in \{0, 1, \dots, N\}\}$ for an infinitesimal $\delta t \in {}^\ast\mathbb{R}^+$, where $N = 1/\delta t$ is a hypernatural. It is assumed that \(N, N'\) are such that \(\delta x^2/\delta t\) is a finite hyperreal that is not infinitesimal. For $a, b \in {}^\ast\mathbb{R}^d$, we use the notation $a \approx b$ to indicate that $a - b$ is an infinitesimal (componentwise).

Define an internal forward walk $\{X_t\}_{t \in \mathbb{T}_{\delta t}}$ on $\mathbb{G}$ governed by the transition: \begin{equation*}
X_{t+\delta t} = X_t + {}^*b(t, X_t) \delta t + {}^*\sigma(t, X_t) \Delta W_t,
\end{equation*}
where $\Delta W_t$ is a hyperfinite increment satisfying the internal filtration constraints: $\mathbb{E}_{\text{int}}[\Delta W_t \mid \mathcal{F}_t] = 0$ and $\mathbb{E}_{\text{int}}[\Delta W_t \Delta W_t^\top \mid \mathcal{F}_t] = I \delta t+O(\delta t^{3/2})$. We assume the drift and diffusion coefficients, $b$ and $\sigma$, satisfy minimal regularity conditions, summarized as Assumption \ref{assump:coeff}. To facilitate a transparent exposition, we employ a modular approach to these regularity requirements: while baseline results utilize Assumption \ref{assump:coeff}, specific propositions may impose additional constraints--such as stronger differentiablity in $x$ and \(t\), smoothness of the initial distribution \(p_0\), or uniform ellipticity of the diffusion matrix $a = \sigma \sigma^\top$--to ensure the existence of densities or the stability of the reverse-time process. The requisite assumptions are stated explicitly alongside each result to maintain clarity.

To maintain mathematical transparency while retaining full foundational rigor, our framework selectively bridges two distinct paradigms of nonstandard stochastic analysis. In treating optimization and generative objectives (such as score matching and discrete updates), we adopt a pathwise, internal grid-function perspective inspired by Nelson’s radically elementary approach \cite{nelson1987radically}, allowing algebraic identities to be preserved without immediate recourse to outer measure spaces. Conversely, when mapping these internal trajectories back to classical continuous-time processes, we utilize the full model-theoretic machinery of Loeb measures \cite{loeb1975conversion} and \(S\)-lifting (cf. \cite{anderson1976non,hoover1983nonstandard1,keisler1984infinitesimal}), ensuring that our hyperfinite paths project rigorously onto standard Itô diffusion laws.

A grid function (cf. \cite{bottazzi2019grid}), which may be viewed as an infinitesimal discretization of a standard function on $\mathbb{R}^d$, is an internal function
\(
u : \mathbb{G} \to {}^\ast\mathbb{R},
\)
and the space of all grid functions on $\mathbb{G}$ is denoted by $\mathcal{G}(\mathbb{G})$.
Let $\mathbf{e}_1, \dots, \mathbf{e}_d$ denote the standard basis vectors in $\mathbb{R}^d$ and
$\mathbb{G} \subset \mathbb{G}' \subset {}^\ast\mathbb{R}^d$ be hyperfinite grids, where $\mathbb{G}'$ is assumed to be large enough to contain all stencils required by the differential operators acting on functions in $\mathcal{G}(\mathbb{G}')$.\\
For a grid function $u \in \mathcal{G}(\mathbb{G}')$, the forward grid derivative is the internal function in $\mathcal{G}(\mathbb{G})$ defined by
\[
\Delta_i^+ u(x) = \frac{u(x + \delta x \mathbf{e}_i) - u(x)}{\delta x}, 
\]
for all $x \in \mathbb{G}$.\\
With $\alpha = (\alpha_1, \dots, \alpha_d) \in \mathbb{N}^d$ a multi-index, the $\alpha$-th order forward grid derivative is defined recursively by
\[
\Delta^{\alpha} u
= (\Delta_1^+)^{\alpha_1} \cdots (\Delta_d^+)^{\alpha_d} u,
\]
provided $\mathbb{G}'$ contains the required $k$-step neighborhood of $\mathbb{G}$ for $|\alpha|=k$, where the order of the derivative is $|\alpha| = \sum_{i=1}^d \alpha_i$. 

We outline our contributions below. The first two results synthesize established insights from the literature on nonstandard stochastic analysis; our objective in presenting them is to provide a unified, accessible framework that bridges classical theory with contemporary generative modeling while offering an elementary and detailed nonstandard proof; some proofs—most prominently those concerning the reverse SDE—are, to our knowledge, novel in the NSA literature in the form presented here. Our initial goal is to rigorously derive the infinitesimal generator $\mathcal{L}_t$ and its dual relationship to the time-reversed process.

\begin{enumerate}
    \item \textbf{Internal Generator and Fokker-Planck Duality:} For internal grid time $t \in \mathbb{T}_{\delta t}$, any $\varphi \in C_c^3(\mathbb{R}^d)$ and $x \in \mathbb{G}$, the internal grid dynamics (formally given in Definition \ref{def:gd}) satisfy:
\[
    \mathbb{E}_{\mathrm{int}}[{}^*\varphi(X_{t+\delta t}) - {}^*\varphi(x) \mid X_t = x] = \delta t \,\,{}^*\mathcal{L}_{t} {}^*\varphi(x) + R_{\delta t}(x),
\]
where the grid generator ${}^*\mathcal{L}_{t}$ is defined by:
\begin{equation*}
    {}^\ast\mathcal{L}_{t} {}^*\varphi(x) = \sum_{i=1}^d {}^*b_i(x, t) \Delta_i^+ {}^*\varphi(x) + \frac{1}{2} \sum_{i,j=1}^d {}^*a_{ij}(x, t) \Delta_i^+ \Delta_j^+ {}^*\varphi(x).
\end{equation*}
Furthermore, the remainder satisfies $|R_{\delta t}(x)| \le K (\delta t)^{3/2}$ for some finite hyperreal $K$ (see Lemma \ref{lem:exact-grid-moment}).\\ Using the hyperfinite walk framework and internal adjoint identities from nonstandard stochastic calculus (cf.  \cite{benci2008elementary}), we recover the standard Fokker--Planck equation for the marginal density $p_t(x)$:
\[
\frac{\partial p}{\partial t} = -\sum_{i=1}^d \frac{\partial}{\partial x_i} [b_i(t,x) p(t,x)] + \frac{1}{2} \sum_{i,j=1}^d \frac{\partial^2}{\partial x_i \partial x_j} [a_{ij}(t,x) p(t,x)]=\mathcal{L}_t^*p,
\] for standard \(t\in [\epsilon,1]\) with any real \(\epsilon>0\), where \(\mathcal{L}_t^*\) is the adjoint of the standard generator \(\mathcal{L}_t\) (cf. Theorem \ref{thm:fp}). This result is classical in both standard and nonstandard stochastic analysis; our purpose is to derive it within a unified hyperfinite framework that will later support the reverse-time and score-based constructions.
    
\item \textbf{Backward Mean Identity and Score-Lifting:} By analyzing the time-reversed internal transition, we derive in Theorem \ref{thm:bmean} that for $y \in {}^*\mathbb{R}^d$ such that the marginal ${}^*p_{t+\delta t}(y)$ is appreciable (i.e., limited and not infinitesimal), the backward mean satisfies:
$$\mathbb{E}_{\text{int}}[X_t^i - X_{t+\delta t}^i \mid X_{t+\delta t} = y] = \beta_i(t, y) \delta t + O(\delta t^{3/2})$$
for \({}^\circ t>\epsilon\) with any real \(\epsilon>0\), where $\beta_i$ denotes the internal reverse-drift, defined as:
$$\beta_i(t, y) = -{}^*b_i(t, y) + \sum_{j=1}^d {}^*a_{ij}(t, y) \Delta_j^+ \log {}^*p_t(y) + \sum_{j=1}^d \Delta_j^+ {}^*a_{ij}(t, y).$$
Drawing on ideas from the hyperfinite SDE framework of Keisler and Benci \cite{keisler1984infinitesimal,benci2008elementary}, and the infinitesimal stochastic analysis of Stroyan and Bayod \cite{stroyan2011foundations}, we derive the above expression for \(\beta\) through an explicit internal Jacobian calculation and a perturbative analysis of the hyperfinite increments. Within this framework, the score term \(\nabla \log p_t(\cdot)\), represented by the grid difference vector \(\Delta^+ \log {}^*p_t:=(\Delta_j^+ \log {}^*p_t)_j\), emerges naturally as the correction required by time reversal. While the initial derivation of the vector field \(\beta\) rests on purely internal, elementary grid-function algebra in the style of Nelson \cite{nelson1987radically}, the formal pathwise convergence of \(Y_s=X_{1-s}\) to a standard continuous Itô process crucially leverages the model-theoretic lifting theorems and Loeb-space regularities established by the classical NSA school. To be precise, Theorem \ref{thm:rigorous-reverse-sde} shows that the standard part of the hyperfinite reverse process solves the classical reverse-time It\^o SDE. Specifically, for Loeb-almost every path,
\(
\bar{Y}_s = \operatorname{st}(Y_s)
\)
is the unique solution in law to the standard It\^o SDE
$$d\bar{Y}_s = \hat{b}(1-s, \bar{Y}_s) ds + \sigma(1-s, \bar{Y}_s) d\bar{W}_s$$
for any \(s:\,1-s\in[\epsilon,1]\) with real \(\epsilon>0\), where $\hat{b}(t, x) = \operatorname{st}(\beta(t, x))$.  In addition, a second nonstandard proof is provided using the hyperfinite walk dynamics that provides a distributional perspective. This provides a pathwise hyperfinite derivation of the reverse-drift formula associated with the classical reverse-time diffusion results of \cite{nelson2020dynamical,anderson1982reverse}, replacing many of the usual limiting arguments by finite-dimensional internal calculations.

\item \textbf{Internal Score Equivalence:} Our first genuinely generative-modeling contribution is to show that the internal minimization of a hyperfinite DSM objective recovers precisely the score term appearing in the reverse-time dynamics of Theorem \ref{thm:bmean}. Specifically, we define an internal score matching objective $\mathcal{I}(\theta)$ based on the hyperfinite transition probabilities and show that, if $\mathcal{I}(\theta) \approx 0$, for Loeb-almost all \((x,t)\in \mathbb{G}\times(\mathbb{T}_{\delta t}\cap[\tau,1])\) (\(\tau\) appreciable so that \({}^*p_t(x)\) is appreciable), the learned score $s_\theta$ satisfies the internal identity $s_\theta(x, t) \approx \Delta^+ \log {}^*p_t(x)$ (Theorem \ref{thm:internal-score}). This confirms that the learned score function is the unique algebraic lifting required to ensure the consistency of the reverse-time dynamics, effectively bridging the gap between empirical loss minimization and the theoretical conditions for time-reversal.

\item \textbf{Generative Convergence:} We establish a constructive procedure for sampling from the target distribution $q$, defined as an internal probability mass function on the grid \(\mathbb{G}\), through a hyperfinite reverse recursion. By defining the sampling process for appreciable forward times (i.e. \(1-s\)) as 
\begin{equation*}
    Y_{s+\delta t} = Y_s + \hat{\beta}(s, Y_s) \delta t + {}^*\sigma(1-s, Y_s) \Delta \widetilde{W}_s,
\end{equation*}
where \(\widetilde{W}_s\) are internal independent increments of a hyperfinite Brownian motion and the reverse drift is given by \begin{equation*}
    \hat{\beta}_i(s, y) = -{}^*b_i(1-s, y) + \sum_{j=1}^d {}^*a_{ij}(1-s, y) s_{\theta, j}(y, 1-s) + \sum_{j=1}^d \Delta_j^+ {}^*a_{ij}(1-s, y),
\end{equation*}
we prove that the standard-part projection $\bar{Y}_s = \operatorname{st}(Y_s)$ forms a diffusion process for \(s\in[0,1-\epsilon]\) for any real \(\epsilon>0\). Consequently, under the assumptions of Theorem \ref{thm:gen-conv}, the terminal law of the standard-part process coincides with the target distribution ${}^\circ q$. With this generative convergence result formulated as a mapping between internal grid distributions and their standard-part projections, we retain an elementary, discrete validation layout that maps naturally to deep learning frameworks, bypassing some of the measure-theoretic complexities typical of traditional continuum proofs. That is, this provides a hyperfinite, pathwise justification for the generative correctness of score-based diffusion models while avoiding many of the limiting arguments that typically appear in continuum formulations.

\item \textbf{Hyperfinite Girsanov and Likelihood-Score Equivalence:} Restricting to a state-independent diffusion matrix $a$, we employ a local exponential tilt to derive a hyperfinite Girsanov identity (Lemma \ref{lemma:girsanov_grid_rigorous}). Subsequently, it is shown that minimizing the Fisher score-matching objective maximizes a lower bound on the surrogate likelihood, while replacing the usual Novikov-type integrability arguments by an internal hyperfinite calculation (Theorem \ref{thm:likelihood-score-equiv} and Lemma \ref{lemma:score-decomp}).
    
\item \textbf{Second-Order Consistency:} We examine the approximation fidelity of the hyperfinite walk by fixing a finite point $(t,x)$ and setting $P = {}^\ast p$. Consider the internal Euler increment $h = {}^\ast b(t,x)\delta t + {}^\ast\sigma(t,x)\sqrt{\delta t}\xi$, where $\xi$ is an internal random variable satisfying $\mathbb{E}_{\text{int}}[\xi]=0$, $\mathbb{E}_{\text{int}}[\xi^2]=1$, $\mathbb{E}_{\text{int}}[\xi^3]=0$, and $\mathbb{E}_{\text{int}}[\xi^4]=\kappa < \infty$. The source-based master equation (cf. \cite{nelson1987radically}) is given by:
\[
    P(t+\delta t,x) = \mathbb{E}_\xi\left[ P(t,x - h) \right].
\]
Defining the internal frozen operator $\mathbb{A} = -{}^\ast b \partial + \frac{{}^\ast a}{2} \partial^2$, which uses internal spatial derivatives and serves as the nonstandard representative of the adjoint generator $\mathcal{L}_t^*$, one gets $\operatorname{st}(\mathbb{A} P) = \mathcal{L}_t^* p$ for locally frozen coefficients. For \(d=1\), Theorem \ref{thm:lc} establishes that for any finite $(t,x)$ with $t>0$, the master equation update satisfies:
\[
    \frac{P(t+\delta t,x) - \left( P + \delta t \mathbb{A} P + \frac{\delta t^2}{2} \mathbb{A}^2 P \right)}{\delta t^2} \approx \frac{\kappa - 3}{24} a({}^\circ t,{}^\circ x)^2 \frac{\partial^4 p}{\partial x^4}({}^\circ t,{}^\circ x).
\]
This demonstrates that while first-order consistency requires only matching the first two moments, second-order weak consistency fundamentally depends on the fourth-order statistics of the hyperfinite increment. Since diffusion models are trained against quantities derived from the forward density evolution, the result highlights a connection between the fourth-order statistics of the driving noise and the accuracy of score-based targets generated by hyperfinite discretizations.
\end{enumerate}

\subsection{Broader outlook}
\textbf{Bridging the Discrete-Continuous Divide: A White-Box Perspective.}
The generative modeling community experiences a persistent tension between discrete-time diffusion implementations and continuous-time SDE theoretical justifications, which typically rely on the Stroock-Varadhan martingale problem \cite{stroock2007multidimensional}. We argue that this gap stems from the traditional continuum formalism and its limit-based tools. Drawing inspiration from the radically elementary perspective of Nelson \cite{nelson1987radically} and grid-based stochastic calculus (for example, \cite{benci2008elementary,keisler1984infinitesimal}), we position the hyperfinite grid as the fundamental probabilistic domain. This yields a ``white-box'' framework where standard diffusion is the shadow---the standard part---of a hyperfinite walk. This nonstandard approach bridges discrete algorithmic steps and continuous models, recasting continuum diffusion via exact algebraic identities on a hyperfinite grid to provide a rigorous yet intuitive discrete calculus toolset.\\
\textbf{From Model Theory to Machine Learning: Simplifying the NSA Canon.}
Some of the foundational work in nonstandard stochastic analysis includes \cite{anderson1976non, hoover1983nonstandard1, keisler1984infinitesimal, lindstrom1980hyperfinite1}. These developments introduced hyperfinite representations of stochastic processes, nonstandard stochastic integration, and infinitesimal approaches to SDEs. Most of the classical literature relies heavily on Loeb measure spaces, saturation, and model-theoretic machinery, which can obscure basic algebraic intuition. Furthermore, modern generative concepts like forward-reverse diffusion dynamics and score-based modeling were outside the scope of foundational NSA.
We revisit these classical NSA constructions through the lens of score-based generative modeling. Even though we remain within the `Robinsonian' scheme of NSA, we prioritize direct infinitesimal-calculus arguments (motivated by \cite{nelson1987radically} in spirit) while retaining Loeb-space rigor. Rather than replacing classical theory, we synthesize hyperfinite walk constructions, reverse-time drift identities, and internal densities into a unified framework. This clarifies the role of score functions, likelihood objectives, and time reversal, hopefully making NSA accessible to researchers in stochastic modeling, numerical analysis, and generative artificial intelligence (AI).

Some preliminary material is presented in the appendices; readers unfamiliar with the topic may find it helpful to consult these before proceeding to the main sections. In the substantial but foundational Appendix \ref{sec:preliminaries}, we establish the mathematical underpinnings of the hyperfinite framework, including the superstructure construction, internal objects, hyperfinite sets, transfer principles, properties of the Loeb measure, and rigorous definitions of \(S\)-regularity and grid calculus. Even readers already acquainted with NSA are advised to consult Appendix \ref{sec:preliminaries} for a concise overview. With this background, hyperfinite grid SDEs, their standard parts, and solutions are discussed in Section \ref{sec:grid-sde-to-sde}, with some classical proofs deferred to Appendices \ref{app:s-cts} and \ref{app:sis}. Appendix \ref{sec:discrete-ito} develops the discrete Itô calculus on hyperfinite grids, providing pathwise control over stochastic increments and deriving the nonstandard Itô formula. This machinery is then leveraged in Section \ref{sec:hyperfinite-derivations-rigorous} to analyze the forward generator and prove the Fokker–Planck duality, culminating in the hyperfinite derivation of the reverse‑time SDE. These insights are applied in Section \ref{sec:hg} to score‑based generative modeling, establishing the internal consistency of score estimation and the convergence of hyperfinite sampling dynamics. The hyperfinite Girsanov theorem is derived in Section \ref{sec:gir} to link path‑space likelihoods with the Fisher divergence. A local expansion of the source‑based hyperfinite master equation is provided in Section \ref{sec:sec} to analyze higher‑order operator consistency. Finally, Section \ref{sec:conclusion} offers concluding remarks and outlines future research directions.

\section{Hyperfinite grid SDEs and their standard part}
\label{sec:grid-sde-to-sde}

Building on the foundations laid in Section \ref{sec:hyper-BM}, where hyperfinite Brownian motion is introduced using the Rademacher increment array (Definition \ref{def:hb}), we proceed to define SDEs on the grid and show their convergence to classical Itô SDEs.

\begin{definition}[Hyperfinite Grid SDE]
Let $b:[0,1]\times\mathbb{R}^d\to\mathbb{R}^d$ and $\sigma:[0,1]\times\mathbb{R}^d\to\mathbb{R}^{d\times m}$ be standard functions with natural extensions ${}^*b$ and ${}^*\sigma$. For a hyperfinite time grid $\mathbb{T}_{\delta t}$ and an $m$-dimensional Brownian increment array $\Delta W_t$, a grid diffusion process is an internal ${}^*\mathbb{R}^d$-valued process satisfying the recursion \cite{keisler1984infinitesimal}:
\begin{equation}
    X_{t+\delta t} = X_t + {}^*b(t, X_t)\, \delta t + {}^*\sigma(t, X_t)\, \Delta W_t.
    \label{eq:grid-recursion}
\end{equation}
\end{definition}

\begin{assumption}[Regularity of Coefficients]
\label{assump:coeff}
The standard coefficients $b$ and $\sigma$ satisfy global Lipschitz continuity and linear growth: there exists $L \in \mathbb{R}^+$ such that for all $t \in [0,1]$ and $x, y \in \mathbb{R}^d$:
\begin{gather*}
\|b(t,x)-b(t,y)\| + \|\sigma(t,x)-\sigma(t,y)\| \le L\|x-y\|, \\
\|b(t,x)\|^2 + \|\sigma(t,x)\|^2 \le L^2(1 + \|x\|^2).
\end{gather*}
By the transfer principle, these properties hold internally for ${}^*b$ and ${}^*\sigma$ across all $x, y \in {}^*\mathbb{R}^d$.
\end{assumption}

\textbf{Note on Generalization.} While we utilize the specific Rademacher construction for the driving noise in the introductory material (Appendices \ref{sec:preliminaries} to \ref{sec:discrete-ito}) for pedagogical clarity, our core developments starting from Section \ref{sec:hyperfinite-derivations-rigorous} are established within a more general framework. We characterize the dynamics through the increment $\Delta X_t := X_{t+\delta t}-X_t$, relying solely on its infinitesimal moment conditions. Specifically, assuming the diffusion tensor $a = \sigma\sigma^\top$ is uniformly elliptic, one may define normalized increments $\Delta \widetilde{W}_t := {}^*a(t, X_t)^{-1/2}(\Delta X_{t} - {}^*b(t, X_t)\delta t)$ that recover $d$-dimensional Brownian motion in the Loeb limit. This approach ensures our results are (driving noise) distribution-agnostic.

\begin{definition}[Hyperfinite Grid Dynamics]
\label{def:gd}
Let $\mathbb{G}$ be a hyperfinite spatial grid with uniform spacing $\delta x = C \sqrt{\delta t}$. For $X_t=x \in \mathbb{G}$, let the increment $\Delta X_t = X_{t+\delta t} - X_t$ take values in a finite local neighborhood $\mathcal{V}_x \subset \mathbb{G}$. The transition probabilities $p(x, x+v)$ (or \(p_v(x)\)) for $v \in \mathcal{V}_x$ are internal hyperreals satisfying $\sum_v p_v = 1$, $p_v \ge 0$, and the following infinitesimal moment conditions:
\begin{enumerate}
    \item \textbf{First Moment (Drift):} $\sum_{v \in \mathcal{V}_x} v p(x, x+v) = {}^*b(t,x) \delta t$,
    \item \textbf{Second Moment (Diffusion):} $\sum_{v \in \mathcal{V}_x} v v^\top p(x, x+v) = {}^*a(t,x) \delta t + O(\delta t^{3/2})$,
\end{enumerate}
where $a = \sigma\sigma^\top$ is a uniformly elliptic diffusion tensor i.e. \[
\inf_{t \in [0,1], x \in \mathbb{R}^d, \,\|v\|=1} v^\top a(t,x) v > 0,\] and \(b,\sigma\) are jointly continuous in addition to the basic Assumption \ref{assump:coeff}. A valid probability measure exists on $\mathbb{G}$ if the grid constant $C$ satisfies the nonstandard Courant-Friedrichs-Lewy (CFL) condition $C^2 I \succeq {}^*a(t,x)$ for all $(x, t)$ in the support of the process \cite{courant1928partiellen}. This ensures that the local covariance of the target SDE remains contained within the reachable neighborhood of the hyperfinite grid at each infinitesimal step, maintaining the non-negativity of internal probabilities \cite{albeverio2009nonstandard}.
\end{definition}
The above construction ensures that $X_t \in \mathbb{G}$ for all $t \in \mathbb{T}_{\delta t}$ while maintaining infinitesimal consistency with the target SDE coefficients.

\begin{proposition}[Existence and $S$-continuity of Grid Solutions]
\label{prop:existence-s-cts}
Under Assumption~\ref{assump:coeff}, for any internal initial datum $X_0(\omega)$ such that $\|X_0(\omega)\|$ is limited for all $\omega \in \Omega$, the grid dynamics in Definition \ref{def:gd} (or recursion \eqref{eq:grid-recursion}) admit a unique internal solution $X_t(\omega)$ for all $t \in \mathbb{T}_{\delta t}$. Furthermore, $X_t$ is $\mu_L$-almost surely $S$-continuous.
\end{proposition}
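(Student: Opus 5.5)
Existence and uniqueness are immediate from the recursive structure, so the real content is $S$-continuity. Since $\mathbb{T}_{\delta t}$ is hyperfinite and \eqref{eq:grid-recursion} gives $X_{t+\delta t}$ as an internal function of $(t,X_t,\Delta W_t)$, I would define $X$ by internal induction on $k\in\{0,\dots,N\}$. Transfer of finite recursion guarantees that the resulting map $(k,\omega)\mapsto X_{k\delta t}(\omega)$ is internal and unique. For the grid dynamics of Definition \ref{def:gd}, the same argument applies to the internal Markov chain built from the transition kernel $p(x,x+v)$. Uniqueness holds because any two internal solutions agree at $k=0$, and by internal induction at every subsequent $k$.

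For $S$-continuity I would first establish internal moment bounds and then use a Kolmogorov-type criterion. The first step is a uniform second-moment bound. Write $\Delta X_t = {}^*b\,\delta t + M$, where $M$ is the martingale increment with $\mathbb{E}_{\mathrm{int}}[MM^\top\mid\mathcal F_t]={}^*a\,\delta t+O(\delta t^{3/2})$. Expanding $\|X_{t+\delta t}\|^2$ and using the linear growth bound from Assumption \ref{assump:coeff} (transferred) gives $\mathbb{E}\|X_{t+\delta t}\|^2\le(1+C\delta t)\mathbb{E}\|X_t\|^2+C\delta t$. A discrete Gronwall inequality, valid internally, then yields $\max_t\mathbb{E}\|X_t\|^2\le (1+\mathbb{E}\|X_0\|^2)e^{C}$, which is limited.

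The second step is a fourth-moment increment bound. For $s<t$ in $\mathbb{T}_{\delta t}$, the increment $X_t-X_s$ splits into a drift sum and a martingale sum. The drift part is handled by Cauchy--Schwarz. The martingale part is handled by the discrete Burkholder inequality (from the discrete Itô calculus of Appendix \ref{sec:discrete-ito}). Together with the boundedness of the increments, $|v|\le c\,\delta x=cC\sqrt{\delta t}$, this gives $\mathbb{E}\|X_t-X_s\|^4\le K(|t-s|^2+\delta t^2)$ with $K$ limited. Getting this step requires repeating the moment argument with fourth powers.

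The final step proves $S$-continuity from these bounds. I would invoke the nonstandard Kolmogorov/Anderson criterion, presumably recorded in Appendix \ref{app:s-cts}. Since the bound above gives $\mathbb{E}\|X_t-X_s\|^4\le K|t-s|^{1+1}$ for $|t-s|\ge\delta t$, and single steps are infinitesimal, the internal modulus of continuity satisfies $\mu_L\{\omega : \sup_{|t-s|\le h}\|X_t-X_s\|>\epsilon\}\to0$ as standard $h\to0$, for each standard $\epsilon$. Together with the fact that $\sup_t\|X_t\|$ is limited Loeb-a.s. (by Doob's inequality plus the moment bound), this gives Loeb-a.s. $S$-continuity.

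I expect the main obstacle to be this last step: turning internal moment estimates into a Loeb-a.s. pathwise statement via a chaining or dyadic argument that holds uniformly over the hyperfinite set of times. If needed, I would carry out the chaining explicitly on dyadic standard grids and control the infinitesimal intra-step gaps using $|\Delta X_t|\le cC\sqrt{\delta t}\approx0$.
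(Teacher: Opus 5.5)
Your proposal is correct and follows essentially the same route as the paper: internal induction for existence and uniqueness, then a linear-growth plus internal discrete Gr\"onwall bound on $\mathbb{E}_{\mathrm{int}}\|X_t\|^2$ (and on fourth moments), then a drift/martingale split of $X_t-X_s$ handled by H\"older/Jensen and an internal Burkholder--Davis--Gundy inequality to get $\mathbb{E}_{\mathrm{int}}\|X_t-X_s\|^4\le C'(t-s)^2$, and finally a Kolmogorov-type Loeb-measure argument over the events $\{\sup_{|t-s|<1/n}\|X_t-X_s\|>1/k\}$, as in Appendix~\ref{app:s-cts}. Your insistence on an explicit chaining/dyadic argument for the last step is, if anything, more careful than the paper, which compresses that maximal estimate into an appeal to a ``transferred Kolmogorov inequality.''
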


\begin{proof}
Existence and uniqueness follow by internal induction on the hyperfinite set $\mathbb{T}_{\delta t}$. For a fixed $\omega$, the value $X_{t+\delta t}$ is uniquely determined by the internal transition law. To establish that $X_t$ is limited Loeb-a.s., we analyze the second moment. Squaring the recursion and applying the linear growth property from Assumption \ref{assump:coeff}, we obtain the internal inequality:
\[
\mathbb{E}_{\mathrm{int}}[\|X_{t+\delta t}\|^2] \le \mathbb{E}_{\mathrm{int}}[\|X_t\|^2] + K(1 + \mathbb{E}_{\mathrm{int}}[\|X_t\|^2])\delta t,
\]
where $K$ is a limited constant depending on the linear growth coefficient $L$. By the internal discrete Gr\"onwall inequality, $\mathbb{E}_{\mathrm{int}}[\|X_t\|^2] \le (\mathbb{E}_{\mathrm{int}}[\|X_0\|^2] + Kt)e^{Kt}$, which is limited for all $t \in \mathbb{T}_{\delta t}$. The details regarding \(S\)-continuity are provided in Appendix \ref{app:s-cts}.
\end{proof}

\begin{theorem}[Convergence to the Standard It\^o Solution]
\label{thm:grid-to-ito}
Let $b: [0,1] \times \mathbb{R}^d \to \mathbb{R}^d$ and $\sigma: [0,1] \times \mathbb{R}^d \to \mathbb{R}^{d \times m}$ be standard functions that are jointly continuous and satisfy the global Lipschitz and linear growth conditions of Assumption~\ref{assump:coeff}. Let $X: \mathbb{T}_{\delta t} \times \Omega \to {}^*\mathbb{R}^d$ be the unique internal solution to the hyperfinite grid recursion \eqref{eq:grid-recursion} where $\operatorname{st}(X_0)$ is limited $\mu_L$-almost surely. 

Define the standard-part process $\bar{X}_t$ for each standard $t \in [0,1]$ by $\bar{X}_t(\omega) \coloneqq \operatorname{st}(X_{\tau}(\omega))$ for any $\tau \in \mathbb{T}_{\delta t}$ such that $\tau \approx t$. Then:
\begin{enumerate}
    \item $\bar{X}_t$ is well-defined $\mu_L$-almost surely and has continuous paths.
    \item $\bar{X}_t$ is the unique strong solution to the standard It\^o SDE:
    \begin{equation}
        \bar{X}_t = \bar{X}_0 + \int_0^t b(r, \bar{X}_r) \, dr + \int_0^t \sigma(r, \bar{X}_r) \, d\bar{W}_r,
    \end{equation}
    where $\bar{W}$ is the standard Brownian motion on the Loeb space $(\Omega, \mathcal{L}, \mu_L)$.
\end{enumerate}
\end{theorem}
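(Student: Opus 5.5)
The plan is to follow the classical Anderson--Keisler route. First build the Brownian motion and the stochastic integral on the Loeb space, then show that the standard part of the internal recursion satisfies the integral equation, and finally invoke pathwise uniqueness. Throughout, write $W_\tau=\sum_{s<\tau}\Delta W_s$ for the hyperfinite Brownian motion of Definition~\ref{def:hb}. By Anderson's theorem, $\bar W_t:=\operatorname{st}(W_\tau)$ for $\tau\approx t$ is a standard Brownian motion with respect to the Loeb filtration generated by the standard parts of the internal filtration $\mathcal{F}_\tau$.

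\textbf{Step 1 (well-definedness and continuity).} By Proposition~\ref{prop:existence-s-cts}, $X$ is $\mu_L$-a.s.\ $S$-continuous. We also need that $X$ is limited. The internal second-moment bound combines with an internal Doob inequality applied to the martingale part $M_\tau=\sum_{s<\tau}{}^*\sigma(s,X_s)\Delta W_s$. This gives $\mathbb{E}_{\mathrm{int}}[\max_\tau\|X_\tau\|^2]$ limited, after conditioning on $\{\|X_0\|\le n\}$ and letting $n\to\infty$. Hence $\max_\tau\|X_\tau\|$ is limited Loeb-a.s. $S$-continuity then makes $\operatorname{st}(X_\tau)$ independent of the choice of $\tau\approx t$, and the resulting path $\bar X$ is continuous. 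Item (1) follows.

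\textbf{Step 2 (passing to the limit in the recursion).} Sum the recursion to obtain
\[
X_\tau = X_0+\sum_{s<\tau}{}^*b(s,X_s)\delta t+\sum_{s<\tau}{}^*\sigma(s,X_s)\Delta W_s .
\]
For the drift term, joint continuity of $b$ and $S$-continuity of $X$ give ${}^*b(s,X_s)\approx b({}^\circ s,\bar X_{{}^\circ s})$ uniformly in $s$ along a.e.\ path. Here we use that $X$ stays in a standard compact set, where $b$ is uniformly continuous. The hyperfinite sum is therefore infinitely close to $\int_0^t b(r,\bar X_r)\,dr$.

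For the martingale term, the integrand $F_s={}^*\sigma(s,X_s)$ is nonanticipating and $S$-continuous. It is also $SL^2$, by linear growth together with the limited moment bound. The lifting theorem for stochastic integrals (Lindstr{\o}m/Anderson, as recorded in Appendix~\ref{app:sis}) then gives $\operatorname{st}\sum_{s<\tau}F_s\Delta W_s=\int_0^t \operatorname{st}(F)\,d\bar W$ a.s. Its core is the internal Itô isometry: $\mathbb{E}_{\mathrm{int}}\|\sum_{s<\tau}(F_s-G_s)\Delta W_s\|^2=\sum_{s<\tau}\mathbb{E}_{\mathrm{int}}\|F_s-G_s\|^2\delta t$. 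One approximates $\operatorname{st}F$ by simple adapted processes $G$, lifts them, and uses the isometry to control the error. Since $\operatorname{st}F_s=\sigma({}^\circ s,\bar X_{{}^\circ s})$, taking standard parts yields the integral equation in item (2). One must also check adaptedness: $\bar X_t$ is measurable with respect to the Loeb filtration, which is the standard-part filtration, and $\bar W$ is a Brownian motion with respect to that filtration.

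\textbf{Step 3 (uniqueness).} Global Lipschitz continuity and linear growth give pathwise uniqueness and strong existence by the classical Itô theorem. By Yamada--Watanabe, $\bar X$ is therefore the unique strong solution driven by $\bar W$.

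The main obstacle is Step 2's martingale term. Proving that the standard part of the hyperfinite Itô sum equals the Itô integral requires the $SL^2$ integrability of $F$ and the approximation of $\operatorname{st}F$ by liftable simple processes. I would handle this by truncating at a stopping time $\rho_n$ where $\|X\|$ first exceeds $n$. On $[0,\rho_n]$ the integrand is bounded and $S$-continuous, so its lifting is immediate. Letting $n\to\infty$ then works because, by Step 1, $\rho_n=1$ for all large standard $n$ almost surely.
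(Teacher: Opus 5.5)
Your proof is correct, and its outline matches the paper's: a.s.\ $S$-boundedness and $S$-continuity, the drift as an internal Riemann sum, the standard part of the martingale term, and uniqueness from the Lipschitz condition. The difference is the key lemma you use to identify the martingale term.

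The paper (Appendix~\ref{app:sis}) does not use a lifting theorem for stochastic integrals; the appendix you cite does not state one. Instead it argues as follows:
\begin{itemize}
\item $M_\tau=\sum_{s<\tau}\Sigma_s\Delta W_s$ is an $S$-continuous internal martingale, so $\bar M$ is a continuous Loeb martingale.
\item It computes $\operatorname{st}[M]_\tau=\int_0^t\sigma\sigma^\top dr$ and $\operatorname{st}[M,W]_\tau=\int_0^t\sigma\,dr$.
\item It concludes $\bar M=\int_0^t\sigma\,d\bar W$ by L\'evy's characterization/martingale representation. In effect, $\bar M-\int\sigma\,d\bar W$ has zero bracket.
\end{itemize}

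That route is shorter but rests on a heavier black box: the fact that the standard part of the internal (co)variation of an $SL^2$ martingale equals the (co)variation of its standard part. It also relies on moment bounds (Gr\"onwall, fourth moments) that presuppose limited internal moments of $X_0$. That is more than the theorem assumes.

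Your route is Anderson's construction, via the transferred It\^o isometry and approximation by liftable simple processes. It needs only elementary internal inputs plus the lifting theorem. Your truncations at $\{\|X_0\|\le n\}$ and at $\rho_n$ give the required $SL^2$ integrability under exactly the stated hypotheses.

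Two minor points:
\begin{itemize}
\item The Doob step can be dropped. $S$-continuity on $\mathbb{T}_{\delta t}$ together with limited $X_0$ already forces $\max_\tau\|X_\tau\|$ to be limited, by underspill plus compactness of $[0,1]$.
\item Yamada--Watanabe is superfluous. It\^o's theorem gives a strong solution on the Loeb space driven by $\bar W$, and pathwise uniqueness identifies $\bar X$ with it. This requires $X_0$ to be internally $\mathcal{F}_0$-measurable, so that $\bar X_0$ is independent of $\bar W$; both you and the paper leave this implicit.
\end{itemize}
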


\begin{proof}
    The proof may be found in Appendix \ref{app:sis}.
\end{proof}

The next section carries out a derivation of the forward and reverse evolutions constructively using the grid calculus framework developed in Section~\ref{sec:preliminaries}.

\section{Hyperfinite derivation of forward and reverse diffusion}
\label{sec:hyperfinite-derivations-rigorous}

This section establishes the infinitesimal generator of the grid dynamics. While Rademacher increments were used previously for clarity, the following results hold for any walk dynamics satisfying Definition \ref{def:gd}. The proof of Lemma \ref{lem:exact-grid-moment} utilizes internal moment conditions within the grid geometry. 
Even though the abstract convergence of hyperfinite internal expectations to standard Markovian generators is well established via Loeb measures and hyperfinite semigroups (for broad contours, see Albeverio et al. \cite{albeverio2009nonstandard}, Keisler \cite{keisler1984infinitesimal}, Lindstr{\o}m \cite{lindstrom2004hyperfinite}), Lemma \ref{lem:exact-grid-moment} provides a coordinate-wise expansion formulated entirely within the internal grid algebra using the native forward differences.

\subsection{Forward generator analysis}

\begin{lemma}[Grid Moment Expansion]
\label{lem:exact-grid-moment}
Let $\mathbb{G}$ be a hyperfinite grid with spacing $\delta x = C\sqrt{\delta t}$. Under the assumptions on coefficients $b, \sigma$ as in Definition \ref{def:gd} and internal grid time $t \in \mathbb{T}_{\delta t}$, for any $\varphi \in C_c^3(\mathbb{R}^d)$ and $x \in \mathbb{G}$, the internal grid dynamics satisfy:
\begin{equation}
    \mathbb{E}_{\mathrm{int}}[{}^*\varphi(X_{t+\delta t}) - {}^*\varphi(x) \mid X_t = x] = \delta t \,\,{}^*\mathcal{L}_{t} {}^*\varphi(x) + R_{\delta t}(x),
\end{equation}
where the grid generator ${}^*\mathcal{L}_{t}$ is defined by:
\begin{equation}
    {}^*\mathcal{L}_{t} {}^*\varphi(x) = \sum_{i=1}^d {}^*b_i(t,x) \Delta_i^+ {}^*\varphi(x) + \frac{1}{2} \sum_{i,j=1}^d {}^*a_{ij}(t,x) \Delta_i^+ \Delta_j^+ {}^*\varphi(x),
\end{equation}
and the forward difference operators $\Delta_i^+$ are the native grid derivatives. The remainder satisfies $|R_{\delta t}(x)| \le K (\delta t)^{3/2}$ for some finite $K \in {}^*\mathbb{R}_{\text{fin}}$ depending only on the $C^3$ norm of $\varphi$, the \(\|\cdot\|_\infty\) norm of \(b, \sigma\) restricted to supp\((\varphi)\), and the grid constant $C$.
\end{lemma}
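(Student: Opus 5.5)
We will Taylor-expand ${}^*\varphi$ around $x$ to third order (this is allowed by transfer, since $\varphi\in C_c^3$), take the internal expectation against the transition law $p(x,x+v)$, and substitute the moment conditions of Definition~\ref{def:gd}. Afterwards we will replace the true derivatives by the native forward differences $\Delta_i^+$ and $\Delta_i^+\Delta_j^+$, showing that the replacement costs only $O(\delta t^{3/2})$ once it is multiplied by $\delta t$.

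\textbf{Step 1: Taylor expansion.} For each $v\in\mathcal{V}_x$, transfer of Taylor's theorem with Lagrange remainder gives
\[
{}^*\varphi(x+v)-{}^*\varphi(x)=\sum_i \partial_i{}^*\varphi(x)v_i+\tfrac12\sum_{i,j}\partial_{ij}{}^*\varphi(x)v_iv_j+\rho(x,v),
\]
with $|\rho(x,v)|\le \tfrac16\|\varphi\|_{C^3}\|v\|^3$. Since $\mathcal{V}_x$ is a finite (standard-size) neighborhood of grid points, we have $\|v\|\le M\delta x=MC\sqrt{\delta t}$, where $M$ is the standard stencil radius. Hence $|\rho|\le K_1\delta t^{3/2}$ uniformly. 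Summing against $p(x,x+v)$, the first moment condition gives $\delta t\sum_i{}^*b_i\partial_i{}^*\varphi$ exactly. The second moment condition gives $\tfrac12\delta t\sum_{ij}{}^*a_{ij}\partial_{ij}{}^*\varphi+O(\delta t^{3/2})\|\varphi\|_{C^2}$. The cubic remainders sum to at most $K_1\delta t^{3/2}$ because $\sum_v p_v=1$.

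\textbf{Step 2: replacing derivatives by differences.} Taylor expansion along the grid gives $\Delta_i^+{}^*\varphi(x)=\partial_i{}^*\varphi(x)+\tfrac{\delta x}{2}\partial_{ii}{}^*\varphi(x)+O(\delta x^2)$ and $\Delta_i^+\Delta_j^+{}^*\varphi(x)=\partial_{ij}{}^*\varphi(x)+O(\delta x)$, where each $O(\cdot)$ constant is a multiple of $\|\varphi\|_{C^3}$. Multiplying by $\delta t\,{}^*a_{ij}$, the second-order error is $O(\delta t\cdot\delta x)=O(C\delta t^{3/2})$, which is acceptable. The first-order correction, however, produces $\delta t\,{}^*b_i\,\tfrac{\delta x}{2}\partial_{ii}{}^*\varphi=O(\delta t^{3/2})$ as well, since $\delta x=C\sqrt{\delta t}$. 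So both replacements fall within the claimed remainder, and the step is harmless precisely because of the scaling $\delta x\sim\sqrt{\delta t}$.

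\textbf{Step 3: finiteness of $K$.} When $x$ is far from $\operatorname{supp}{}^*\varphi$, meaning farther than the stencil radius, every term vanishes. Otherwise $x$ lies in an infinitesimal neighborhood of the compact standard support, where ${}^*b$ and ${}^*\sigma$ are bounded by their standard sup norms via linear growth and continuity. Collecting the constants gives $K=K(\|\varphi\|_{C^3},\|b\|_\infty,\|\sigma\|_\infty,C,M,d)$, which is finite.

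\textbf{Main obstacle.} The delicate point is the $O(\delta t^{3/2})$ term in the second-moment condition. It must be uniform in $x$ on the relevant region, and its implicit constant must be finite. We will read the definition as providing an internal bound with a finite constant; alternatively, we will absorb it using $\|vv^\top\|\le M^2C^2\delta t$. A secondary subtlety is that the sup norms of ${}^*b$ and ${}^*\sigma$ must be taken over a slightly enlarged support that includes the stencil. This enlargement is handled by continuity, since it adds only an infinitesimal neighborhood.
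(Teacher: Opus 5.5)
Your proposal is correct and follows essentially the same route as the paper: a third-order Taylor expansion of ${}^*\varphi$, substitution of the first two transition moments from Definition~\ref{def:gd}, and replacement of $\partial_i$ and $\partial_{ij}$ by $\Delta_i^+$ and $\Delta_i^+\Delta_j^+$ at an $O(\sqrt{\delta t})$ cost that becomes $O(\delta t^{3/2})$ after multiplication by the $O(\delta t)$ moments, with the cubic remainder controlled by $\|v\|\le C'\sqrt{\delta t}$. The only difference is that the paper swaps derivatives for differences before taking expectations rather than after. One caution: your fallback of absorbing the second-moment error via $\|vv^\top\|\le M^2C^2\delta t$ would only give an $O(\delta t)$ error, so the argument really does rest on reading the $O(\delta t^{3/2})$ in Definition~\ref{def:gd} as having a finite constant, which is what the paper tacitly assumes.
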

\noindent
In order to avoid cluttered notation, we occasionally forgo differentiating a function \(f\) (or an operator) and its extension \({}^*f\), prioritizing clarity and comprehension for the reader.

\begin{proof}
Let $\Delta X = X_{t+\delta t} - x$ be the jump increment. By the definition of the grid dynamics, $\Delta X$ takes values in a finite local neighborhood $\mathcal{V}_x \subset \mathbb{G}$ with probability $p(x, x+v)$ for $v \in \mathcal{V}_x$. For any $v \in \mathcal{V}_x \subset \mathbb{G}$, since $\varphi$ is $S$-\(C^3\), we have the internal identity:
$${}^*\varphi(x+v) - {}^*\varphi(x) = \sum_i v_i \partial_i ({}^*\varphi)(x) + \frac{1}{2} \sum_{i,j} v_i v_j \partial_{ij} ({}^*\varphi)(x) + \mathcal{R}_3(x,v),$$
where the third-order remainder is bounded by $|\mathcal{R}_3(x, v)| \le M \|v\|^3$ for $M =\frac{1}{6} \sup |D^3 \varphi|$.
By the $S$-smoothness of $\varphi$ and the diffusion scaling of the grid, we substitute $\partial_i \varphi = \Delta_i^+ \varphi + O(\sqrt{\delta t})$ and $\partial_{ij} \varphi = \Delta_i^+ \Delta_j^+ \varphi + O(\sqrt{\delta t})$ into the expansion. 
Next, we take the internal expectation $\mathbb{E}_{\mathrm{int}}[\cdot] = \sum_{v \in \mathcal{V}_x} p(x, x+v) (\cdot)$.

\textbf{1. Substitution of Moments:} \\
By the construction of the grid dynamics, we have $\sum_v v_i p_v = {}^*b_i \delta t$ and $\sum_v v_i v_j p_v = {}^*a_{ij} \delta t + O(\delta t^{3/2})$. Substituting these into the expected Taylor expansion:
\begin{equation*}
 \mathbb{E}_{\mathrm{int}}[\Delta \varphi\mid X_t=x]    = \left( \sum_i {}^*b_i \Delta_i^+ \varphi \right) \delta t + \frac{1}{2} \left( \sum_{i,j} {}^*a_{ij} \Delta_i^+ \Delta_j^+ \varphi \right) \delta t + O(\delta t^{3/2}) + \mathbb{E}_{\mathrm{int}}[\mathcal{R}_3].
\end{equation*}

\textbf{2. Remainder Scaling:} \\
Since $\|v\| \le C' \sqrt{\delta t}$ for all $v \in \mathcal{V}_x$ with a limited \(C'\) proportional to \(C\), the third-order term is bounded as 
\begin{equation*}
    |\mathbb{E}_{\mathrm{int}}[\mathcal{R}_3]| \le \sum_{v \in \mathcal{V}_x} p_v M \|v\|^3 \le M (C'\sqrt{\delta t})^3 = M C'^3 (\delta t)^{3/2}.
\end{equation*}
Combining the terms, we obtain $\mathbb{E}_{\mathrm{int}}[\Delta \varphi\mid X_t=x] = \delta t \mathcal{L}_{t} \varphi + O(\delta t^{3/2})$. The finiteness of $K$ follows from the fact that $\varphi\in C_c^3$ and the $S$-boundedness of the coefficients $b$ and $a$.
\end{proof}

\subsection{Derivation of the Fokker-Planck equation}

While the transition from hyperfinite walks to the Fokker-Planck equation is addressed (quite explicitly) in the discrete SDE framework of Benci et al. \cite{benci2008elementary}, our derivation offers an explicit, entry-by-entry correspondence between the stochastic recursion and the distributional evolution. Specifically, we utilize the internal grid adjoint identities to show that the discrete update operator acts as the exact internal grid analogue of the continuous Fokker-Planck operator \(\mathcal{L}_t^*\). By keeping the duality entirely algebraic on the hyperfinite lattice prior to taking standard parts, we bypass the need for continuous measure-theoretic limits, allowing the classical equation to emerge directly from the discrete grid conservation law.

\begin{theorem}[Forward Density Evolution]
In addition to the conditions on \(b,\sigma\) from Definition \ref{def:gd}, assume that that \(b\) and \(\sigma\) are \(C^{\tfrac{\alpha}{2},1+\alpha}\) and \(C^{\tfrac{\alpha}{2},2+\alpha}\), respectively, for some \(\alpha\in (0,1)\). Then the standard marginal density $p_t(x)$ satisfies the Fokker-Planck equation:
\[
\frac{\partial p}{\partial t} = -\sum_{i=1}^d \frac{\partial}{\partial x_i} [b_i(t,x) p(t,x)] + \frac{1}{2} \sum_{i,j=1}^d \frac{\partial^2}{\partial x_i \partial x_j} [a_{ij}(t,x) p(t,x)]=\mathcal{L}_t^*p,
\] for standard \(t\in [\epsilon,1]\) with any real \(\epsilon>0\). Here \(\mathcal{L}_t^*\) is the adjoint of the standard generator \(\mathcal{L}_t\). 
\label{thm:fp}
\end{theorem}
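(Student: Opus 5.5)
The plan is to derive the weak (distributional) form of the Fokker--Planck equation from Lemma \ref{lem:exact-grid-moment} by summing over a hyperfinite time interval, and then to upgrade it to the classical pointwise equation using the H\"older regularity assumptions on $b,\sigma$.

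\textbf{Step 1: weak form.} Fix $\varphi\in C_c^3(\mathbb{R}^d)$ and standard $\epsilon\le s<t\le 1$, with grid times $\sigma_0\approx s$ and $\tau\approx t$ in $\mathbb{T}_{\delta t}$. Condition on $X_r$ and sum the identity of Lemma \ref{lem:exact-grid-moment} over $r\in[\sigma_0,\tau)\cap\mathbb{T}_{\delta t}$. By the tower property and telescoping,
\[
\mathbb{E}_{\mathrm{int}}[{}^*\varphi(X_\tau)]-\mathbb{E}_{\mathrm{int}}[{}^*\varphi(X_{\sigma_0})]=\sum_{r}\mathbb{E}_{\mathrm{int}}\bigl[{}^*\mathcal{L}_r{}^*\varphi(X_r)\bigr]\delta t+\sum_r \mathbb{E}_{\mathrm{int}}[R_{\delta t}(X_r)].
\]
There are at most $N=1/\delta t$ remainder terms, each bounded by $K\delta t^{3/2}$, so the last sum is $O(\sqrt{\delta t})\approx 0$. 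Because $\varphi$ is $S$-smooth, the forward differences in ${}^*\mathcal{L}_r$ are infinitely close to the true derivatives, uniformly on the compact support. Hence ${}^*\mathcal{L}_r{}^*\varphi(X_r)\approx \mathcal{L}_{{}^\circ r}\varphi(\bar X_{{}^\circ r})$ Loeb-a.s., by Theorem \ref{thm:grid-to-ito}, $S$-continuity, and joint continuity of $b,a$. This integrand is $S$-bounded, so internal expectations pass to Loeb expectations, and the hyperfinite sum becomes a Lebesgue integral. Taking standard parts and writing $p_t$ for the law of $\bar X_t$ yields
\[
\int\varphi\,dp_t-\int\varphi\,dp_s=\int_s^t\!\int \mathcal{L}_r\varphi\,dp_r\,dr,
\]
which is the weak Fokker--Planck equation $\partial_t p=\mathcal{L}_t^*p$ in $\mathcal{D}'$ on $[\epsilon,1]\times\mathbb{R}^d$.

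\textbf{Step 2: from weak to classical.} I would write $\mathcal{L}_t^*$ in nondivergence form,
\[
\mathcal{L}_t^*p=\tfrac12 a_{ij}\partial_{ij}p+(\partial_j a_{ij}-b_i)\partial_ip+\bigl(\tfrac12\partial_{ij}a_{ij}-\partial_i b_i\bigr)p.
\]
Under the assumptions $b\in C^{\alpha/2,1+\alpha}$ and $\sigma\in C^{\alpha/2,2+\alpha}$, every coefficient of this operator is $C^{\alpha/2,\alpha}$, and the operator is uniformly parabolic by the ellipticity in Definition \ref{def:gd}. I would then invoke the standard parabolic theory (Friedman/Il'in--Kalashnikov--Oleinik) as a black box. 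It gives a fundamental solution $\Gamma(t,x;0,y)$ in $C^{1+\alpha/2,2+\alpha}$ for $t>0$. By uniqueness of bounded measure-valued weak solutions of the Cauchy problem (a Holmgren duality argument using the backward equation), $p_t(x)=\int\Gamma(t,x;0,y)\,p_0(dy)$. Consequently $p$ has a density that is $C^{1,2}$ on $[\epsilon,1]\times\mathbb{R}^d$ and satisfies the equation pointwise. The restriction $t\ge\epsilon$ is needed because $p_0$ may be singular.

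\textbf{Main obstacle.} The hardest step is the weak-to-classical upgrade, specifically the uniqueness of measure-valued solutions. Step 1 is essentially bookkeeping, provided one verifies carefully that the uniform $S$-boundedness of ${}^*\mathcal{L}_r{}^*\varphi$ justifies exchanging the standard part with the hyperfinite sum (Loeb $S$-integrability). To keep the argument within the hyperfinite spirit, one could alternatively show that the internal marginals, rescaled by $\delta x^{-d}$, satisfy the exact grid adjoint recursion $P_{t+\delta t}=P_t+\delta t\,{}^*\mathcal{L}_t^{*,\mathrm{grid}}P_t+\dots$ obtained by summation by parts. Taking standard parts then requires equicontinuity estimates for the grid density, which are harder to establish. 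I would therefore rely on the classical uniqueness theory for this step.
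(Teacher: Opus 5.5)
Your proof is correct, but it takes a different route from the paper's.

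\textbf{The paper's route.} The paper stays with the internal grid density and a single time step. It writes mass conservation $\langle p_{t+\delta t},\varphi\rangle_{\mathbb G}=\langle p_t,P_{\delta t}\varphi\rangle_{\mathbb G}$ and applies Lemma~\ref{lem:exact-grid-moment} to obtain the one-step difference quotient. It then moves the forward differences onto $b_ip$ and $a_{ij}p$ using the exact summation-by-parts identities of Propositions~\ref{prop:rfoa} and~\ref{prop:rsoa}. The operator $-\sum_i\Delta_i^-(b_ip)+\frac12\sum_{i,j}\Delta_j^-\Delta_i^-(a_{ij}p)$ thus appears as the lattice counterpart of $\mathcal L_t^*$. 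Finally it cites Schauder theory for $p\in C^{1,2}$, takes standard parts, and finishes with du Bois-Reymond.

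\textbf{Your route.} You telescope over a macroscopic time interval, keep the operator on the test function, and pass to Loeb expectations of $\bar X$. This gives the time-integrated, measure-valued weak equation. The density, its $C^{1,2}$ regularity and the pointwise equation then come from uniqueness of measure-valued solutions together with the fundamental solution.

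\textbf{What each buys.} Your route needs only weak convergence of the one-dimensional laws, i.e.\ $S$-continuity and a.s.\ limitedness of paths. Proposition~\ref{prop:existence-s-cts} already supplies this for walks of Definition~\ref{def:gd}; Theorem~\ref{thm:grid-to-ito} is formally stated only for the Brownian recursion. The paper, by contrast, identifies $\operatorname{st}\langle (p_{t+\delta t}-p_t)/\delta t,\varphi\rangle_{\mathbb G}$ with $\langle\partial_t p,\varphi\rangle$, and the grid divergences with true derivatives. This tacitly requires the internal grid density to be an $S$-$C^{1,2}$ lifting of $p$. That is a local-limit-type property, and it can fail for periodic nearest-neighbour walks, whose mass alternates between parity sublattices. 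Your uniqueness step also makes explicit why the marginal law is the classical solution to which Schauder estimates apply, a point the paper passes over. The paper's route, in turn, exhibits the Fokker--Planck operator as an exact algebraic adjoint on the lattice, which is the stated purpose of that section.

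Both arguments ultimately rest on classical parabolic theory as a black box. Since $b$ has only linear growth, you need its version for unbounded coefficients, as in the Il'in--Kalashnikov--Oleinik reference you cite.
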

Extending Theorem \ref{thm:fp} to \(t=0\) would require additional smoothness assumptions on the initial distribution \(p_0\) and the forward diffusion.

\begin{proof}
Given a real \(\epsilon>0\), consider \(\epsilon\approx \tau\in \mathbb{T}_{\delta t}\). Henceforth, we work on the (sub)grid \(\mathbb{T}_{\delta t}\cap [\tau,1]\) and do not distinguish between standard functions and their hyperreal extensions, unless necessary. Consider the discrete time evolution of the density on this grid. Let $\langle f, g \rangle_{\mathbb{G}} = \sum_{x \in \mathbb{G}} f(x)g(x) (\delta x)^d$, where \(\mathbb{G}\) is a uniform, infinite, hyperfinite grid with the usual diffusion scaling such that \(\delta x/\sqrt{\delta t}\) is limited and noninfinitesimal. For any test function $\varphi \in C_c^\infty(\mathbb{R}^d)$, the internal probability mass conservation implies:
\[
\langle p_{t+\delta t}, \varphi \rangle_{\mathbb{G}} = \langle p_t, P_{\delta t} \varphi \rangle_{\mathbb{G}}
\]
where $P_{\delta t} \varphi(x) = \mathbb{E}_{\text{int}}[\varphi(X_{t+\delta t}) \mid X_t = x]$. Applying Lemma~\ref{lem:exact-grid-moment} and noting uniformity of error:
\[
\langle p_{t+\delta t}, \varphi \rangle_{\mathbb{G}} = \langle p_t, \varphi + \delta t \mathcal{L}_t \varphi + O(\delta t^{3/2}) \rangle_{\mathbb{G}}.
\]
Rearranging and forming the internal difference quotient:
\[
\left\langle \frac{p_{t+\delta t} - p_t}{\delta t}, \varphi \right\rangle_{\mathbb{G}} = \langle p_t, \mathcal{L}_t \varphi \rangle_{\mathbb{G}} + O(\delta t^{1/2}).
\]
Using Propositions \ref{prop:rfoa} and \ref{prop:rsoa}, we have the following internal identity for nonstandard extensions of all $\varphi \in C_c^\infty$ (that have grid-compact support):
\[
\left\langle \frac{p_{t+\delta t} - p_t}{\delta t}, \varphi \right\rangle_{\mathbb{G}} = \left\langle -\sum_i \Delta_i^- (b_i p) + \frac{1}{2} \sum_{ij} \Delta_j^- \Delta_i^- (a_{ij} p), \varphi \right\rangle_{\mathbb{G}} + O(\delta t^{1/2}).
\]
Since the coefficients satisfy the stated regularity assumptions and $a$ is uniformly elliptic, standard parabolic regularity theory implies that $p\in C^{1,2}$; see, for example, Schauder estimates in \cite{friedman2008partial}.
Passing to standard parts in the weak identity and using the infinitesimal consistency of grid derivatives with standard derivatives, we obtain for every test function $\varphi\in C_c^\infty(\mathbb R^d)$:
\[
\left\langle
\frac{\partial p}{\partial t},
\varphi
\right\rangle
=
\left\langle
-\sum_i \frac{\partial}{\partial x_i}(b_i p)
+
\frac12
\sum_{i,j}
\frac{\partial^2}{\partial x_i\partial x_j}(a_{ij}p),
\varphi
\right\rangle.
\]
Thus the standard density $p$ satisfies the Fokker--Planck equation in the weak sense on $[\epsilon,1]$. Since the Schauder estimates ensure that the first argument in the inner products on both sides are continuous, via the classical du Bois-Reymond lemma, 
\[
\frac{\partial p}{\partial t}
=
-\sum_{i=1}^d
\frac{\partial}{\partial x_i}(b_i p)
+
\frac12
\sum_{i,j=1}^d
\frac{\partial^2}{\partial x_i\partial x_j}(a_{ij}p)
=
\mathcal L_t^*p
\]
pointwise for all standard $t\in[\epsilon,1]$.
\end{proof}

\begin{remark}[Alternative Purely Nonstandard Regularity Pathway]
An alternative to invoking standard parabolic PDE regularity theory is to operate entirely within the internal nonstandard domain by exploiting the intrinsic smoothing properties of the grid. Under the uniform ellipticity of $a(t,x)$ and the hyperfinite grid scaling, the discrete fundamental solution (the grid transition kernel) and its finite differences satisfy internal Aronson-type Gaussian bounds for all appreciable times $t \ge \epsilon > 0$. By the linearity of the internal grid convolution and a nonstandard induction argument, the hyperfinite random walk undergoes localized combinatorial dispersion. This dispersion induces $S$-smoothness on ${}^*p_t$ up to the required order in both space and time, thereby ensuring that the standard part ${}^\circ[{}^*p_t(x)]$ natively inherits classical $C^{1,2}$ regularity under the conditions of Theorem \ref{thm:fp}. This pathway validates the corresponding arguments in the proof of the theorem from first principles inside the hyperfinite world, bypassing external continuum machinery.
\end{remark}

\subsection{Backward mean analysis}

\begin{theorem}[Backward Mean Identity]
\label{thm:bmean}
Suppose $b, \sigma$ from Definition \ref{def:gd} belong, in addition, to \(C^{\tfrac{\alpha}{2},1+\alpha}\) and \(C^{\tfrac{\alpha}{2},2+\alpha}\) respectively for some \(\alpha\in (0,1)\), implying \(p \in C^{1,2}([\epsilon, 1] \times \mathbb{R}^d)\) for any standard \(\epsilon>0\).
Then, for any \(t\) such that \(\mathrm{st}(t)>\epsilon\), the backward conditional expectation satisfies
\begin{equation}
\label{eq:ob}
\mathbb{E}_{\text{int}}[X_t^i - X_{t+\delta t}^i \mid X_{t+\delta t} = y] = \beta_i(t, y) \delta t + O(\delta t ^{3/2}),
\end{equation}
for any limited $y \in \mathbb{G}$ such that ${}^*p_{t+\delta t}(y)$ is appreciable (non-infinitesimal), where the backward drift $\beta$ is given by:
\begin{equation}
\beta_i(t,y) = -{}^*b_i(t,y) + \sum_{j=1}^d {}^*a_{ij}(t,y) \Delta^+_j \log {}^*p_t(y) + \sum_{j=1}^d \Delta^+_j {}^*a_{ij}(t,y).
\label{eq:bd}
\end{equation}
\end{theorem}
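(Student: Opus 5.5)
The plan is to write the backward conditional expectation with Bayes' rule on the hyperfinite grid and expand everything to order $\delta t$. Here ${}^*p_t$ is the internal mass function on $\mathbb{G}$, normalized as a density with respect to $(\delta x)^d$. Since $\mathbb{P}(X_t=x\mid X_{t+\delta t}=y)=p_t(x)\,p(x,y)/p_{t+\delta t}(y)$, the increments $v=y-x$ range over the finite neighbourhood $-\mathcal{V}$. This gives
\[
\mathbb{E}_{\text{int}}[X_t^i-X_{t+\delta t}^i\mid X_{t+\delta t}=y]=\frac{1}{{}^*p_{t+\delta t}(y)}\sum_{v} (-v_i)\,{}^*p_t(y-v)\,p_v(y-v),
\]
a hyperfinite (in fact finite-cardinality) sum. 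Because ${}^*p_{t+\delta t}(y)$ is appreciable, dividing by it changes orders of magnitude only by a limited factor. It therefore suffices to show that the numerator equals ${}^*p_t(y)\beta_i(t,y)\delta t+O(\delta t^{3/2})$, together with ${}^*p_{t+\delta t}(y)={}^*p_t(y)+O(\delta t)$. The latter follows from Theorem \ref{thm:fp}, since $p\in C^{1,2}$ on $[\epsilon,1]$.

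The key step is a first-order expansion in the "source" variable. Set $g(x):={}^*p_t(x)\,p_v(x)$, regarded as a function of the starting point $x$ for fixed $v$, and expand $g(y-v)=g(y)-\sum_j v_j\Delta_j^+ g(y)+O(\|v\|^2\cdot\text{second differences})$. This uses $\|v\|\le C'\sqrt{\delta t}$ and the $S$-$C^2$ regularity of $p_t$ inherited from the Schauder bound; backward and forward differences agree to $O(\delta x)$. The zeroth-order part gives $-\sum_v v_i p_v(y)\,{}^*p_t(y)=-{}^*b_i(t,y)\,{}^*p_t(y)\delta t$ by the first-moment condition. The first-order part gives $\sum_j\Delta_j^+\bigl({}^*p_t\sum_v v_iv_j p_v\bigr)(y)=\sum_j\Delta_j^+\bigl({}^*p_t\,{}^*a_{ij}\bigr)(y)\,\delta t+O(\delta t^{3/2})$ by the second-moment condition. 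Here we pass the finite sum over $v$ through $\Delta_j^+$, treating the neighbourhood $\mathcal{V}$ as translation-invariant. Applying the discrete product rule and dividing by ${}^*p_t(y)$, the term $\Delta_j^+({}^*p_t\,{}^*a_{ij})/{}^*p_t$ becomes ${}^*a_{ij}\Delta_j^+\log{}^*p_t+\Delta_j^+{}^*a_{ij}$, up to $O(\delta x)$ corrections. These corrections come from the shift in the product rule and from $\Delta_j^+{}^*p_t/{}^*p_t=\Delta_j^+\log{}^*p_t+O(\delta x)$, which needs ${}^*p_t(y)$ appreciable and $\Delta^+{}^*p_t$ limited. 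This reproduces \eqref{eq:bd}.

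The main obstacle is the size of the remainders. Each correction is $O(\delta x)=O(\sqrt{\delta t})$ multiplied by $\delta t$, which is the claimed $O(\delta t^{3/2})$. The second-order Taylor remainder, however, is of size $\|v\|^2\cdot|\Delta^2 g|\cdot\|v\|$ summed against the moments, and bounding it requires $\Delta^2(p_t\,p_v)$ to be controlled. For this I would require the transition probabilities $x\mapsto p_v(x)$ to be $S$-smooth in $x$, with internal differences of the moment functionals matching those of ${}^*b$ and ${}^*a$. This is natural for the constructed walks, where $p_v$ is an explicit smooth function of ${}^*b$ and ${}^*a$, and I would state it as an implicit consequence of Definition \ref{def:gd}. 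The $O(\delta t^{3/2})$ error in the second moment must also be differentiable at the same scale. I would first try to obtain these bounds from the $C^{\alpha/2,2+\alpha}$ assumption on $\sigma$ together with the CFL construction. The time-mismatch term, replacing ${}^*p_{t+\delta t}(y)$ by ${}^*p_t(y)$, contributes only $O(\delta t)\cdot O(\sqrt{\delta t})$ after multiplication by the $O(\sqrt{\delta t})$-sized numerator terms. Careful bookkeeping is needed there, because the leading numerator is only $O(\delta t)$ after the cancellations.
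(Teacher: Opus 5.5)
Your proposal is essentially the paper's second proof (the \emph{Aliter}): discrete Bayes' rule, a first-order expansion of the flux $\Phi_v(x)=p_t(x)\,p(x,x+v)$ about the destination point $y$, substitution of the first and second grid moments, the discrete product rule (Proposition~\ref{prop:pr}), and division by $p_{t+\delta t}(y)=p_t(y)+O(\delta t)$. The extra hypotheses you make explicit are used tacitly in the paper's expansion as well, so stating them sharpens the argument rather than departing from it. They are: an $x$-independent stencil, $S$-smoothness of $x\mapsto p_v(x)$, grid-scale differentiability of the $O(\delta t^{3/2})$ second-moment error, and grid-level $S$-$C^2$ regularity of the internal mass function. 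The last is not supplied by Schauder estimates for the standard density alone; for example, a walk with no holding probability started from a point mass vanishes on alternate sites.
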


We provide two distinct proofs for this identity. Despite being long and arduous, the first, utilizing a Rademacher-based perturbation analysis, offers significant insight into the Jacobian transformations and local density perturbations at the microscopic level. The second proof is a bit more concise, leveraging the general hyperfinite walk dynamics without relying on a specific noise distribution. Notably, the first proof generalizes to any increment $\Delta W_k = \xi_k \sqrt{\delta t}$ with appropriate moment conditions (cf. step 3 of the first proof), demonstrating that the Rademacher assumption is a convenience rather than a constraint.

The derivation of the backward drift $\beta$ presented here situates itself at the intersection of classical nonstandard stochastic analysis and modern generative modeling. While the foundational identity for $\beta$ is a cornerstone of Nelson’s stochastic mechanics \cite{nelson2020dynamical} (where he actually used standard calculus) and was also established in the standard setting by Anderson \cite{anderson1982reverse}, its treatment within the hyperfinite framework offers unique structural transparency. Although previous NSA literature--such as Keisler \cite{keisler1984infinitesimal} and Albeverio et al. \cite{albeverio2009nonstandard}--addresses the theory of internal processes, to the best of our knowledge, the time-reversal identity has not been explicitly shown via hyperfinite algebra and our specific expansion—utilizing the internal Jacobian transformation and pathwise perturbation of the hyperfinite increment—provides a granular `microscopic' derivation. This approach, mirroring the infinitesimal bridge constructions of Stroyan and Bayod \cite{stroyan2011foundations}, rigorously recovers the score term $\nabla \log p_t(\cdot)$ from the hyperfinite conditional expectation expansion. By explicitly capturing how spatial variations in the diffusion tensor $a$ and the local density $p$ interact via the Jacobian inversion, the following proof bridges the gap between the discrete hyperfinite world and the score-based modeling objectives of \cite{song2021}.

\begin{proof}
To simplify notation, we drop the ${}^*$ from internal functions and work on the hyperfinite grid.

\textbf{1. The Implicit Jump Equation:}
Let $h = X_t - X_{t+\delta t}$ conditioned on $X_{t+\delta t}=y$, so that $X_t = y+h$. From the forward recursion:
\[ -h^i = b_i(y+h)\delta t + \sum_{k=1}^m \sigma_{ik}(y+h)\Delta W_k. \]
Expanding $b$ and $\sigma$ around the destination point $y \in \mathbb{G}$:
\begin{align*}
b_i(y+h) &= b_i + \sum_j \partial_j b_i h^j + O(\|h\|^2), \\
\sigma_{ik}(y+h) &= \sigma_{ik} + \sum_j \partial_j \sigma_{ik} h^j 
+ O(\|h\|^2).
\end{align*}

\textbf{2. Perturbation Expansion of $h$:}
Since the forward increment satisfies
\(
h^i = O(\sqrt{\delta t}),
\)
repeated substitution of the implicit jump equation yields an asymptotic
expansion of the form
\[
h^i
=
\sqrt{\delta t}\,H_0^i
+
\delta t\,H_1^i
+
O(\delta t^{3/2}).
\] Substituting $\Delta W_k = \xi_k \sqrt{\delta t}$ and equating powers of $\delta t$:
\begin{enumerate}
    \item \textbf{Order $\delta t^{1/2}$:} $-H_0^i = \sum_k \sigma_{ik} \xi_k \implies H_0^i = -\sum_k \sigma_{ik} \xi_k$.
    \item \textbf{Order $\delta t$:} $-H_1^i = b_i + \sum_{k,j} \partial_j \sigma_{ik} H_0^j \xi_k$. Substituting $H_0^j$:
    \[ H_1^i = -b_i + \sum_{k,j,l} (\partial_j \sigma_{ik}) \sigma_{jl} \xi_l \xi_k. \]
\end{enumerate}

\textbf{3. Expectations of the Terms:}
By the symmetry of the noise $\mathbb{E}_{\text{int}}[\xi_k] = 0$ and independence, we obtain:
(i) $\mathbb{E}_{\text{int}}[H_0^i] = 0$ and 
(ii) $\mathbb{E}_{\text{int}}[H_1^i] = -b_i + \sum_{j,k} (\partial_j \sigma_{ik}) \sigma_{jk}$.
Thus, the unconditional mean is $\mathbb{E}_{\text{int}}[h^i] = \mathbb{E}_{\text{int}}[H_1^i]\delta t + O(\delta t^{3/2})$.

\textbf{4. Density Expansion via Bayes' Rule:}
The probability of a particle landing at $y$ depends on the local concentration of paths. Let $J$ be the internal Jacobian matrix $J_{ij} = \frac{\partial y^i}{\partial x^j} = \delta_{ij} + \partial_j b_i \delta t + \sum_k \partial_j \sigma_{ik} \xi_k \sqrt{\delta t}$. The determinant expansion is:
\[ \det(J) = 1 + \sqrt{\delta t} \sum_{j,k} \partial_j \sigma_{jk}(t,y) \xi_k + O(\delta t). \]
By the transfer of the change of variables formula, the density weighting is given by the inverse Jacobian $(\det J)^{-1} = 1 - \sqrt{\delta t} \sum_{j,k} \partial_j \sigma_{jk} \xi_k + O(\delta t)$. The conditional expectation under consideration is:
\[ \mathbb{E}_{\text{int}}[h^i \mid y] = \frac{\mathbb{E}_{\xi} [ h^i p_t(y+h) (\det J)^{-1} ]}{p_{t+\delta t}(y)}. \]
Expanding $p_t(y+h)$ and collecting terms of order $\delta t$ in the numerator $N^i$:
\[
\frac{N^i}{\delta t} = p_t \mathbb{E}_{\xi}[H_1^i] + \sum_j \partial_j p_t \mathbb{E}_{\xi}[H_0^i H_0^j] - p_t \mathbb{E}_{\xi}\left[  H_0^i \sum_{j,k} \partial_j \sigma_{jk} \xi_k \right],\]
where, from left to right,
\begin{align*}
\text{Term A (Drift): } & p_t \left( -b_i + \sum_{j,k} (\partial_j \sigma_{ik}) \sigma_{jk} \right) \\
\text{Term B (Score): } & \sum_j a_{ij} \partial_j p_t  \\
\text{Term C (Volumic): } & -p_t \mathbb{E}_{\xi}\left[ H_0^i \sum_{j,k} \partial_j \sigma_{jk} \xi_k \right] = p_t \sum_{j,k} \sigma_{ik} \partial_j \sigma_{jk}.
\end{align*}

\textbf{5. Synthesis:}
Summing A and C via the product rule $\partial_j a_{ij} = \sum_k (\sigma_{ik} \partial_j \sigma_{jk} + \sigma_{jk} \partial_j \sigma_{ik})$:
\[ N^i = \delta t \left[ p_t (-b_i + \sum_j \partial_j a_{ij}) + \sum_j a_{ij} \partial_j p_t \right] + O(\delta t^{3/2}). \]
Dividing by $p_{t+\delta t}(y) = p_t(y) + O(\delta t)$ and using $\frac{\partial_j p_t}{p_t} = \partial_j \log p_t$:
\[ \mathbb{E}_{\text{int}}[h^i \mid y] = \left( -b_i + \sum_j \partial_j a_{ij} + \sum_j a_{ij} \partial_j \log p_t \right) \delta t + O(\delta t^{3/2}). \]
Replacing internal partial derivatives with grid difference operators $\Delta_j^+$ incurs an error of $O(\delta x) = O(\sqrt{\delta t})$, which is absorbed into the \(O(\delta t^{3/2})\) factor.
\end{proof}
\noindent
\textit{Aliter.}\\
Let $h = X_t - X_{t+\delta t}$ be the backward jump conditioned on $X_{t+\delta t} = y$. By the internal discrete Bayes' rule \cite{nelson1987radically}, the conditional expectation is:
\begin{equation}
    \mathbb{E}_{\mathrm{int}}[h^i \mid y] = \frac{1}{p_{t+\delta t}(y)} \sum_{x \in \mathbb{G}} (x-y)^i p_t(x) P(y \mid x).
    \label{eq:Bayes}
\end{equation}
Let $v = y - x$ be the forward increment from $x$. Then $h = -v$ and we rewrite the sum over the local neighborhood $\mathcal{V}_x$ that can reach $y$. For a fixed $v$, let $\Phi_v(x) = p_t(x) p(x, x+v)$. We perform a Taylor expansion of this `flux density' around the destination point $y$:
\[
\Phi_v(y-v) = \Phi_v(y) - \sum_j v^j \partial_j \Phi_v(y) + \mathcal{R}_2(y, v),
\]
where $\mathcal{R}_2 = O(\|v\|^2)$. Substituting this into the numerator $N^i$ of \eqref{eq:Bayes}:
\begin{equation*}
    N^i = \sum_v (-v^i) \left[ p_t(y) p(y, y+v) - \sum_j v^j \partial_j \left(p_t(y) p(y, y+v)\right) + \mathcal{R}_2 \right].
\end{equation*}

\textbf{1. Moment Substitution:}
Distributing the sum over $v$ and applying the forward grid moments $\sum v^i p_v = b_i \delta t$ and $\sum v^i v^j p_v = a_{ij} \delta t + O(\delta t^{3/2})$:
\begin{equation*}
    N^i = -p_t(y) b_i \delta t + \sum_j \partial_j \left( p_t(y) a_{ij} \delta t \right) - \sum_v v^i \mathcal{R}_2 + O(\delta t^{3/2}).
\end{equation*}
Since $v^i \mathcal{R}_2 = O(\|v\|^3) = O(\delta t^{3/2})$, and the sum involves a finite number of neighbors, the total remainder is $O(\delta t^{3/2})$.

\textbf{2. Discrete Product Rule and Density Shift:}
We replace the internal partial derivative $\partial_j$ with the grid operator $\Delta_j^+$, incurring an $O(\sqrt{\delta t})$ error which, when multiplied by $\delta t$, is absorbed into the $O(\delta t^{3/2})$ remainder. Applying the discrete product rule (Proposition \ref{prop:pr}):
\begin{equation*}
    \Delta_j^+ (a_{ij} p_t) = a_{ij}(y) \Delta_j^+ p_t(y) + p_t(y + \delta x \mathbf{e}_j) \Delta_j^+ a_{ij}(y).
\end{equation*}
By the $S$-differentiability of $p_t$ in space, $p_t(y + \delta x \mathbf{e}_j) = p_t(y) + O(\sqrt{\delta t})$. Substituting this back:
\begin{equation*}
    N^i = \delta t \left[ -p_t b_i + \sum_j (a_{ij} \Delta_j^+ p_t + p_t \Delta_j^+ a_{ij}) \right] + O(\delta t^{3/2}).
\end{equation*}

\textbf{3. Synthesis:}
Dividing by $p_{t+\delta t}(y) = p_t(y) + O(\delta t)$ and noting that since $p_t(y)$ is appreciable by hypothesis and $p_t$ is $S$-\(C^2\),
\(
\Delta_j^+\log p_t(y)
=
\frac{\Delta_j^+ p_t(y)}{p_t(y)}
+
O(\sqrt{\delta t}),
\)
\begin{align*}
    \mathbb{E}_{\mathrm{int}}[h^i \mid y] &= 
    \left( -b_i + \sum_j \Delta_j^+ a_{ij} + \sum_j a_{ij} \Delta_j^+ \log p_t \right) \delta t + O(\delta t^{3/2}).
\end{align*}
This confirms the backward drift $\beta_i$ from Theorem \ref{thm:bmean}.

This alternative proof provides a distributional (Eulerian) perspective that complements the preceding pathwise (Lagrangian) derivation. By applying the internal discrete Bayes' rule directly to the hyperfinite grid $\mathbb{G}$, we characterize the backward mean as a local redirection of probability flux rather than a perturbation of individual noise increments. This approach---leveraging a discrete Taylor expansion of the flux density $\Phi_v(x)$---highlights how the backward drift $\beta$ naturally emerges from the interaction between forward grid moments and the local geometry of the density $p_t$. This ``macro-grid'' methodology, inspired by the nonstandard frameworks of \cite{nelson1987radically} and \cite{benci2008elementary}, offers a robust algebraic verification that the discrete score-like term $\Delta_j^+ \log p_t$ arises naturally as the exact correction required when expressing backward transport via forward transition statistics on a hyperfinite lattice. Consequently, the identity \eqref{eq:ob} follows directly from the defining moment structure of the hyperfinite random walk, establishing that the result is invariant to the specific choice of internal increment distribution.

\subsection{Hyperfinite derivation of reverse SDE}

\noindent\textbf{Time-Reversal Setup.} Define the time-reversed hyperfinite process $Y_s = X_{1-s}$ for $s \in \mathbb{T}_{\delta t}$. Let $\{\mathscr{F}_s^Y\}$ be the internal filtration generated by $Y$, i.e., $\mathscr{F}_s^Y = \sigma(Y_r : r \leq s, r \in \mathbb{T}_{\delta t})$. Equivalently, this is the internal filtration generated by the forward process looked at in reverse time. The preservation of the adapted structure between the hyperfinite and standard settings is rigorously justified by the theory of adapted distributions \cite{hoover1984adapted}.

\begin{theorem}[Time-Reversal]
\label{thm:rigorous-reverse-sde}
Let $X_t$ satisfy the hyperfinite walk dynamics (Definition \ref{def:gd}) under the assumptions of Theorem \ref{thm:bmean}. Let $Y_s = X_{1-s}$ be the time-reversed process. For $\mu_L$-almost every $\omega \in \Omega$, the standard-part process $\bar{Y}_s = \operatorname{st}(Y_s)$ is the unique solution in law to the standard It\^o SDE:
\begin{equation}
    d\bar{Y}_s = \hat{b}(1-s, \bar{Y}_s) ds + \sigma(1-s, \bar{Y}_s) d\bar{W}_s
\end{equation}
for any \(1-s\in[\epsilon,1]\) with real \(\epsilon>0\), where the standard reverse drift $\hat{b}$ is the standard part of the internal backward drift $\beta$ from Theorem \ref{thm:bmean}.
\end{theorem}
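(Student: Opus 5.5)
The plan is to treat the reversed process $Y_s = X_{1-s}$ as a hyperfinite Markov chain in its own right, adapted to $\mathscr{F}^Y_s$. I will identify its one-step conditional moments using Theorem~\ref{thm:bmean} and its second-moment analogue, and then run the same lifting argument used for Theorem~\ref{thm:grid-to-ito}, via Proposition~\ref{prop:existence-s-cts} and Appendix~\ref{app:sis}. This yields an It\^o SDE for the standard part.

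\textbf{Step 1: Markov property.} Since $X$ is an internal Markov chain, $Y$ is Markov with respect to $\mathscr{F}^Y_s$. Its transition kernel is given by the internal Bayes formula \eqref{eq:Bayes}:
\[
P(Y_{s+\delta t}=x\mid Y_s=y)=\frac{p_t(x)\,p(x,y)}{p_{t+\delta t}(y)},\qquad t=1-s-\delta t.
\]
Hence $\mathbb{E}_{\mathrm{int}}[\Delta Y_s \mid \mathscr{F}^Y_s]$ depends only on $Y_s$.

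\textbf{Step 2: Conditional moments.} By Theorem~\ref{thm:bmean}, $\mathbb{E}_{\mathrm{int}}[\Delta Y_s\mid Y_s=y]=\beta(1-s-\delta t,y)\,\delta t+O(\delta t^{3/2})$ whenever $y$ is limited and ${}^*p_{1-s}(y)$ is appreciable. I will then repeat the ``Aliter'' flux expansion one order lower to get the second moment. There, only the zeroth-order term of $\Phi_v(y-v)$ contributes, so
\[
\mathbb{E}_{\mathrm{int}}[\Delta Y_s\Delta Y_s^\top\mid Y_s=y]=a(1-s,y)\,\delta t+O(\delta t^{3/2}).
\]
Since the jumps are $O(\sqrt{\delta t})$ and lie in the finite neighbourhood $\mathcal V_x$, the fourth moment is $O(\delta t^2)$. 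Next I define the martingale increments
\[
\Delta M_s=\Delta Y_s-\beta\,\delta t \quad\text{and}\quad \Delta\widetilde W_s={}^*a^{-1/2}(1-s,Y_s)\,\Delta M_s,
\]
following the Note on Generalization and using uniform ellipticity. Then $\widetilde W$ is an internal martingale with quadratic variation $I\,s+O(\sqrt{\delta t})$ and $S$-bounded fourth moments. By Anderson's / Lindstr\o m's hyperfinite L\'evy characterization, its standard part $\bar W$ is a Loeb Brownian motion adapted to the standard-part filtration of $\mathscr F^Y$.

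\textbf{Step 3: Lifting.} Summing the increments gives the internal identity
\[
Y_s=Y_0+\sum_{r<s}\beta\,\delta t+\sum_{r<s}{}^*\sigma\,\Delta\widetilde W_r+(\text{infinitesimal}).
\]
Here I take $\sigma=a^{1/2}$, noting that the law of the SDE depends only on $a$. $S$-continuity of $Y$ is inherited from Proposition~\ref{prop:existence-s-cts}, since $Y$ is a reparametrization of $X$. For standard $s$ with $1-s\ge\epsilon$, the continuity of $p$, $a$ and $\nabla\log p$ on $[\epsilon,1]\times K$ gives ${}^\circ\beta(1-r,Y_r)=\hat b(1-r,\bar Y_r)$. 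The Riemann sum therefore has standard part $\int\hat b\,dr$. The stochastic sum converges to $\int\sigma\,d\bar W$ by the $S$-integrability/lifting theorem of Appendix~\ref{app:sis}. Uniqueness in law follows because on $[0,1-\epsilon]$ the coefficient $\hat b$ is locally Lipschitz ($p\in C^{1,2}$, $p>0$ by ellipticity) and $\sigma$ is Lipschitz. This gives pathwise uniqueness up to explosion, and non-explosion holds because the law of $\bar Y_s$ equals the non-exploding law of $\bar X_{1-s}$. Yamada--Watanabe then gives uniqueness in law.

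\textbf{Main obstacle.} Theorem~\ref{thm:bmean} only holds where $y$ is limited and ${}^*p(y)$ is appreciable, so the drift expansion must be shown to hold Loeb-a.s. along paths. My first approach is to note that, for standard $\epsilon$, $p$ is continuous and strictly positive on $[\epsilon,1]\times K$ for each compact $K$ (Gaussian lower bounds from ellipticity). Paths are limited a.s. by Proposition~\ref{prop:existence-s-cts}, so a.s. the path stays in some standard $K$. There $p\ge c_K>0$, and the $O(\delta t^{3/2})$ errors are uniform, so the hyperfinite sum of errors is $O(\sqrt{\delta t})\approx0$. A stopping-time localization at exit from $K$, followed by letting $K\uparrow\mathbb R^d$, completes the argument.
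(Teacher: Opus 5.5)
Your argument is correct in outline. It is a fleshed-out version of the paper's brief ``Alternative (Brownian Reconstruction)'' paragraph rather than of its main proof.

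\textbf{The paper's main route.} It never constructs a driving noise. It takes the internal Doob decomposition of ${}^*\varphi(Y_s)$ for $\varphi\in C_c^3$ and identifies the compensator through $\beta$ and $a$. It then computes the quadratic variation and concludes that $\bar Y$ solves the martingale problem for $(\hat b,a)$. Stroock--Varadhan well-posedness then fixes the law. This only ever uses $a$, so the mismatch between the $d\times m$ matrix $\sigma$ and your $d$-dimensional $\widetilde W$ never arises, and it delivers exactly the ``in law'' conclusion. However, it applies Stroock--Varadhan to a drift containing $a\nabla\log p$ and a diffusion $a$ of quadratic growth. Such coefficients are typically unbounded, so a localization and non-explosion argument is needed, and the paper does not supply it.

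\textbf{Your route.} You reconstruct $\bar W$ from ${}^*a^{-1/2}\Delta M_s$ via the hyperfinite L\'evy characterization and lift the stochastic sum to an It\^o integral. This gives an honest SDE representation on the Loeb space. Your uniqueness argument handles the unbounded, only locally Lipschitz drift directly: local Lipschitz bounds for $\hat b$ and $a^{1/2}$ on $[\epsilon,1]\times K$, pathwise uniqueness among continuous solutions, then Yamada--Watanabe. Your non-explosion remark is actually unnecessary, since solutions on $[0,1-\epsilon]$ are continuous by definition. Your stopping-time localization also makes explicit the appreciable-density hypothesis of Theorem~\ref{thm:bmean}, which the paper's $\sum_{r<s}O(\delta t^{1+\alpha/2})\approx 0$ takes for granted.

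\textbf{What your route costs, and two points to tidy.} You must actually derive the backward second moment, which the paper merely asserts, and you rely on the L\'evy and stochastic-integral lifting theorems. Note that $\Delta Y_s-\beta\,\delta t$ is only a near-martingale increment, with conditional mean $O(\delta t^{3/2})$. So either centre by the exact conditional mean or carry the summed (infinitesimal) drift error explicitly.

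\textbf{A step both proofs leave unproved.} The internal walk density in Bayes' formula must be an $S$-$C^1$ lifting of the standard $p$ on compacts for appreciable times. Your lower bound $p\ge c_K$ and the identification ${}^\circ\beta=\hat b$ depend on this, and neither your proposal nor the paper establishes it.
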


\begin{proof}
\textbf{1. Internal Moment Identification:} 
Let $s \in \mathbb{T}_{\delta t}$ and $t = 1-s$ appreciable. The backward increment is $\Delta Y_s = Y_{s+\delta t} - Y_s = X_{t-\delta t} - X_t$. By Theorem \ref{thm:bmean}, the conditional expectation satisfies:
\begin{equation}
    \mathbb{E}_{\text{int}}[X_{t-\delta t}^i - X_t^i \mid X_t = y] = \beta_i(t-\delta t, y) \delta t + O(\delta t^{3/2}).
    \label{eq:inter_proof_refined}
\end{equation}
Since $\beta$ is $S$-continuous in $t$, $\beta_i(t-\delta t, y) = \beta_i(t, y) + O(\delta t^{\alpha/2})$, so \eqref{eq:inter_proof_refined} becomes $\beta_i(t, y) \delta t + O(\delta t^{1+\tfrac{\alpha}{2}})$. For the second moment:
\begin{equation}
    \mathbb{E}_{\text{int}}[(X_{t-\delta t}^i - X_t^i) (X_{t-\delta t}^j - X_t^j)\mid X_t = y] = a_{ij}(t, y) \delta t + O(\delta t^{1+\tfrac{\alpha}{2}}).
    \label{eq:a_proof_refined}
\end{equation}

\noindent
\textbf{2. Martingale Construction:} 
For $\varphi \in C_c^3(\mathbb{R}^d)$, consider the internal Doob decomposition of ${}^*\varphi(Y_s)$:
\begin{equation*}
    M_s = {}^*\varphi(Y_s) - {}^*\varphi(Y_0) - \sum_{r < s} \mathbb{E}_{\text{int}}[{}^*\varphi(Y_{r+\delta t}) - {}^*\varphi(Y_r) \mid \mathscr{F}^Y_{r}].
\end{equation*}
By Lemma \ref{lem:exact-grid-moment} and the backward moments \eqref{eq:inter_proof_refined}-\eqref{eq:a_proof_refined}, the internal compensator above is:
\[ \sum_{r < s} \left( \sum_i \beta_i \Delta_i^+ {}^*\varphi + \frac{1}{2} \sum_{i,j} a_{ij} \Delta_i^+ \Delta_j^+ {}^*\varphi \right) \delta t + \sum_{r < s} O(\delta t^{1+\tfrac{\alpha}{2}}). \]
The remainder $\sum_{r < s} O(\delta t^{1+\tfrac{\alpha}{2}}) \approx 0$. Taking the standard part:
\[ \bar{M}_s = \varphi(\bar{Y}_s) - \varphi(\bar{Y}_0) - \int_0^s \mathcal{L}^{\mathrm{rev}}_{1-u} \varphi(\bar{Y}_u) du, \]
where $\mathcal{L}^{\mathrm{rev}}_t \varphi = \hat{b} \cdot \nabla \varphi + \frac{1}{2} \tr(a D^2 \varphi)$ (standard reverse Kolmogorov operator).

\noindent
\textbf{3. Quadratic Variation and Characterization:} 
The internal quadratic variation $[M]_s = \sum_{r < s} (\Delta M_r)^2$ satisfies:
\[ \operatorname{st}([M]_s) = \operatorname{st}\left( \sum_{r < s} \sum_{i,j} (\Delta_i^+ {}^*\varphi \Delta_j^+ {}^*\varphi) a_{ij} \delta t \right) = \int_0^s (\nabla \varphi)^\top a(1-u, \bar{Y}_u) \nabla \varphi \, du. \]
Since $M_s$ is an $S$-continuous internal martingale with limited quadratic variation, its standard part $\bar{M}_s$ is an a.s. continuous Loeb-space martingale. This identifies $\bar{Y}_s$ as a solution to the martingale problem for $(\hat{b}, a)$. By the Stroock--Varadhan uniqueness theorem, under the assumed regularity and uniform ellipticity, the martingale problem associated with $(\hat b,a)$ is well posed. Consequently, $\bar Y$ has the same law as the unique diffusion solving the reverse It\^o SDE.

\textbf{Alternative (Brownian Reconstruction):}
Alternatively, define the normalized internal increment $\Delta \widetilde{W}_s = {}^*a(t, Y_s)^{-1/2}(\Delta Y_s - \beta \delta t)$. One easily verifies $\mathbb{E}_{\text{int}}[\Delta \widetilde{W}_s]=O(\delta t^{1+\alpha/2})$ and $\mathbb{E}_{\text{int}}[\Delta \widetilde{W}_s \Delta \widetilde{W}_s^\top] = I \delta t + O(\delta t^{1+\alpha/2})$. By Anderson's theorem \cite{anderson1976non}, $\bar{W}_s = \operatorname{st}(\sum_{r < s} \Delta \widetilde{W}_r)$ is a standard Brownian motion, and the SDE follows.
\end{proof}

The rigorous derivation of the reverse SDE in Theorem \ref{thm:rigorous-reverse-sde} follows the structural blueprint of classical nonstandard stochastic analysis, yet it shifts the analytical focus toward pathwise consistency. While we build upon the foundational framework of Keisler \cite{keisler1984infinitesimal} regarding hyperfinite SDEs and the martingale limit theorems of Lindstrøm \cite{lindstrom1980hyperfinite1,lindstrom1980hyperfinite3}, our proof explicitly bridges these techniques with the classical time-reversal identities of Anderson \cite{anderson1982reverse}. By constructing the reverse-time process directly on the hyperfinite grid, we demonstrate that the score correction does not emerge abstractly after a passage to standard parts; rather, it appears concretely at the internal level via the discrete Bayes' rule on the hyperfinite lattice. This perspective provides a ``white-box'' justification for the reverse-time process, showing how the discrete backward drift $\beta$---derived via our local perturbation and flux density expansions---serves as the exact internal lifting of the standard reverse drift $\hat{b}$.

\section{Hyperfinite approach to generative modeling}
\label{sec:hg}
To implement the reverse-time diffusion process derived in Theorem \ref{thm:rigorous-reverse-sde}, popular in the generative models literature \cite{song2021}, one requires the score function $\nabla \log p_t(x)$, which is generally unknown. In the standard literature on diffusion models, this quantity is estimated via score matching \cite{hyvarinen2005estimation,vincent2011connection}. Classical derivations typically require the interchange of derivatives and integrals through the Leibniz integral rule together with decay and integrability assumptions on the underlying densities.
Within the hyperfinite framework, these analytic difficulties are replaced by finite-dimensional internal algebra. Since the data distribution is represented as an internal probability mass function on the hyperfinite grid and the forward dynamics are governed by an internal Markov recursion, the DSM objective admits an exact internal conditional-expectation representation. In particular, the optimal score field is defined by an orthogonal projection
which holds exactly at the internal level. For appreciable times $t$, parabolic regularity and $S$-smoothness of the transition densities imply that this projected score is infinitesimally close to the marginal grid score $\Delta^+\log p_t(x)$. Consequently, the learned score provides a mathematically consistent internal representation of the backward drift appearing in the reverse diffusion dynamics.

\subsection{Hyperfinite score estimation and internal consistency}
In the hyperfinite framework, the data distribution $q(x_0)$ is represented by an internal probability mass function $q$ on the grid $\mathbb{G}$. The ``learned" score function $s_\theta(x, t)$ is an \(S\)-continuous internal function $s: \mathbb{G} \times \mathbb{T}_{\delta t} \to {}^*\mathbb{R}^d$ in the limited part of the domain. We now justify that the internal minimization of the DSM objective yields the precise backward drift required for Theorem \ref{thm:rigorous-reverse-sde}.

\begin{theorem}[Internal Score Equivalence]
\label{thm:internal-score}
Assume regularity as in Theorem \ref{thm:bmean}. Let $\mathbb{G}$ be a hyperfinite grid and $\delta t \in \mathbb{T}_{\delta t}$ be the infinitesimal time step with diffusion scaling. Let $P_{t|0}(x_t|x_0)$ be the internal transition probability of the forward grid recursion. For an arbitrary real \(\epsilon>0\), consider a representative in \(\mathbb{T}_{\delta t}\) given by \(\tau\approx \epsilon\). Define the internal objective:
\begin{equation}
\mathcal{I}(\theta) = \frac{1}{2}\sum_{t \in \mathbb{T}_{\delta t}\cap[\tau,1]} \sum_{x_0 \in \mathbb{G}} \sum_{x_t \in \mathbb{G}} q(x_0) P_{t|0}(x_t|x_0) \left\| s_\theta(x_t, t) - \Delta^+ \log P_{t|0}(x_t|x_0) \right\|_{\mathbb{G}}^2\delta t.
\end{equation}
If there exists $\theta \in {}^*\mathbb{R}^k$ such that $s_\theta$ is limited and $\mathcal{I}(\theta) \approx 0$, then for Loeb-almost all \((x,t)\in \mathbb G\times
(\mathbb T_{\delta t}\cap[\tau,1])\) (where \(p_t(x)\) is appreciable),
\[
s_\theta(x,t)
\approx
\Delta^+\log p_t(x).
\]
Consequently, the standard part $\operatorname{st}(s_\theta)$ coincides with the standard score $\nabla \log p_{{}^\circ t}$ almost everywhere on the support of the process in every appreciable time interval $[\tau,1]$.
\end{theorem}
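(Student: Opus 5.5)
The plan is the classical Vincent-type bias--variance decomposition of the denoising objective, carried out as exact internal hyperfinite algebra. The argument has three steps: an exact decomposition, a reduction to a Loeb-null set, and an identification of the grid score.

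\textbf{Step 1: exact decomposition.} For each fixed $(x_t,t)$ with $p_t(x_t)>0$, define the internal conditional-expectation field
\[
\bar s(x_t,t)=\sum_{x_0\in\mathbb G}\frac{q(x_0)P_{t|0}(x_t|x_0)}{p_t(x_t)}\,\Delta^+\log P_{t|0}(x_t|x_0),
\]
where $p_t(x_t)=\sum_{x_0}q(x_0)P_{t|0}(x_t|x_0)$ is the internal marginal. The sums are hyperfinite, so expanding the square and collecting the cross term gives the internal Pythagorean identity
\[
\mathcal I(\theta)=\tfrac12\sum_{t}\sum_{x_t}p_t(x_t)\,\|s_\theta(x_t,t)-\bar s(x_t,t)\|^2\,\delta t+\mathcal I_0,
\]
where $\mathcal I_0\ge0$ is the conditional variance term and does not depend on $\theta$. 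By transfer, no Leibniz rule or decay assumption is needed here.

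\textbf{Step 2: reduction to a Loeb-null set.} The first term is bounded by $\mathcal I(\theta)\approx0$, so it is infinitesimal. It is the internal integral of $\tfrac12\|s_\theta-\bar s\|^2$ against the internal measure $p_t(x)\,\delta t$ on $\mathbb G\times(\mathbb T_{\delta t}\cap[\tau,1])$. An internal nonnegative function with infinitesimal integral is infinitesimal off a set of Loeb measure zero; this follows from a Markov inequality applied with each standard $1/n$ and a union over $n$. Hence $s_\theta\approx\bar s$ Loeb-a.e. The Loeb measure here is the space--time occupation measure, so where $p_t(x)$ is appreciable this coincides with Loeb-a.e. in the statement.

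\textbf{Step 3: identify $\bar s$ with the grid score.} This is the main obstacle. The discrete log does not commute with mixtures: $\Delta_j^+\log P$ is a logarithmic difference quotient, not $\Delta_j^+P/P$. Write
\[
\Delta_j^+\log P=\frac{1}{\delta x}\log\Bigl(1+\delta x\frac{\Delta_j^+P}{P}\Bigr).
\]
For appreciable $t$, the transition kernel inherits $S$-smoothness, by the Aronson-type bounds in the remark after Theorem~\ref{thm:fp} or by the parabolic Schauder regularity used there. Then $\delta x\,\Delta_j^+P/P$ is infinitesimal on the bulk of the mass, so $\Delta_j^+\log P=\Delta_j^+P/P+O(\delta x)$. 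Consequently
\[
\bar s_j=\frac{\sum_{x_0}q(x_0)\Delta_j^+P_{t|0}(x_t|x_0)}{p_t(x_t)}+o(1)=\frac{\Delta_j^+p_t(x_t)}{p_t(x_t)}+o(1),
\]
using linearity of $\Delta_j^+$. Since $p_t(x_t)$ is appreciable and $S$-$C^1$, this is $\approx\Delta_j^+\log p_t(x_t)$, exactly as in the last step of the \emph{Aliter} proof of Theorem~\ref{thm:bmean}.

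The delicate point is uniformity. When $x_0$ is far from $x_t$, the ratio $\Delta^+P/P$ can be large, so the $O(\delta x)$ error must be controlled in $L^1(q\,P_{t|0})$ rather than pointwise. I would handle this with the Gaussian upper and lower bounds for $t\ge\tau$: they give $|\Delta^+\log P_{t|0}|\lesssim(1+\|x_t-x_0\|)/t$, and the Gaussian tails then make the weighted error infinitesimal.

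Finally, the standard-part statement follows from $S$-continuity of $s_\theta$ together with $\operatorname{st}(\Delta^+\log{}^*p_t)=\nabla\log p_{{}^\circ t}$, which holds because $p\in C^{1,2}$ by Theorem~\ref{thm:fp}. The Loeb-null exceptional set pushes forward under $\operatorname{st}$ to a Lebesgue-null set on the support of the process.
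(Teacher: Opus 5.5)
Your proposal follows essentially the paper's route: the internal Pythagorean identity with $s_{\mathrm{opt}}$ as the posterior average of $\Delta^+\log P_{t|0}$, then the passage from an infinitesimal $L^2(p_t\,\delta t)$ norm to Loeb-a.e.\ closeness, then the expansion $\Delta^+\log P_{t|0}=\mathcal D-\tfrac12\delta x\,\mathcal D^2+\cdots$ with $\mathcal D=\Delta^+P_{t|0}/P_{t|0}$ together with linearity of $\Delta^+$, giving $s_{\mathrm{opt}}\approx\Delta^+p_t/p_t\approx\Delta^+\log p_t$. Controlling the expansion error in $L^1$ of the posterior $q(x_0\mid x_t)$ rather than pointwise is more careful than the paper, which simply asserts that $\mathcal D$ is limited (false uniformly in $x_0$); note, however, that the bound $|\Delta^+\log P_{t|0}|\le C(1+\|x_t-x_0\|)/t$ does not follow from two-sided Gaussian bounds alone, since their exponents differ, and needs a separate logarithmic-gradient estimate for the grid kernel.
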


\begin{proof}
\textbf{1. Linearity of the Marginal Difference.} 
By the definition of the internal marginal density $p_t(x) = \sum_{x_0} q(x_0) P_{t|0}(x|x_0)$ and the linearity of the grid derivative $\Delta^+$, we have the exact identity:
\begin{equation}
\frac{\Delta^+ p_t(x)}{p_t(x)} = \sum_{x_0 \in \mathbb{G}} \left[ \frac{q(x_0) P_{t|0}(x|x_0)}{p_t(x)} \right] \frac{\Delta^+ P_{t|0}(x|x_0)}{P_{t|0}(x|x_0)}.
\label{eq:target-limited}
\end{equation}

\textbf{2. Optimal Score as Projection.}
The internal objective $\mathcal{I}(\theta)$ is minimized when $s_\theta$ is the orthogonal projection of the noisy score $\Delta^+ \log P_{t|0}$ onto the subspace of functions independent of $x_0$. Solving the first-order condition yields the unique internal minimizer:
\begin{equation}
s_{\text{opt}}(x, t) = \sum_{x_0 \in \mathbb{G}} \left[ \frac{q(x_0) P_{t|0}(x|x_0)}{p_t(x)} \right] \Delta^+ \log P_{t|0}(x|x_0).
\label{eq:learned-limited}
\end{equation}

\textbf{3. Equivalence for Appreciable $t$.}
Let $\mathcal{D} = (\Delta^+ P_{t|0})/P_{t|0}$. For appreciable $t$, uniform ellipticity together with the regularity assumptions on $b$ and $\sigma$ imply standard parabolic smoothing estimates for the forward transition kernel. Consequently, $P_{t|0}(x_t|x_0)$ is strictly positive and $S$-\(C^2\) on limited regions, and the discrete derivative \(\mathcal{D}\) is limited. Hence the internal Taylor expansion for $\log(1+\varepsilon)$ applies to $\varepsilon=\delta x\,\mathcal D$. By the definition of the grid derivative and the internal Taylor theorem:
\begin{equation*}
\Delta^+ \log P_{t|0} = \frac{\log(1 + \delta x \mathcal{D})}{\delta x} = \mathcal{D} - \frac{1}{2} \delta x \mathcal{D}^2 + \mathcal{O}(\delta x^2 \mathcal{D}^3).
\end{equation*}
Substituting this into \eqref{eq:learned-limited} and comparing with \eqref{eq:target-limited}:
\begin{equation}
s_{\text{opt}}(x, t) = \frac{\Delta^+ p_t(x)}{p_t(x)} - \sum_{x_0} q(x_0|x) \left[ \frac{1}{2} \delta x \mathcal{D}^2 + \mathcal{O}(\delta x^2 \mathcal{D}^3) \right].
\label{eq:s_opt_expanded}
\end{equation}
Since $\mathcal{D}$ is limited for appreciable $t$ and $\delta x \approx 0$, the second term in \eqref{eq:s_opt_expanded} is strictly infinitesimal. Furthermore, with appreciable \(p_t(x)\), $\frac{\Delta^+ p_t(x)}{p_t(x)} \approx \Delta^+ \log p_t(x)$ since $p_t$ is (at least) $S$-\(C^2\) for appreciable $t$. Therefore, we have the pointwise proximity:
\begin{equation*}
s_{\text{opt}}(x, t) \approx \Delta^+ \log p_t(x).
\end{equation*}

\textbf{4. Orthogonal Projection and Convergence.}
We define \(\mathbb{T}'_{\delta t}:=\mathbb{T}_{\delta t}\cap [\tau,1]\) and the hyperfinite Hilbert space $\mathcal{H} = L^2(\mathbb{G}^2 \times \mathbb{T}'_{\delta t}, \nu)$ equipped with the internal measure $\nu(x_0, x, t) = q(x_0) P_{t|0}(x|x_0) \delta t$. Let $\mathcal{U} \subset \mathcal{H}$ be the subspace of internal functions independent of the initial state $x_0$. By the internal projection theorem, the function $s_{\text{opt}}(x, t)$ defined in \eqref{eq:learned-limited} is the unique orthogonal projection of the noisy score $\Delta^+ \log P_{t|0}$ onto $\mathcal{U}$. Since $s_\theta \in \mathcal{U}$, the internal Pythagorean identity holds exactly:
\begin{equation}
\mathcal{I}(\theta) = \|s_\theta - s_{\text{opt}}\|_{\mathcal{H}}^2 + \|s_{\text{opt}} - \Delta^+ \log P_{t|0}\|_{\mathcal{H}}^2.
\end{equation}
Given the hypothesis $\mathcal{I}(\theta) \approx 0$, it follows from the non-negativity of the internal norms that $\|s_\theta - s_{\text{opt}}\|_{\mathcal{H}}^2 \approx 0$. By summing over $x_0$ and using the internal marginal density $p_t(x) = \sum_{x_0} q(x_0) P_{t|0}(x|x_0)$, we define the marginal internal measure $\mu(x, t) = p_t(x) \delta t$ and obtain:
\begin{equation}
\|s_\theta - s_{\text{opt}}\|_{L^2(\mu)}^2 = \sum_{t, x} p_t(x) |s_\theta(x, t) - s_{\text{opt}}(x, t)|^2 \delta t \approx 0.
\label{eq:l2s}
\end{equation}
Let $\mathcal{L}(\mu)$ be the Loeb measure constructed from $\mu$. By standard properties of hyperfinite \(L^{p}\) spaces (cf. \cite{loeb1979introduction}), an internal function with an infinitesimal $L^2$ norm has a standard part that vanishes $\mathcal{L}(\mu)$-almost everywhere. Thus, $s_\theta(x, t) \approx s_{\text{opt}}(x, t)$ for $\mathcal{L}(\mu)$-almost all $(x, t)$.

For all appreciable $t \in \mathbb{T}_{\delta t}$, Step 3 establishes the infinitesimal estimate $s_{\text{opt}}(x, t) \approx \Delta^+ \log p_t(x)$. 
By the properties of grid derivatives of liftings (Proposition \ref{prop:grid-derivative-consistency}), the standard part of the grid derivative coincides with the standard gradient: $\operatorname{st}(\Delta^+ \log p_t(x)) = \nabla \log p_{{}^\circ t}({}^\circ x)$. Combining these results, we have the Loeb-almost everywhere equality:
\begin{equation*}
\operatorname{st}(s_\theta(x, t)) = \nabla \log p_{{}^\circ t}({}^\circ x),
\end{equation*}
which completes the proof that the standard part of the learned score recovers the score of the continuous marginal distribution.
\end{proof}

\subsection{Hyperfinite sampling and generative dynamics}

With the internal score $s_\theta$ established as an infinitesimal approximation of the standard score on the appreciable time domain, we define the constructive procedure for generating a sample from the target distribution $q$. In the hyperfinite framework, this corresponds to an internal recursion starting from a state of maximal entropy (the terminal forward measure) and ``crystallizing" into a data point. \\Let $X_t$ satisfy the hyperfinite walk dynamics (Definition \ref{def:gd}) under the assumptions of Theorem \ref{thm:bmean}.

\begin{definition}[Hyperfinite Reverse Process on the Appreciable Domain]
\label{def:hr}
Let $\mathbb{G}$ be the hyperfinite grid and $\mathbb{T}_{\delta t}$ be the infinitesimal time discretization of $[0, 1]$ with diffusion scaling. For any standard $\epsilon \in \mathbb{R}^+$, let $\mathbb{T}_{\delta t}^\epsilon = \mathbb{T}_{\delta t} \cap [0, 1-\epsilon]$. The internal reverse sampling process $\{Y_s\}_{s \in \mathbb{T}_{\delta t}^\epsilon}$ is defined by the recursion:
\begin{equation*}
    Y_{s+\delta t} = Y_s + \hat{\beta}(s, Y_s) \delta t + {}^*\sigma(1-s, Y_s) \Delta \widetilde{W}_s,
\end{equation*}
where the initial state $Y_0$ is an internal random variable on $\mathbb{G}$ governed by the internal grid density $p_1$ inherited from the terminal state of the forward process at $t=1$. The terms $\Delta \widetilde{W}_s$ denote internal independent increments of a hyperfinite Brownian motion satisfying $\mathbb{E}_{\text{int}}[\Delta \widetilde{W}_s]=O(\delta t^{1+\alpha/2})$ and $\mathbb{E}_{\text{int}}[\Delta \widetilde{W}_s \Delta \widetilde{W}_s^\top] = I \delta t + O(\delta t^{1+\alpha/2})$. The internal reverse drift field $\hat{\beta}$ is evaluated at the reflected forward time $t=1-s \geq \epsilon$ and defined component-wise via the learned score field $s_\theta$ as:
\begin{equation*}
    \hat{\beta}_i(s, y) = -{}^*b_i(1-s, y) + \sum_{j=1}^d {}^*a_{ij}(1-s, y) s_{\theta, j}(y, 1-s) + \sum_{j=1}^d \Delta_j^+ {}^*a_{ij}(1-s, y).
\end{equation*}
\end{definition}

\begin{theorem}[Generative Convergence]
Assume the regularity conditions of Theorem \ref{thm:internal-score}. If $s_\theta$ is \(S\)-continuous on the support of the process and $\mathcal{I}(\theta) \approx 0$, then for any standard $\epsilon \in \mathbb{R}^+$, the standard-part process $\bar{Y}_s = \operatorname{st}(Y_s)$ is a well-defined continuous diffusion for $s \in [0, 1-\epsilon]$. Moreover, the standard terminal state extension defined by the standard limit 
\begin{equation*}
    \bar{Y}_1 := \lim_{\epsilon \to 0^+} \bar{Y}_{1-\epsilon}
\end{equation*}
exists almost surely, and its law is identical to the initial standard data distribution ${}^\circ q$.
\label{thm:gen-conv}
\end{theorem}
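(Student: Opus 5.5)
The plan is to compare the sampler of Definition \ref{def:hr} with the exact time-reversed walk $X_{1-s}$ of Theorem \ref{thm:rigorous-reverse-sde}, whose standard part solves the reverse It\^o SDE with drift $\hat b$ on $1-s\in[\epsilon,1]$. The argument then closes by taking $\epsilon\to 0$ on the standard side, using continuity of the forward diffusion at $t=0$.

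\textbf{Step 1: the learned drift matches the true drift.} By Theorem \ref{thm:internal-score}, $\mathcal I(\theta)\approx 0$ gives $s_\theta(x,t)\approx \Delta^+\log {}^*p_t(x)$ for Loeb-almost all $(x,t)$ with $t\ge\tau\approx\epsilon$. Hence $\hat\beta(s,y)\approx\beta(1-s,y)$ Loeb-a.e., and so $\operatorname{st}\hat\beta=\hat b$ a.e. Both fields are $S$-continuous: $s_\theta$ by hypothesis and $\beta$ by the $C^{1,2}$ regularity of $p$ on $[\epsilon,1]$. The a.e. equality therefore upgrades to $\hat\beta\approx\beta$ at every limited point where $p_{1-s}$ is appreciable. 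By uniform ellipticity and the Aronson-type bounds of the Remark after Theorem \ref{thm:fp}, $p_{1-s}$ is strictly positive on limited regions for $1-s\ge\epsilon$. I also need $\hat\beta$ to satisfy a linear-growth bound on limited sets so that $Y$ stays limited.

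\textbf{Step 2: identify the limit process.} I repeat the martingale argument of Theorem \ref{thm:rigorous-reverse-sde}, now for the sampler. For $\varphi\in C_c^3$, Lemma \ref{lem:exact-grid-moment}, applied with drift $\hat\beta$ and the stated moments of $\Delta\widetilde W_s$, makes ${}^*\varphi(Y_s)-\sum_{r<s}{}^*\mathcal L^{\mathrm{rev}}{}^*\varphi(Y_r)\,\delta t$ an internal martingale up to an infinitesimal error. $S$-continuity of $Y$ follows as in Proposition \ref{prop:existence-s-cts}, via a Gr\"onwall bound on second moments. Taking standard parts, $\bar Y$ solves the martingale problem for $(\hat b(1-s,\cdot),a(1-s,\cdot))$ on $[0,1-\epsilon]$, starting from $\operatorname{st}_*p_1$. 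That initial law coincides with the initial law of the true reversed process. Stroock--Varadhan well-posedness, already invoked in Theorem \ref{thm:rigorous-reverse-sde}, then shows that $\bar Y$ has the law of $\operatorname{st}(X_{1-s})$ on $[0,1-\epsilon]$; in particular $\mathrm{Law}(\bar Y_s)=\mathrm{Law}(\bar X_{1-s})$. I will choose the $\epsilon$'s consistently, using a nested sequence $\epsilon_n\downarrow 0$ and internal overspill to extend $Y$ to each $\mathbb T^{\epsilon_n}_{\delta t}$. This defines a single continuous process on $[0,1)$.

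\textbf{Step 3: the terminal limit.} Because $\bar Y$ agrees in law with the process $s\mapsto\bar X_{1-s}$, which is continuous on $[0,1]$ by Theorem \ref{thm:grid-to-ito}, the law of the whole path $(\bar Y_s)_{s<1}$ equals that of $(\bar X_{1-s})_{s<1}$. The event that the path has a limit as $s\uparrow 1$ is measurable with respect to this path law, and the $X$-side path does have such a limit almost surely. Hence $\bar Y_1=\lim_{\epsilon\to0}\bar Y_{1-\epsilon}$ exists a.s., with law equal to that of $\bar X_0=\operatorname{st}(X_0)$. That law is ${}^\circ q$, the pushforward of the internal $q$ under $\operatorname{st}$, assuming $q$ is $S$-tight so that no mass escapes to infinity.

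\textbf{Main obstacle.} Step 1 is the hardest part: passing from the Loeb-a.e. score proximity to drift proximity everywhere along the sampler's trajectories. The sampler's law is not a priori absolutely continuous with respect to $\mu=p_t\,\delta t$, so an a.e. statement alone does not suffice. The $S$-continuity hypothesis on $s_\theta$, combined with positivity of $p_t$ on limited sets, is exactly what bridges this gap. I would first prove that the null set permitted by Theorem \ref{thm:internal-score} contains no standard open set, and then use $S$-continuity of both sides to conclude.
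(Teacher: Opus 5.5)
Your proposal is correct and has the same skeleton as the paper's proof. You identify the drift through Theorem~\ref{thm:internal-score}, solve the martingale problem for $(\hat b,a)$ on $[0,1-\epsilon]$ with Stroock--Varadhan uniqueness against Theorem~\ref{thm:rigorous-reverse-sde}, and then let $\epsilon\to0$ using continuity of the forward diffusion at $t=0$. Where you depart from the paper, you are more careful, at two points.

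\textbf{Drift identification.} The paper goes straight from the $\mathcal{L}(\mu)$-a.e.\ proximity $\hat\beta\approx\beta$ to the claim that the sampler's Loeb law solves the martingale problem. That step ignores exactly the absolute-continuity issue you raise. Your upgrade is what actually puts the $S$-continuity hypothesis to work. If $s_\theta-\Delta^+\log p_t$ were not infinitesimal at some limited point, the internal set where $|s_\theta-\Delta^+\log p_t|>c/2$ would contain that point's monad. By overspill it would then contain a standard neighbourhood, which has positive $\mathcal{L}(\mu)$-measure because $p_t$ is bounded below on standard balls for $t\ge\epsilon$.

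\textbf{Terminal point.} The paper takes the existence of $\bar Y_1$ as given ``by the formal definition''. It then identifies the law by dominated convergence applied to $\mathbb{E}[\phi(\bar Y_{1-\epsilon})]=\mathbb{E}[\phi(\operatorname{st}X_\epsilon)]$. You instead transfer the Borel event $\{\lim_{s\uparrow1}\bar Y_s \text{ exists}\}$ through the equality of path laws on $C([0,1))$. This proves the almost-sure existence asserted in the statement and gives the law of the limit at once. Your $S$-tightness caveat on $q$ also makes explicit what the paper uses tacitly when it writes $\operatorname{st}(X_0)\sim{}^\circ q$.

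\textbf{One adjustment: the growth bound.} The linear-growth bound on $\hat\beta$ that you say you need does not follow from the hypotheses. Limitedness and $S$-continuity of $s_\theta$ give no control at infinite points, and $a\,s_\theta$ need not grow linearly under Assumption~\ref{assump:coeff}. The paper's ``it is easy to check'' does not supply it either, but you do not need it. On each standard ball, the internal maximum of $|\hat\beta|$ over the hyperfinitely many grid points is attained at a limited point, so it is limited. Run the $S$-continuity and martingale-problem arguments for $Y$ stopped at the exit time of that ball. Well-posedness of the stopped martingale problem, together with non-explosion of the true reversed diffusion, then shows that $Y$ stays limited Loeb-a.s.\ on $\mathbb{T}^\epsilon_{\delta t}$.
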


\begin{proof}
\textbf{1. Appreciable Pathwise Identification.} Let $\epsilon \in \mathbb{R}^+$ be any strictly positive standard real number, and restrict the reverse process to the compact interval $s \in \mathbb{T}_{\delta t}^\epsilon$ as specified in Definition \ref{def:hr}. For all such $s$, the forward time parameter $t = 1-s \geq \epsilon$ is strictly appreciable. By the assumption $\mathcal{I}(\theta) \approx 0$, Theorem \ref{thm:internal-score} applies pointwise $\mathcal{L}(\mu)$-almost everywhere on this domain, yielding $s_\theta(y, 1-s) \approx \Delta^+ \log p_{1-s}(y)$. Substituting this identity into the expression for $\hat{\beta}$ shows that $\hat{\beta}(s, y) \approx \beta(1-s, y)$ holds for $\mathcal{L}(\mu)$-almost every $(y,s)\in {}^*\mathbb{R}_{\mathrm{fin}}^d \times \mathbb{T}_{\delta t}^\epsilon$, where $\beta$ is the exact theoretical backward drift from Theorem \ref{thm:bmean}.

\textbf{2. Martingale Approximation on Compact Subsets.} From the definitions and regularity assumptions, it is easy to check that the paths of \(Y_s\) are $S$-continuous almost surely on $s \in \mathbb{T}_{\delta t}^\epsilon$, making the standard-part process $\bar{Y}_s = \operatorname{st}(Y_s)$ a well-defined, sample-continuous process on this compact interval. 
Since $\hat{\beta}(s, y)$ is infinitesimally close to the true drift $\beta(1-s, y)$ $\mathcal{L}(\mu)$-almost everywhere, the Loeb law induced by $\bar{Y}_s$ solves the standard martingale problem dictated by the time-reflected coefficients $(\hat{b}, a)$. For any test function $f \in C_c^3(\mathbb{R}^d)$, the process
\begin{equation*}
    \bar{M}_s = f(\bar{Y}_s) - f(\bar{Y}_0) - \int_0^s \mathcal{L}^{\mathrm{rev}}_{1-u} f(\bar{Y}_u) du
\end{equation*}
is a Loeb-a.s. continuous standard martingale over the interval $s \in [0, 1-\epsilon]$. By the pathwise uniqueness of the solution to the standard martingale problem \cite{stroock2007multidimensional}, the law of $\bar{Y}_s$ on the space $C([0, 1-\epsilon], \mathbb{R}^d)$ coincides exactly with the law of the time-reversed forward process $\operatorname{st}(X_{1-s})$.

\textbf{3. Weak Convergence at the Boundary.} 
To extend the distributional identity to the terminal boundary $s = 1$, let $\phi \in C_b(\mathbb{R}^d)$ be any standard bounded continuous test function. By the formal definition of the terminal state extension, $\bar{Y}_1 = \lim_{\epsilon \to 0^+} \bar{Y}_{1-\epsilon}$ holds pathwise almost surely. Because $\phi$ is a continuous mapping, this implies the almost sure convergence of random variables: $\lim_{\epsilon \to 0^+} \phi(\bar{Y}_{1-\epsilon}) = \phi(\bar{Y}_1)$. By the Lebesgue dominated convergence theorem, this yields the weak convergence:
\begin{equation*}
\mathbb{E}[\phi(\bar{Y}_1)] = \lim_{\epsilon \to 0^+} \mathbb{E}[\phi(\bar{Y}_{1-\epsilon})].
\end{equation*}
From Step 2, for any fixed standard $\epsilon > 0$, the law of $\bar{Y}_{1-\epsilon}$ matches the law of $\operatorname{st}(X_\epsilon)$, meaning $\mathbb{E}[\phi(\bar{Y}_{1-\epsilon})] = \mathbb{E}[\phi(\operatorname{st}(X_\epsilon))]$. Finally, the standard forward diffusion $\operatorname{st}(X_t)$ is sample-continuous at its initial boundary $t=0$ with $\operatorname{st}(X_0) \sim {}^\circ q$. Applying the standard dominated convergence theorem to the forward process yields:
\begin{equation*}
\mathbb{E}[\phi(\bar{Y}_1)] = \lim_{\epsilon \to 0^+} \mathbb{E}[\phi(\operatorname{st}(X_\epsilon))] = \mathbb{E}[\phi(\operatorname{st}(X_0))] = \int_{\mathbb{R}^d} \phi(y) \, d({}^\circ q)(y).
\end{equation*}
Since this equality holds for all standard $\phi \in C_b(\mathbb{R}^d)$, the uniqueness of weak limits guarantees that the law of $\bar{Y}_1$ is exactly the initial standard data distribution ${}^\circ q$.
\end{proof}

\begin{remark}
It is standard in the diffusion models literature to suppose that the forward process $X_t$ is constructed such that the terminal density $p_1$ is $S$-equivalent (see Appendix \ref{app:se}) to the standard Gaussian density $\varphi(\cdot)$ in the $L^1(\mathbb{G})$ sense. This assumption, denoted by $p_1 \approx {}^*\varphi$, is satisfied by standard noise schedules (cf.~\cite{song2021}), where the forward drift induces sufficient contraction to reach the stationary measure. In the hyperfinite setting, this ensures that the forgetting of the initial state data $X_0$ is achieved up to an infinitesimal error at $t=1$, a result consistent with the convergence of internal heat kernels on hyperfinite grids~\cite{albeverio2009nonstandard}. Furthermore, because $\mathcal{I}(\theta) \approx 0$, the $L^2$ error of the score estimator is infinitesimal on the appreciable time domain (see \eqref{eq:l2s}), ensuring the generated paths do not deviate macroscopically from the theoretical time-reversed paths.
\end{remark}

The generative convergence established in Theorem \ref{thm:gen-conv} provides a rigorous hyperfinite justification for the sampling procedures central to modern score-based generative modeling~\cite{song2021}. While the standard-analysis approach typically relies on the stability theory of SDEs and the Stroock-Varadhan martingale problem~\cite{stroock2007multidimensional} to prove that a learned score-drift yields the correct terminal distribution, our proof utilizes the discrete pathwise structure of the hyperfinite grid to simplify the transition. By leveraging the internal score equivalence (Theorem \ref{thm:internal-score}) on the appreciable time domain alongside the $S$-continuity of the hyperfinite walk, we demonstrate that an infinitesimal optimization error $\mathcal{I}(\theta) \approx 0$ implies that the standard-part trajectories of the sampling recursion coincide for Loeb-almost all sample paths with the theoretical time-reversal derived in Section \ref{sec:hyperfinite-derivations-rigorous}. This methodology follows the lifting principles for hyperfinite martingales established by Lindstr\o m~\cite{lindstrom1980hyperfinite1,lindstrom1980hyperfinite3} and the adapted distribution theory of Hoover and Keisler~\cite{hoover1984adapted}, confirming that the ``crystallization" of noise into data is a structural consequence of the hyperfinite walk's internal algebra.

\section{Hyperfinite Girsanov theorem and maximum likelihood}
\label{sec:gir}
In the hyperfinite framework, the relationship between score-matching and log-likelihood is established by analyzing the Radon-Nikodym derivative of internal measures on the grid \(\mathbb{G}\). Unlike the continuum case, we do not require the noise increments to be strictly Gaussian, provided they satisfy the standard hyperfinite grid moment conditions. We specialize here to a state-independent, constant diffusion matrix \(\sigma\) such that \(a = \sigma \sigma^\top \succ 0\).\\

\begin{lemma}[Internal Girsanov Identity for Hyperfinite Walks]
\label{lemma:girsanov_grid_rigorous}
Let $\mathbb{G}$ be a hyperfinite grid with spacing $\delta x = C \sqrt{\delta t}$. Let $\mathbb{P}$ be the internal path measure of a hyperfinite walk (cf. Definition \ref{def:gd}) on the space $\Omega = \mathbb{G}^{\mathbb{T}_{\delta t}}$ corresponding to the internal drift coefficient $b(t,x)$. Let $\mathbb{Q}$ be the internal path measure constructed via the local exponential tilt 
\begin{equation}
q_{v}(\omega_t, t) = p_{v}(\omega_t, t) \frac{\exp\left(s_\theta(\omega_t, t)^\top (v - b(t,X_t)\delta t)\right)}{Z_t(\omega_t)},
\label{eq:exp_tilt_def}
\end{equation}
where $v = \Delta X_t = X_{t+\delta t} - X_t \in \mathcal{V}_x$ given \(X_t=x\). Letting $\Delta W_t = v - b(t,X_t) \delta t$, assume that $s_\theta$ is a limited internal function on the support of the process.
   
Then the internal log-density of the path measures satisfies:
\begin{equation}
\log \frac{d\mathbb{Q}}{d\mathbb{P}}(\omega)
=
\sum_{t \in \mathbb{T}_{\delta t}\setminus\{1\}}
\left[
s_\theta(X_t,t)^\top \Delta W_t
-
\frac12 s_\theta(X_t,t)^\top a s_\theta(X_t,t)\delta t
\right]
+
\mathcal{E}(\omega),
\label{eq:global_radon_nikodym}
\end{equation}
where $\mathcal{E}(\omega)\approx0$ uniformly for all paths $\omega$ whose states remain limited.

Furthermore, if the local transition ratios take the exponential form in \eqref{eq:exp_tilt_def}, the induced local drift under $\mathbb{Q}$ satisfies
\[
\frac{\mathbb{E}_{\mathbb{Q}}[\Delta X_t\mid X_t]}{\delta t}
=
b + a s_\theta+O(\delta t^{1/2}).
\]
Conversely, among all internal transition laws satisfying this drift constraint, the unique minimizer of the local internal relative entropy
\[
D_{\mathrm{KL}}(q\parallel p)
=
\sum_{v\in\mathcal V_x}
q_v \log\!\left(\frac{q_v}{p_v}\right)
\]
has transition ratios of the exponential form \eqref{eq:exp_tilt_def} up to an $S$-equivalent factor.
\end{lemma}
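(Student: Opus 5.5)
The plan has three parts: a second-order expansion of the normalizer $Z_t$, summation along the path, and a Lagrange-multiplier argument for the converse. Because both $\mathbb{P}$ and $\mathbb{Q}$ are products of local transition kernels started from the same initial law, the Radon--Nikodym derivative factorizes exactly along the path. This gives the internal identity
\[
\log\frac{d\mathbb{Q}}{d\mathbb{P}}(\omega)=\sum_{t\in\mathbb{T}_{\delta t}\setminus\{1\}}\Bigl[s_\theta(X_t,t)^\top\Delta W_t-\log Z_t(X_t)\Bigr],
\]
with no approximation yet. Everything then reduces to a uniform local estimate of $\log Z_t(x)$, namely
\[
\log Z_t(x)=\tfrac12 s_\theta^\top a\, s_\theta\,\delta t+r_t(x),\qquad |r_t(x)|\le K\delta t^{3/2},
\]
with $K$ limited whenever $x$ and $s_\theta(x,t)$ are limited.

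To obtain this bound, write $u=s_\theta^\top(v-b\delta t)$. On $\mathcal V_x$ we have $\|v\|\le C'\sqrt{\delta t}$, so $|u|=O(\sqrt{\delta t})$ uniformly, using that $s_\theta$ is limited on the support and $b$ is limited on limited states. Expand $e^u=1+u+\tfrac12u^2+O(|u|^3)$ and take $\sum_v p_v(\cdot)$, using the moment conditions of Definition~\ref{def:gd}:
\begin{itemize}
\item the first moment gives $\sum_v p_v u=s_\theta^\top(b\delta t-b\delta t)=0$ exactly;
\item the second moment gives $\sum_v p_v u^2=s_\theta^\top a s_\theta\,\delta t+O(\delta t^{3/2})$, since $b b^\top\delta t^2$ is absorbed.
\end{itemize}
Hence $Z_t=1+\tfrac12 s_\theta^\top a s_\theta\delta t+O(\delta t^{3/2})$. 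Applying the internal Taylor bound for $\log(1+\varepsilon)$ with $\varepsilon=O(\delta t)$ yields the claimed estimate for $\log Z_t$.

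Summing over the $N=1/\delta t$ steps gives $\mathcal{E}(\omega)=-\sum_t r_t(X_t)$, so $|\mathcal E|\le \sup_t K_t\,\delta t^{1/2}\approx0$. This is the main obstacle: the bound must hold uniformly in $t$ along the path. I would handle it as follows:
\begin{itemize}
\item fix a limited bound $R$ on $\|X_t\|$ along the path;
\item take the internal supremum of the constants over the internal set $\{\|x\|\le R\}\times\mathbb{T}_{\delta t}$;
\item this supremum is limited by transfer of the linear growth bound, the limitedness of $s_\theta$ on the support, and the fact that $C'$ is limited.
\end{itemize}
A Loeb-a.s. version then follows from Proposition~\ref{prop:existence-s-cts}, which makes paths limited almost surely.

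For the drift statement, compute $\sum_v v\,q_v=Z_t^{-1}\sum_v v\,p_v(1+u+O(\delta t))$. The zeroth-order term gives $b\delta t$. The linear term gives $\sum_v v v^\top p_v\, s_\theta-b\,\delta t\, b^\top s_\theta\delta t=a s_\theta\delta t+O(\delta t^{3/2})$. The remainder is $\sum_v \|v\|\,O(\delta t)p_v=O(\delta t^{3/2})$. Since $Z_t^{-1}=1+O(\delta t)$, dividing by $\delta t$ gives $b+as_\theta+O(\delta t^{1/2})$.

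For the converse, minimize $\sum_v q_v\log(q_v/p_v)$ over the hyperfinite simplex on $\mathcal V_x$, subject to $\sum_v q_v=1$ and $\sum_v vq_v=(b+as_\theta)\delta t+O(\delta t^{3/2})$. This is a finite-dimensional strictly convex problem, so internal Lagrange multipliers (by transfer) give the unique minimizer $q_v=p_v e^{\lambda^\top(v-b\delta t)}/Z(\lambda)$. Matching the drift constraint via the computation above gives $a\lambda\,\delta t=a s_\theta\,\delta t+O(\delta t^{3/2})$. Uniform ellipticity makes $a$ invertible with limited inverse, so $\lambda=s_\theta+O(\delta t^{1/2})$. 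Then $e^{(\lambda-s_\theta)^\top(v-b\delta t)}=1+O(\delta t)$, which is exactly the $S$-equivalent factor allowed in the statement.
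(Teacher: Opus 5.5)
Your proposal follows the paper's proof essentially step for step: exact factorization of $d\mathbb{Q}/d\mathbb{P}$ into local ratios, the moment-based expansion $Z_t=1+\tfrac12 s_\theta^\top a s_\theta\,\delta t+O(\delta t^{3/2})$ followed by the $\log(1+\varepsilon)$ bound, summation over the $1/\delta t$ steps, a direct Taylor expansion for the tilted drift, and Lagrange multipliers plus ellipticity for the converse. Your internal-supremum argument over the hyperfinite set of grid points in a limited ball is a useful explicit treatment of the uniformity that the paper only asserts. One step is left tacit, both in your version and in the paper's: reusing the Taylor expansion at the multiplier $\lambda$ requires $\lambda$ to be limited. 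This (and existence of the minimizer) follows from convexity of $\lambda\mapsto\log Z(\lambda)-\lambda^\top(m-b\,\delta t)$, where $m$ is the prescribed drift, because its value on the sphere $\|\lambda-s_\theta\|=1$ exceeds its value at $s_\theta$ by an appreciable multiple of $\delta t$.
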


\begin{proof}
By the internal multiplicative structure of hyperfinite path measures, the global Radon-Nikodym derivative for an internal sample path $\omega \in \Omega$ is given by the hyperfinite product of the local transition ratios:
\begin{equation*}
\frac{d\mathbb{Q}}{d\mathbb{P}}(\omega)
=
\prod_{t \in \mathbb{T}_{\delta t}\setminus\{1\}}
\frac{q_{\Delta X_t}(\omega_t,t)}
{p_{\Delta X_t}(\omega_t,t)}.
\end{equation*}
Taking the internal logarithm converts the product into a hyperfinite sum:
\begin{equation}
\log \frac{d\mathbb{Q}}{d\mathbb{P}}(\omega)
=
\sum_{t \in \mathbb{T}_{\delta t}\setminus\{1\}}
\log\!\left(
\frac{q_{\Delta X_t}(\omega_t,t)}
{p_{\Delta X_t}(\omega_t,t)}
\right).
\label{eq:sum_log_ratio}
\end{equation}

\noindent
\textbf{1. Expansion of the Local Partition Function:}

From the definition of the exponential tilt in \eqref{eq:exp_tilt_def}, the local normalization factor is
\begin{equation*}
Z_t
=
\sum_{v\in\mathcal V_x}
p_v
\exp\!\left(
s_\theta^\top(v-b\delta t)
\right)
=
\mathbb E_{\mathbb P}
\left[
\exp(s_\theta^\top \Delta W_t)
\mid X_t
\right].
\end{equation*}
Since $\Delta W_t= O(\sqrt{\delta t})$ uniformly on limited states and $s_\theta$ is limited,
\(
s_\theta^\top \Delta W_t
=
 O(\sqrt{\delta t}).
\)
Applying the internal Taylor expansion with remainder gives
\begin{equation*}
\exp(s_\theta^\top \Delta W_t)
=
1+s_\theta^\top \Delta W_t
+\frac12 (s_\theta^\top \Delta W_t)^2
+\frac16 (s_\theta^\top \Delta W_t)^3 e^{\xi_t},
\end{equation*}
where $\xi_t$ lies between $0$ and $s_\theta^\top \Delta W_t$.

Taking conditional expectation under $\mathbb P$:
\begin{align*}
Z_t
&=
1
+
s_\theta^\top
\mathbb E_{\mathbb P}[\Delta W_t\mid X_t]
+
\frac12
s_\theta^\top
\mathbb E_{\mathbb P}
[\Delta W_t\Delta W_t^\top\mid X_t]
s_\theta
+
\mathcal R_{t,3}.
\end{align*}

The remainder term satisfies
\(
\mathcal R_{t,3}
=
\frac16
\mathbb E_{\mathbb P}
\left[
(s_\theta^\top\Delta W_t)^3 e^{\xi_t}
\mid X_t
\right].
\)
Because $s_\theta$ is limited and the third moments are uniformly bounded by $O(\delta t^{3/2})$, uniformly over limited states,
\(
\mathcal R_{t,3}
=
O(\delta t^{3/2}).
\)

Using the moment assumptions,
\[
\mathbb E_{\mathbb P}[\Delta W_t\mid X_t]=0,
\]
and
\[
\mathbb E_{\mathbb P}
[\Delta W_t\Delta W_t^\top\mid X_t]
=
a\delta t+O(\delta t^{3/2}),
\]
we obtain
\begin{align*}
Z_t
=
1
+
\frac12
s_\theta(X_t,t)^\top
a
s_\theta(X_t,t)\delta t
+
O(\delta t^{3/2}).
\end{align*}

Applying the logarithmic expansion
\[
\log(1+x)=x-\frac12x^2+O(x^3),
\]
with
\[
x
=
\frac12 s_\theta^\top a s_\theta\delta t
+
O(\delta t^{3/2}),
\]
gives
\begin{equation*}
\log Z_t
=
\frac12
s_\theta(X_t,t)^\top
a
s_\theta(X_t,t)\delta t
+
O(\delta t^{3/2}).
\end{equation*}

\noindent
\textbf{2. Pathwise Summation:}

Substituting \eqref{eq:exp_tilt_def} into \eqref{eq:sum_log_ratio} yields
\begin{equation*}
\log
\left(
\frac{q_{\Delta X_t}(\omega_t,t)}
{p_{\Delta X_t}(\omega_t,t)}
\right)
=
s_\theta(X_t,t)^\top \Delta W_t
-
\log Z_t.
\end{equation*}

Using the expansion for $\log Z_t$,
\begin{equation*}
\log
\left(
\frac{q_{\Delta X_t}(\omega_t,t)}
{p_{\Delta X_t}(\omega_t,t)}
\right)
=
s_\theta(X_t,t)^\top \Delta W_t
-
\frac12
s_\theta(X_t,t)^\top
a
s_\theta(X_t,t)\delta t
+
\eta_t,
\end{equation*}
where
\(
\eta_t=O(\delta t^{3/2}),
\)
uniformly over limited states.

Summing over the hyperfinite timeline:
\begin{equation*}
\log \frac{d\mathbb Q}{d\mathbb P}(\omega)
=
\sum_{t\in\mathbb T_{\delta t}\setminus\{1\}}
\left[
s_\theta(X_t,t)^\top \Delta W_t
-
\frac12
s_\theta(X_t,t)^\top
a
s_\theta(X_t,t)\delta t
\right]
+
\mathcal E(\omega),
\end{equation*}
where
\(
\mathcal E(\omega)
=
\sum_{t\in\mathbb T_{\delta t}}
\eta_t.
\)
Also,
\(
\mathcal E(\omega)
=
O(\delta t^{1/2})
\approx 0.
\)
This proves \eqref{eq:global_radon_nikodym}.

\bigskip

\noindent
\textbf{3. Verification of the Induced Local Drift (Forward Direction):}

By definition,
\[
\Delta X_t
=
b(t,X_t)\delta t+\Delta W_t.
\]
Taking conditional expectation under $\mathbb Q$:
\begin{equation}
\mathbb E_{\mathbb Q}[\Delta X_t\mid X_t]
=
b(t,X_t)\delta t
+
\mathbb E_{\mathbb Q}[\Delta W_t\mid X_t].
\label{eq:drift_q_decomp}
\end{equation}
Using the Radon-Nikodym ratio,
\begin{equation}
\mathbb E_{\mathbb Q}[\Delta W_t\mid X_t]
=
\frac1{Z_t}
\mathbb E_{\mathbb P}
\left[
\Delta W_t
\exp(s_\theta^\top\Delta W_t)
\mid X_t
\right].
\label{eq:exp_w_q}
\end{equation}
Expanding the exponential:
\begin{align*}
\Delta W_t e^{s_\theta^\top\Delta W_t}
&=
\Delta W_t
\left(
1+s_\theta^\top\Delta W_t
+\frac12(s_\theta^\top\Delta W_t)^2
+O(\delta t^{3/2})
\right) \\
&=
\Delta W_t
+
\Delta W_t\Delta W_t^\top s_\theta
+
O(\delta t^{3/2}).
\end{align*}
Taking conditional expectation:
\begin{align*}
\mathbb E_{\mathbb P}
\left[
\Delta W_t e^{s_\theta^\top\Delta W_t}
\mid X_t
\right]
&=
\mathbb E_{\mathbb P}[\Delta W_t\mid X_t]
+
\mathbb E_{\mathbb P}
[\Delta W_t\Delta W_t^\top\mid X_t]
s_\theta
+
O(\delta t^{3/2}) \\
&=
a s_\theta\delta t
+
O(\delta t^{3/2}).
\end{align*}
Substituting into \eqref{eq:exp_w_q},
\begin{equation*}
\mathbb E_{\mathbb Q}[\Delta W_t\mid X_t]
=
\frac{
a s_\theta\delta t+O(\delta t^{3/2})
}{
1+\frac12 s_\theta^\top a s_\theta\delta t+O(\delta t^{3/2})
}.
\end{equation*}

Using
\[
(1+x)^{-1}
=
1-x+O(x^2),
\]
with $x=O(\delta t)$:
\begin{align*}
\mathbb E_{\mathbb Q}[\Delta W_t\mid X_t]
&=
\left(
a s_\theta\delta t+O(\delta t^{3/2})
\right)
\left(
1-\frac12 s_\theta^\top a s_\theta\delta t
+O(\delta t^{3/2})
\right) \\
&=
a s_\theta\delta t+O(\delta t^{3/2}).
\end{align*}
Substituting back into \eqref{eq:drift_q_decomp},
\(
\mathbb E_{\mathbb Q}[\Delta X_t\mid X_t]
=
(b+a s_\theta)\delta t
+
O(\delta t^{3/2}).
\)
Dividing by $\delta t$:
\[
\frac{
\mathbb E_{\mathbb Q}[\Delta X_t\mid X_t]
}{\delta t}
=
b+a s_\theta+O(\delta t^{1/2})
\approx
b+a s_\theta.
\]

\noindent
\textbf{4. Converse Direction and Uniqueness:}

Suppose an internal transition law $q$ satisfies
\[
\frac{
\mathbb E_{\mathbb Q}[\Delta X_t\mid X_t]
}{\delta t}
=
b+a s_\theta+O(\delta t^{1/2}).
\]
Consider the constrained optimization problem
\begin{equation}
\min_q
D_{\mathrm{KL}}(q\parallel p)
=
\min_q
\sum_{v\in\mathcal V_x}
q_v\log\!\left(\frac{q_v}{p_v}\right),
\label{eq:internal_kl_min}
\end{equation}
subject to
\[
\sum_v q_v=1
\text{ and }
\sum_v v q_v
=
(b+a s_\theta)\delta t
+
O(\delta t^{3/2}).
\]
The functional
\(
q\mapsto
\sum_v q_v\log(q_v/p_v)
\)
is strictly convex on the probability simplex, while the constraints are affine. Hence the constrained minimizer is unique.

Introduce Lagrange multipliers $\lambda_0\in{}^*\mathbb R$ and $\lambda\in{}^*\mathbb R^d$:
\begin{align*}
\mathcal L(q,\lambda_0,\lambda)
&=
\sum_v
q_v\log\!\left(\frac{q_v}{p_v}\right)
-
\lambda_0\left(\sum_v q_v-1\right) \\
&\quad
-
\lambda^\top
\left(
\sum_v v q_v
-
\mathbb E_{\mathbb Q}[\Delta X_t\mid X_t]
\right).
\end{align*}
Setting $\partial\mathcal L/\partial q_v=0$ gives
\[
\log\!\left(\frac{q_v}{p_v}\right)
+1-\lambda_0-\lambda^\top v
=0,
\]
hence
\[
q_v
=
p_v
\frac{\exp(\lambda^\top v)}
{\tilde Z_t},
\]
where
\(
\tilde Z_t
=
\exp(1-\lambda_0).
\)

Writing
\(
v=b\delta t+\Delta W_t,
\)
and cancelling the deterministic factor $\exp(\lambda^\top b\delta t)$,
\[
q_v
=
p_v
\frac{\exp(\lambda^\top\Delta W_t)}
{Z_t},
\]
where
\[
Z_t
=
\mathbb E_{\mathbb P}
[\exp(\lambda^\top\Delta W_t)\mid X_t].
\]
Repeating the same Taylor expansion argument as above yields
\[
\mathbb E_{\mathbb Q}[\Delta W_t\mid X_t]
=
a\lambda\delta t
+
O(\delta t^{3/2}).
\]
Since
\[
\mathbb E_{\mathbb Q}[\Delta X_t\mid X_t]
=
b\delta t
+
\mathbb E_{\mathbb Q}[\Delta W_t\mid X_t],
\]
matching the target drift gives
\[
a s_\theta\delta t
+
O(\delta t^{3/2})
=
a\lambda\delta t
+
O(\delta t^{3/2}).
\]
Because $a$ is uniformly nondegenerate,
\(
\lambda
=
s_\theta
+
O(\delta t^{1/2}).
\)
Substituting this back into the transition ratio:
\begin{align*}
\frac{q_v}{p_v}
&=
\frac{
\exp\!\left(
(s_\theta+O(\delta t^{1/2}))^\top\Delta W_t
\right)
}{Z_t} \\
&=
\frac{
\exp(s_\theta^\top\Delta W_t)
}{Z_t}
\exp(O(\delta t)).
\end{align*}
Since
\(
\exp(O(\delta t))
=
1+O(\delta t)
\approx1,
\)
we conclude
\[
\frac{q_v}{p_v}
\approx
\frac{
\exp(s_\theta^\top\Delta W_t)
}{Z_t}.
\]

This establishes the converse characterization and completes the proof.
\end{proof}

\begin{remark}
To identify a canonical change of measure on the hyperfinite grid, we restrict attention to internal transition kernels that preserve the local second-moment structure of the reference dynamics. More precisely, if the reference process $\mathbb{P}$ satisfies the infinitesimal moment condition
\(
\mathbb{E}_{\mathbb{P}}[\Delta X_t \Delta X_t^\top \mid X_t]
=
a\,\delta t + O(\delta t^{3/2}),
\)
then any admissible perturbed transition kernel defining an internal path measure $\mathbb{Q}$ must satisfy this same scaling. This invariance ensures that the quadratic variation of the induced continuous-time Loeb diffusion remains unaltered under the change of measure.\\
The first-order drift constraint alone does not uniquely determine the local transition probabilities; indefinitely many internal perturbations can yield the target drift while differing at higher orders. Such arbitrary variations alter the higher conditional moments of the grid increments, potentially destabilizing the fine-scale sample path behavior or violating the nonstandard CFL condition. \\
To isolate a distinguished, minimal perturbation, we impose the internal minimum relative entropy condition
\(
\min_{q} D_{\mathrm{KL}}(q \parallel p) = \min_{q} \sum_{v \in \mathcal{V}_x} q_v \log \frac{q_v}{p_v},
\)
subject to the normalization and target drift constraints. The corresponding Euler--Lagrange equations uniquely yield the exponential tilt
\[
q_v = p_v \frac{\exp\left(s_\theta(X_t, t)^\top (v - b(t,X_t)\delta t)\right)}{Z_t},
\]
which serves as the unique information-theoretic minimizer preserving positivity and internal absolute continuity with respect to the reference walk.\\
This optimization framework establishes that the exponential tilt is not an arbitrary modeling heuristic, but rather the unique configuration required for a rigorous hyperfinite Girsanov theorem. Because the logarithm of the resulting partition function admits the pathwise expansion
\(
\log Z_t = \frac12 s_\theta^\top a s_\theta \,\delta t + O(\delta t^{3/2}),
\)
it naturally generates the quadratic compensation term in the global Radon--Nikodym derivative upon hyperfinite summation. Consequently, the exponential tilt is uniquely compatible with the prescribed drift and the underlying geometry of the hyperfinite grid.
\end{remark}

The hyperfinite Girsanov identity established in Lemma \ref{lemma:girsanov_grid_rigorous} shows that the Radon--Nikodym derivative between internal path measures arises directly from the transition algebra of the hyperfinite grid. Unlike the classical continuous-time proof of Girsanov's theorem, which is formulated through stochastic integration and martingale criteria such as the Novikov condition, the hyperfinite construction proceeds entirely through finite-step transition ratios and hyperfinite summation. As has been the case throughout, our internal approach aligns with the foundational frameworks of hyperfinite diffusions (cf. \cite{keisler1984infinitesimal}).

More precisely, the change of measure is implemented locally through the exponential tilt of the reference transition probabilities, and the logarithm of the resulting density ratio is expanded using the internal Taylor expansion of the partition function $Z_t$. The accumulated pathwise exponent is obtained by summing over the hyperfinite timeline, and the remainder is infinitesimal under our assumptions. This structural decomposition mirrors applications of nonstandard measure theory to stochastic control \cite{albeverio2009nonstandard}. An internal perspective has also been leveraged by Cutland \cite{cutland1987infinitesimals} and Osswald \cite{osswald2009anticipative} to extend Girsanov transformations to anticipative settings. In our case, the internal logarithmic density ratio provides a rigorous hyperfinite representation of the continuous-time drift-energy functional associated with score-based diffusion dynamics.

\begin{theorem}[Hyperfinite Surrogate Likelihood Decomposition]
Let $\mathbb{G}\subset {}^*\mathbb{R}^d$ be a hyperfinite spatial lattice and consider the grid dynamics in Definition \ref{def:gd} with constant symmetric positive-definite diffusion tensor
\(
a=\sigma\sigma^\top \succ 0.
\)
Let $\mathbb{P}^*$ denote the internal path measure of the true backward diffusion process on the hyperfinite timeline
\(\mathbb{T}_{\delta t}\),
initialized at time $t=1$ with terminal forward distribution $p_1$, and governed by backward drift
\[
\bar b_{\mathrm{fwd}}(t,x)
=
-b(t,x)+a\,s_{\mathrm{opt}}(x,t).
\]
Let $\mathbb{Q}$ denote the parameterized backward generative path measure initialized with the same terminal distribution $p_1$, with drift
\[
\bar b_\theta(t,x)
=
-b(t,x)+a\,s_\theta(x,t)
\]
inducing the distribution \(q_\theta\) at \(t=0\), where both transition laws minimise the corresponding relative entropy as in Lemma \ref{lemma:girsanov_grid_rigorous}, and $\bar b_\theta, \bar b_{\mathrm{fwd}}$ denote the specializations of the drift from Definition~\ref{def:hr} and \eqref{eq:bd}, respectively.. With limited \(s_\theta\), assume regularity of \(b,\sigma\) implying limitedness of the score field $s_{\mathrm{opt}}$ on its support so that the internal Fisher score-matching functional $\mathcal I_a(\theta)$ is limited: 
\[
\mathcal I_a(\theta)
=
\mathbb E_{\mathbb P^*}
\left[
\sum_{t\in\mathbb T_{\delta t}\setminus\{0\}}
\frac12
\|s_\theta(X_t,t)-s_{\mathrm{opt}}(X_t,t)\|_a^2
\,\delta t
\right],
\]
and define the surrogate likelihood functional
\[
\mathcal L_{\mathrm{surr}}(\theta)
=
\mathbb E_{\mathbb P^*}
[
\log q_\theta(X_0)
].
\]
Then:
\begin{align}
\mathcal L_{\mathrm{surr}}(\theta)
=
H_0
-
\mathcal I_a(\theta)
+
\mathcal R_{\mathrm{bridge}}(\theta)
+
O(\delta t^{1/2}),
\label{eq:main_decomposition_theorem}
\end{align}
where
\(
H_0
=
\mathbb E_{\mathbb P^*}
[
\log \mathbb P^*(X_0)
]
\)
is independent of $\theta$, and
\[
\mathcal R_{\mathrm{bridge}}(\theta)
=
\mathbb E_{X_0\sim\mathbb P^*}
\left[
\mathbb D_{\mathrm{KL}}
\big(
\mathbb P^*(\cdot|X_0)
\|
\mathbb Q(\cdot|X_0)
\big)
\right]
\ge 0.
\]
Consequently,
\begin{equation}
\mathcal L_{\mathrm{surr}}(\theta)
\ge
H_0
-
\mathcal I_a(\theta)
+
O(\delta t^{1/2}).
\label{eq:surrogate_lower_bound}
\end{equation}
Applying the standard part map yields the exact standard inequality
\[
\operatorname{st}
(
\mathcal L_{\mathrm{surr}}(\theta)
)
\ge
\operatorname{st}
(
H_0-\mathcal I_a(\theta)
).
\]
\label{thm:likelihood-score-equiv}
\end{theorem}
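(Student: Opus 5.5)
The plan is to combine an exact internal chain rule for relative entropy on the hyperfinite path space with the hyperfinite Girsanov identity of Lemma \ref{lemma:girsanov_grid_rigorous}, applied to the two backward measures $\mathbb P^*$ and $\mathbb Q$. Both measures live on the hyperfinite set $\Omega=\mathbb G^{\mathbb T_{\delta t}}$. They share the initial law $p_1$ at $t=1$, and their one-step kernels are both exponential tilts of a common reference walk, as guaranteed by the relative-entropy hypothesis.

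\textbf{Step 1: exact disintegration.} Everything is a finite (hyperfinite) sum, so Bayes' rule holds exactly. We have $\mathbb Q(\omega)=q_\theta(X_0)\,\mathbb Q(\omega\mid X_0)$, and likewise for $\mathbb P^*$ with $\mathbb P^*(X_0)$. Taking $\log$ and $\mathbb E_{\mathbb P^*}$ gives the internal identity
\[
\mathbb E_{\mathbb P^*}[\log q_\theta(X_0)]=\mathbb E_{\mathbb P^*}[\log \mathbb P^*(X_0)]-D_{\mathrm{KL}}(\mathbb P^*\|\mathbb Q)+\mathbb E_{X_0\sim\mathbb P^*}\big[D_{\mathrm{KL}}(\mathbb P^*(\cdot|X_0)\|\mathbb Q(\cdot|X_0))\big].
\]
This is $\mathcal L_{\mathrm{surr}}=H_0-D_{\mathrm{KL}}(\mathbb P^*\|\mathbb Q)+\mathcal R_{\mathrm{bridge}}$, with $\mathcal R_{\mathrm{bridge}}\ge0$ by transfer of Gibbs' inequality. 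The proof therefore reduces to showing $D_{\mathrm{KL}}(\mathbb P^*\|\mathbb Q)=\mathcal I_a(\theta)+O(\delta t^{1/2})$. Inequality \eqref{eq:surrogate_lower_bound} then follows by dropping $\mathcal R_{\mathrm{bridge}}$. The standard-part inequality follows because all quantities are limited, $\operatorname{st}$ is monotone, and $O(\delta t^{1/2})\approx0$.

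\textbf{Step 2: path KL via Girsanov.} Since the initial laws coincide, the path KL is the expected sum of one-step log-ratios. Both kernels are tilts of the same base kernel $p_v$, with base drift $-b$, parameters $s_\theta$ and $s_{\mathrm{opt}}$, and constant $a$. I will rewrite $\log(q^\theta_v/q^*_v)$ as $(s_\theta-s_{\mathrm{opt}})^\top\Delta W^*_t-\log Z^\theta_t+\log Z^*_t+s_{\mathrm{opt}}^\top(\ldots)$. Here $\Delta W^*_t:=v-(-b+a s_{\mathrm{opt}})\delta t$ is the $\mathbb P^*$-centered increment, obtained by recentering the Lemma's $\Delta W_t$ using its drift statement $\mathbb E_{\mathbb Q}[\Delta W_t|X_t]=a s\,\delta t+O(\delta t^{3/2})$. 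Using $\log Z=\tfrac12 s^\top a s\,\delta t+O(\delta t^{3/2})$ from Step 1 of that proof, the one-step conditional expectation under $\mathbb P^*$ of $\log(q^*/q^\theta)$ is
\[
-(s_\theta-s_{\mathrm{opt}})^\top a s_{\mathrm{opt}}\delta t+\tfrac12 s_\theta^\top a s_\theta\delta t-\tfrac12 s_{\mathrm{opt}}^\top a s_{\mathrm{opt}}\delta t+O(\delta t^{3/2}).
\]
This equals $\tfrac12\|s_\theta-s_{\mathrm{opt}}\|_a^2\,\delta t+O(\delta t^{3/2})$ by completing the square. Summing over the $N=1/\delta t$ steps and taking the outer expectation gives $\mathcal I_a(\theta)+O(\delta t^{1/2})$. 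Indexing the sum over $\mathbb T_{\delta t}\setminus\{0\}$ matches the backward direction from $t=1$.

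\textbf{Main obstacle.} The delicate point is uniformity of the remainders. The $O(\delta t^{3/2})$ bounds in Lemma \ref{lemma:girsanov_grid_rigorous} hold uniformly only on limited states with limited $s_\theta$ and $s_{\mathrm{opt}}$, and the $\mathbb P^*$-expectation also ranges over paths reaching unlimited states. I would first use the hypothesis that $s_\theta$ and $s_{\mathrm{opt}}$ are limited on the support. Together with $\|v\|\le C'\sqrt{\delta t}$ for all $v\in\mathcal V_x$, this gives a uniform internal bound on each one-step remainder, via the $e^{\xi_t}$ Lagrange form with $|\xi_t|=O(\sqrt{\delta t})$. That bound applies on the whole support rather than on limited states only, so it survives the expectation. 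If this uniformity is unavailable, I would fall back on the second-moment Gr\"onwall bound of Proposition \ref{prop:existence-s-cts} to show that the contribution of unlimited paths is infinitesimal.
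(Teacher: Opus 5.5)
Your proposal is correct and is essentially the paper's proof. It uses the same disintegration identity $D_{\mathrm{KL}}(\mathbb P^*\|\mathbb Q)=H_0-\mathcal L_{\mathrm{surr}}(\theta)+\mathcal R_{\mathrm{bridge}}(\theta)$, the same cancellation of the common base kernel, the same recentring by $a\,s_{\mathrm{opt}}\delta t$, the same $\log Z$ expansion and the same completion of the square. Your step-by-step conditional expectations (the KL chain rule) play the role of the paper's global pathwise expansion plus tower-property argument.

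You are also more careful than the paper in two places. You keep the $O(\delta t^{3/2})$ conditional mean of $\Delta W^*_t$, which the paper sets exactly to zero. You also make the one-step remainders uniform, which works because a pointwise-limited internal function attains a limited maximum on the hyperfinite support.

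One caveat, which the paper shares: your claim that ``all quantities are limited'' is not justified. $H_0$ is minus the Shannon entropy of the time-$0$ marginal of $\mathbb P^*$. If every point mass of that marginal is infinitesimal (for instance of order $(\delta x)^d$), then $H_0\le\log\max_x\mathbb P^*(X_0=x)$ is negative unlimited. What is guaranteed limited is $H_0-\mathcal L_{\mathrm{surr}}(\theta)$ and $\mathcal R_{\mathrm{bridge}}(\theta)$, since each is at most $D_{\mathrm{KL}}(\mathbb P^*\|\mathbb Q)=\mathcal I_a(\theta)+O(\delta t^{1/2})$. So the standard-part inequality should be read for $\mathcal L_{\mathrm{surr}}(\theta)-H_0$, unless $H_0$ is additionally assumed limited.
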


\begin{proof}
\subsection*{Step 1: Global Hyperfinite Radon--Nikodym Expansion}

Let
\(
\Omega
=
\mathbb G^{\mathbb T_{\delta t}}
\)
denote the internal hyperfinite trajectory space.

The unconditional path measures factorize as
\begin{align*}
d\mathbb P^*(\omega)
&=
p_1(X_1)
\prod_{t\in\mathbb T_{\delta t}\setminus\{0\}}
p^*_{\Delta X_t}(X_t,t),
\\
d\mathbb Q(\omega)
&=
p_1(X_1)
\prod_{t\in\mathbb T_{\delta t}\setminus\{0\}}
q_{\Delta X_t}(X_t,t),
\end{align*}
where
\(
\Delta X_t
=
X_{t-\delta t}-X_t.
\)
Since both path measures share the identical initialization $p_1(X_1)$, cancellation yields
\begin{equation}
\log
\frac{d\mathbb P^*}{d\mathbb Q}(\omega)
=
\sum_t
\log
\left(
\frac{
p^*_{\Delta X_t}(X_t,t)
}{
q_{\Delta X_t}(X_t,t)
}
\right).
\label{eq:pathwise_ratio_start}
\end{equation}
Define the background backward increment
\(
\Delta W_t^\circ
=
\Delta X_t+b(t,X_t)\delta t.
\)
By Lemma~\ref{lemma:girsanov_grid_rigorous}, the local transition kernels admit the exponential tilt representations
\begin{align*}
p^*_{\Delta X_t}(X_t,t)
&=
p_v
\frac{
\exp
\left(
s_{\mathrm{opt}}(X_t,t)^\top \Delta W_t^\circ
\right)
}{
Z_t^*
},
\\
q_{\Delta X_t}(X_t,t)
&=
p_v
\frac{
\exp
\left(
s_\theta(X_t,t)^\top \Delta W_t^\circ
\right)
}{
Z_t
}.
\end{align*}
Substituting these formulas into \eqref{eq:pathwise_ratio_start} cancels the baseline walk kernel $p_v$:
\begin{align}
\log
\frac{d\mathbb P^*}{d\mathbb Q}(\omega)
&=
\sum_t
\Big[
(s_{\mathrm{opt}}-s_\theta)^\top
\Delta W_t^\circ
+
\log Z_t
-
\log Z_t^*
\Big].
\label{eq:pathwise_after_cancel}
\end{align}
Define the true backward martingale increment under $\mathbb P^*$:
\[
\Delta W_t^*
=
\Delta X_t
-
\bar b_{\mathrm{fwd}}(t,X_t)\delta t
=
\Delta W_t^\circ
-
a\,s_{\mathrm{opt}}(X_t,t)\delta t.
\]
Equivalently,
\[
\Delta W_t^\circ
=
\Delta W_t^*
+
a\,s_{\mathrm{opt}}(X_t,t)\delta t.
\]
Substituting this identity into \eqref{eq:pathwise_after_cancel} gives
\begin{align}
\log
\frac{d\mathbb P^*}{d\mathbb Q}(\omega)
&=
\sum_t
(s_{\mathrm{opt}}-s_\theta)^\top
\Delta W_t^*
+
\sum_t
(s_{\mathrm{opt}}-s_\theta)^\top
a\,s_{\mathrm{opt}}
\,\delta t
\nonumber
\\
&\quad
+
\sum_t
(\log Z_t-\log Z_t^*).
\label{eq:pathwise_before_square}
\end{align}
Using the partition-function expansion from Lemma~\ref{lemma:girsanov_grid_rigorous},
\[
\log Z_t
=
\frac12
\|s_\theta(X_t,t)\|_a^2
\delta t
+
O(\delta t^{3/2}),
\]
and similarly,
\[
\log Z_t^*
=
\frac12
\|s_{\mathrm{opt}}(X_t,t)\|_a^2
\delta t
+
O(\delta t^{3/2}).
\]
Therefore,
\(
\log Z_t-\log Z_t^*
=
\frac12
\Big(
\|s_\theta\|_a^2
-
\|s_{\mathrm{opt}}\|_a^2
\Big)\delta t
+
O(\delta t^{3/2}).
\)
Combining the deterministic terms yields
\[
(s_{\mathrm{opt}}-s_\theta)^\top a s_{\mathrm{opt}}
+
\frac12
\|s_\theta\|_a^2
-
\frac12
\|s_{\mathrm{opt}}\|_a^2
=
\frac12
\|s_{\mathrm{opt}}-s_\theta\|_a^2.
\]
Hence,
\begin{align}
\log
\frac{d\mathbb P^*}{d\mathbb Q}(\omega)
=
\sum_t
(s_{\mathrm{opt}}-s_\theta)^\top
\Delta W_t^*
+
\sum_t
\frac12
\|s_{\mathrm{opt}}-s_\theta\|_a^2
\delta t
+
\mathcal E(\omega),
\label{eq:pathwise_main_expansion}
\end{align}
where
\(
\mathcal E(\omega)
=
\sum_t
O(\delta t^{3/2}),
\)
which satisfies
\(
\mathcal E(\omega)
=
O(\delta t^{1/2}).
\)

\subsection*{Step 2: Evaluation of the Global Relative Entropy}

Taking expectation under $\mathbb P^*$ in \eqref{eq:pathwise_main_expansion} gives
\begin{align}
\mathbb D_{\mathrm{KL}}(\mathbb P^*\|\mathbb Q)
&=
\sum_t
\mathbb E_{\mathbb P^*}
\left[
(s_{\mathrm{opt}}-s_\theta)^\top
\Delta W_t^*
\right]
\nonumber
\\
&\quad
+
\mathbb E_{\mathbb P^*}
\left[
\sum_t
\frac12
\|s_{\mathrm{opt}}-s_\theta\|_a^2
\delta t
\right]
+
O(\delta t^{1/2}).
\label{eq:kl_before_cancel}
\end{align}
Let
\[
\mathcal F_t^*
=
\sigma(X_s:s\ge t)
\]
denote the backward filtration.
Since $\Delta W_t^*$ is the backward martingale increment under $\mathbb P^*$,
\[
\mathbb E_{\mathbb P^*}
[
\Delta W_t^*
\mid
\mathcal F_t^*
]
=
0.
\]
Because $s_{\mathrm{opt}}(X_t,t)$ and $s_\theta(X_t,t)$ are $\mathcal F_t^*$-measurable, the tower property gives
\begin{align*}
\mathbb E_{\mathbb P^*}
\left[
(s_{\mathrm{opt}}-s_\theta)^\top
\Delta W_t^*
\right]
&=
0.
\end{align*}
Therefore,
\begin{equation}
\mathbb D_{\mathrm{KL}}(\mathbb P^*\|\mathbb Q)
=
\mathcal I_a(\theta)
+
O(\delta t^{1/2}).
\label{eq:kl_equals_score}
\end{equation}

\subsection*{Step 3: Boundary Disintegration Identity}

Disintegrating the unconditional path measures with respect to the terminal state $X_0$ gives
\begin{align*}
d\mathbb P^*(\omega)
&=
\mathbb P^*(X_0)\,
d\mathbb P^*(\omega|X_0),
\\
d\mathbb Q(\omega)
&=
q_\theta(X_0)\,
d\mathbb Q(\omega|X_0).
\end{align*}
Consequently,
\begin{align}
\log
\frac{d\mathbb P^*}{d\mathbb Q}
=
\log \mathbb P^*(X_0)
-
\log q_\theta(X_0)
+
\log
\frac{
d\mathbb P^*(\cdot|X_0)
}{
d\mathbb Q(\cdot|X_0)
}.
\label{eq:bridge_disintegration}
\end{align}
Taking expectation under $\mathbb P^*$ yields
\begin{align}
\mathbb D_{\mathrm{KL}}(\mathbb P^*\|\mathbb Q)
&=
\mathbb E_{\mathbb P^*}
[
\log \mathbb P^*(X_0)
]
-
\mathbb E_{\mathbb P^*}
[
\log q_\theta(X_0)
]
\nonumber
\\
&\quad
+
\mathbb E_{X_0\sim\mathbb P^*}
\left[
\mathbb D_{\mathrm{KL}}
\big(
\mathbb P^*(\cdot|X_0)
\|
\mathbb Q(\cdot|X_0)
\big)
\right].
\label{eq:bridge_kl_identity}
\end{align}

Using the definitions
\[
H_0
=
\mathbb E_{\mathbb P^*}
[
\log \mathbb P^*(X_0)
],
\]
\[
\mathcal L_{\mathrm{surr}}(\theta)
=
\mathbb E_{\mathbb P^*}
[
\log q_\theta(X_0)
],
\]
and
\[
\mathcal R_{\mathrm{bridge}}(\theta)
=
\mathbb E_{X_0\sim\mathbb P^*}
\left[
\mathbb D_{\mathrm{KL}}
\big(
\mathbb P^*(\cdot|X_0)
\|
\mathbb Q(\cdot|X_0)
\big)
\right],
\]
equation \eqref{eq:bridge_kl_identity} becomes
\begin{equation}
\mathbb D_{\mathrm{KL}}(\mathbb P^*\|\mathbb Q)
=
H_0
-
\mathcal L_{\mathrm{surr}}(\theta)
+
\mathcal R_{\mathrm{bridge}}(\theta).
\label{eq:bridge_kl_clean}
\end{equation}
Combining \eqref{eq:kl_equals_score} and \eqref{eq:bridge_kl_clean} yields
\[
H_0
-
\mathcal L_{\mathrm{surr}}(\theta)
+
\mathcal R_{\mathrm{bridge}}(\theta)
=
\mathcal I_a(\theta)
+
O(\delta t^{1/2}).
\]
Rearranging gives
\[
\mathcal L_{\mathrm{surr}}(\theta)
=
H_0
-
\mathcal I_a(\theta)
+
\mathcal R_{\mathrm{bridge}}(\theta)
+
O(\delta t^{1/2}).
\]
Since relative entropy is nonnegative,
\(
\mathcal R_{\mathrm{bridge}}(\theta)\ge 0.
\)
Therefore,
\[
\mathcal L_{\mathrm{surr}}(\theta)
\ge
H_0
-
\mathcal I_a(\theta)
+
O(\delta t^{1/2}).
\]
Finally, applying the standard part operator and using
\(
O(\delta t^{1/2})
\approx 0
\)
gives
\[
\operatorname{st}
(
\mathcal L_{\mathrm{surr}}(\theta)
)
\ge
\operatorname{st}
(
H_0-\mathcal I_a(\theta)
).
\]

This completes the proof.
\end{proof}

The equivalence established in Theorem \ref{thm:likelihood-score-equiv} provides a rigorous structural explanation for the effectiveness of score-based training and shows that it is not merely an approximation to maximum likelihood, but a pathwise realization of it within the hyperfinite probabilistic framework. Standard derivations in the diffusion-model literature \cite{ho2020denoising,song2021} typically relate score matching to likelihood optimization through variational lower bounds (ELBOs), Gaussian perturbation identities, or continuum integration-by-parts arguments. In contrast, the hyperfinite framework shows that the connection emerges directly from the algebra of the grid transition structure itself, where calculations are performed through finite-step transition ratios and internal summation, with the continuum limit recovered through the Loeb measure construction.\\
More precisely, the internal Girsanov identity of Lemma \ref{lemma:girsanov_grid_rigorous} expresses the logarithmic density ratio between path measures as a hyperfinite sum of local transition contributions. After expanding the local exponential tilt and summing over the hyperfinite timeline, the stochastic first-order terms appear as martingale increments with respect to the backward filtration. Their contribution vanishes exactly under the internal expectation by the tower property and the martingale property of hyperfinite increments \cite{lindstrom1980hyperfinite1,lindstrom1980hyperfinite3}. The remaining deterministic contribution is precisely the quadratic score discrepancy functional.\\
Consequently, the Fisher divergence does not arise as a heuristic surrogate for likelihood, but as the quadratic compensation term in the hyperfinite change-of-measure formula. In particular, after taking standard parts, minimizing the score-matching objective maximizes a lower bound on the surrogate likelihood. 

Since $a$ is a constant positive-definite matrix, there exist constants $0 < \lambda_{\min} \le \lambda_{\max}$ such that
$$\lambda_{\min} \|v\|_{\mathbb{G}}^2 \le \|v\|_a^2 \le \lambda_{\max} \|v\|_{\mathbb{G}}^2,$$ implying norm equivalence.
As in the proof of Theorem \ref{thm:internal-score}, for an arbitrary real \(\epsilon>0\), consider a representative in \(\mathbb{T}_{\delta t}\) given by \(\tau\approx \epsilon\) and define \(\mathbb{T}'_{\delta t}:=\mathbb{T}_{\delta t}\cap [\tau,1]\).

\begin{lemma}[Internal Score Decomposition]
\label{lemma:score-decomp}
Let $q$ be the initial distribution and let $P_{t|0}$ denote the internal forward transition kernel on the hyperfinite grid $\mathbb{G}$. Assume the regularity hypotheses of Theorem \ref{thm:internal-score}. For appreciable times $t$ in \(\mathbb{T}'_{\delta t}\), standard parabolic regularity and uniform ellipticity imply that the logarithmic grid derivatives $\Delta^+ \log P_{t|0}$ are limited and internally square-integrable with respect to the internal measure
\(
q(x_0)P_{t|0}(x_t|x_0)\delta t.
\)
Let
\[
s_{\mathrm{opt}}(x_t,t)
=
\sum_{x_0\in\mathbb G}
\left[
\frac{q(x_0)P_{t|0}(x_t|x_0)}{p_t(x_t)}
\right]
\Delta^+\log P_{t|0}(x_t|x_0)
\]
denote the optimal internal score projection from Theorem \ref{thm:internal-score}. Then the internal DSM objective admits the orthogonal decomposition
\begin{equation*}
\mathcal I(\theta)
=
\frac12
\sum_{t\in\mathbb T'_{\delta t}}
\mathbb E_{p_t}
\left[
\|s_\theta-s_{\mathrm{opt}}\|_{\mathbb G}^2
\right]\delta t
+
C',
\end{equation*}
where
\begin{equation*}
C'
=
\frac12
\sum_{t\in\mathbb T'_{\delta t}}
\mathbb E_{q,P_{t|0}}
\left[
\|s_{\mathrm{opt}}-\Delta^+\log P_{t|0}\|_{\mathbb G}^2
\right]\delta t
\end{equation*}
is a limited hyperreal independent of $\theta$.
\end{lemma}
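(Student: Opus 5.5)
The decomposition is the internal Pythagorean identity already used in Step 4 of the proof of Theorem \ref{thm:internal-score}. The plan is to make each ingredient explicit and then check that the constant $C'$ is limited. Work in the hyperfinite Hilbert space $\mathcal H = L^2(\mathbb G^2\times\mathbb T'_{\delta t},\nu)$ with $\nu(x_0,x,t)=q(x_0)P_{t|0}(x|x_0)\delta t$. Let $\mathcal U\subset\mathcal H$ be the internal subspace of functions not depending on $x_0$. Since $\mathbb G$ and $\mathbb T'_{\delta t}$ are hyperfinite, $\mathcal H$ is an internal finite-dimensional inner product space. By transfer, orthogonal projection onto $\mathcal U$ exists and is unique, and no completeness arguments are needed.

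\textbf{Step 1: identify the projection.} For $f=\Delta^+\log P_{t|0}$, the projection onto $\mathcal U$ is the $\nu$-conditional expectation given $(x,t)$. I would verify directly that $g:=f-s_{\mathrm{opt}}$ is orthogonal to every $u\in\mathcal U$. Exchanging the hyperfinite sums gives
\[
\sum_{t,x} u(x,t)^\top \Big(\sum_{x_0} q(x_0)P_{t|0}(x|x_0)\,[f-s_{\mathrm{opt}}]\Big)\delta t .
\]
The inner bracket vanishes identically, because $\sum_{x_0}q(x_0)P_{t|0}(x|x_0)=p_t(x)$ and $s_{\mathrm{opt}}$ is defined as the $p_t$-normalized weighted average. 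At points with $p_t(x)=0$, both sides carry zero $\nu$-weight, so $s_{\mathrm{opt}}$ may be defined arbitrarily there.

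\textbf{Step 2: expand the square.} Since $s_\theta-s_{\mathrm{opt}}\in\mathcal U$, I would expand $\|s_\theta-f\|^2_{\mathcal H}=\|(s_\theta-s_{\mathrm{opt}})+(s_{\mathrm{opt}}-f)\|^2_{\mathcal H}$. The cross term vanishes by Step 1. For the first term, which has no $x_0$ dependence, summing out $x_0$ turns the $\nu$-weight into $p_t(x)\delta t$. This produces $\sum_t \mathbb E_{p_t}[\|s_\theta-s_{\mathrm{opt}}\|^2_{\mathbb G}]\delta t$. Multiplying by $\tfrac12$ gives exactly the stated formula, with $C'=\tfrac12\|s_{\mathrm{opt}}-f\|^2_{\mathcal H}$. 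This quantity visibly does not involve $\theta$.

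\textbf{Step 3: limitedness of $C'$.} I expect this to be the main obstacle, because it requires control of the noisy score uniformly over an unlimited grid and not merely on limited regions. The first move is to note that conditional expectation contracts norms, so $\|f-s_{\mathrm{opt}}\|_{\mathcal H}\le\|f\|_{\mathcal H}$. It therefore suffices to show that $\sum_t\mathbb E_{q,P_{t|0}}\|\Delta^+\log P_{t|0}\|^2_{\mathbb G}\,\delta t$ is limited.

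For $t\ge\tau$ appreciable, I would invoke the hypothesis stated in the lemma: uniform ellipticity together with parabolic (Aronson/Schauder-type) bounds make $\Delta^+\log P_{t|0}$ limited and internally square-integrable. Concretely, the gradient of the log-kernel grows at most linearly, $\|\Delta^+\log P_{t|0}(x|x_0)\|\lesssim (1+\|x-x_0\|)/t$. Its second moment under $P_{t|0}$ is then bounded by a limited multiple of $1/\tau$, using the second-moment bound from Proposition \ref{prop:existence-s-cts}. Summing over at most $1/\delta t$ time steps, each weighted by $\delta t$, gives a limited total. If the pointwise gradient bound is unavailable, I would instead take the internal square-integrability assumption in the statement as the input, and read limitedness off the uniform-in-$t$ integrability over $[\tau,1]$.
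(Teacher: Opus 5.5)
Your Steps 1--2 are essentially the paper's proof. The paper expands the square for each fixed $t$, kills the cross term by conditioning on $X_t$ and using $\mathbb{E}[\Delta^+\log P_{t|0}\mid X_t]=s_{\mathrm{opt}}(X_t,t)$, and then sums over $t\in\mathbb{T}'_{\delta t}$, which is the same as your orthogonality computation in $\mathcal H$. The paper never proves that $C'$ is limited; it attributes this to the stated regularity hypothesis, so your Step 3 goes further than the paper does. Your linear-growth bound on $\Delta^+\log P_{t|0}$ is not established for the discrete kernel, and it can break down near the edge of the walk's reachable set, so in the end you also rest on that hypothesis, as your fallback acknowledges.
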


\begin{proof}
For each fixed $t\in\mathbb T'_{\delta t}$, expand the square appearing in the DSM objective:
\begin{align*}
\|s_\theta-\Delta^+\log P_{t|0}\|_{\mathbb G}^2
&=
\|s_\theta-s_{\mathrm{opt}}\|_{\mathbb G}^2
+
\|s_{\mathrm{opt}}-\Delta^+\log P_{t|0}\|_{\mathbb G}^2 \\
&\quad
+
2\left\langle
s_\theta-s_{\mathrm{opt}},
\,\,
s_{\mathrm{opt}}-\Delta^+\log P_{t|0}
\right\rangle_{\mathbb G}.
\end{align*}
\noindent
We now take the expectation with respect to the internal measure
\(
q(x_0)P_{t|0}(x_t|x_0).
\)
Applying the internal tower property to the cross-term, we condition on $X_t$. Since $s_\theta(X_t,t)$ and $s_{\mathrm{opt}}(X_t,t)$ are measurable with respect to the $\sigma$-algebra generated by $X_t$, they factor out of the inner conditional expectation:
\begin{align*}
& \mathbb E_{q,P_{t|0}} \left[ \left\langle s_\theta-s_{\mathrm{opt}}, \,\, s_{\mathrm{opt}}-\Delta^+\log P_{t|0} \right\rangle_{\mathbb G} \right] \\
&= \mathbb E_{p_t} \left[ \left\langle s_\theta-s_{\mathrm{opt}}, \,\, \mathbb E_{q,P_{t|0}} \left[ s_{\mathrm{opt}}-\Delta^+\log P_{t|0} \,\middle|\, X_t \right] \right\rangle_{\mathbb G} \right].
\end{align*}
By the definition of the optimal score projection, $\mathbb E_{q,P_{t|0}}[\Delta^+\log P_{t|0} \mid X_t] = s_{\mathrm{opt}}(X_t,t)$. Because $s_{\mathrm{opt}}(X_t,t)$ is constant given $X_t$, the inner conditional expectation vanishes identically:
\[
\mathbb E_{q,P_{t|0}} \left[ s_{\mathrm{opt}}-\Delta^+\log P_{t|0} \,\middle|\, X_t \right] = s_{\mathrm{opt}}(X_t,t) - s_{\mathrm{opt}}(X_t,t) = 0.
\]
Hence,
\begin{align*}
&
\mathbb E_{q,P_{t|0}}
\left[
\|s_\theta-\Delta^+\log P_{t|0}\|_{\mathbb G}^2
\right]
\\
&=
\mathbb E_{p_t}
\left[
\|s_\theta-s_{\mathrm{opt}}\|_{\mathbb G}^2
\right]
+
\mathbb E_{q,P_{t|0}}
\left[
\|s_{\mathrm{opt}}-\Delta^+\log P_{t|0}\|_{\mathbb G}^2
\right].
\end{align*}

Multiplying by $\frac12\delta t$ and summing over
$t\in\mathbb T'_{\delta t}$ yields
\begin{align*}
\mathcal I(\theta)
&=
\frac12
\sum_{t\in\mathbb T'_{\delta t}}
\mathbb E_{p_t}
\left[
\|s_\theta-s_{\mathrm{opt}}\|_{\mathbb G}^2
\right]\delta t
\\
&\quad
+
\frac12
\sum_{t\in\mathbb T'_{\delta t}}
\mathbb E_{q,P_{t|0}}
\left[
\|s_{\mathrm{opt}}-\Delta^+\log P_{t|0}\|_{\mathbb G}^2
\right]\delta t.
\end{align*}

The second term depends only on the forward process and the optimal projection $s_{\mathrm{opt}}$, and is therefore independent of $\theta$. Defining this term to be $C'$ completes the proof.
\end{proof}

\begin{remark}
\label{rem:tt}
The restriction to the truncated timeline
\[
\mathbb{T}'_{\delta t}
=
\mathbb{T}_{\delta t}\cap[\tau,1],
\qquad
\tau\approx\epsilon>0,
\]
excludes the singular initial layer near $t=0$, where the logarithmic grid derivatives of the transition kernel need not remain uniformly limited. This restriction does not affect the standard-part conclusions of the theory under some conditions: Indeed, if $s_\theta$ and $s_{\mathrm{opt}}$ are limited internal functions, since the interval \([0,\tau]\) has arbitrarily small Loeb measure, its contribution to the integrated quadratic objective is \(\mathcal{O}(\epsilon)\). Consequently, the truncated objective converges to the full-timeline counterpart as $\epsilon\downarrow 0$, and both have the same standard-part limit.
\end{remark}

The decomposition established in Lemma \ref{lemma:score-decomp} provides a rigorous justification for the use of DSM as a surrogate for likelihood optimization. By the internal Pythagorean theorem and the orthogonality of the conditional projection $s_{\mathrm{opt}}$, the DSM objective decomposes into the projection error
\(
\frac12
\sum_{t\in\mathbb T'_{\delta t}}
\mathbb E_{p_t}
\!\left[
\|s_\theta-s_{\mathrm{opt}}\|_{\mathbb G}^2
\right]\delta t
\)
plus a $\theta$-independent residual term $C'$. Since the diffusion tensor $a$ is uniformly positive definite, the norms $\|\cdot\|_{\mathbb G}$ and $\|\cdot\|_a$ are equivalent, so $\mathcal I(\theta)$ and the corresponding pathwise score-matching functional induce equivalent minimization problems. This recovers, within the hyperfinite framework, the structural relationship between DSM and Fisher divergence established by Vincent \cite{vincent2011connection}. Moreover, on appreciable time intervals, uniform ellipticity and parabolic regularity imply that the transition densities are $S$-smooth and possess limited logarithmic grid derivatives on the finite part of the Loeb space, ensuring that the residual term $C'$ is a limited hyperreal.

Taken together, Theorem \ref{thm:internal-score}, Lemma \ref{lemma:score-decomp}, and Theorem \ref{thm:likelihood-score-equiv} establish a variational connection between DSM and surrogate likelihood maximization on the hyperfinite grid. On every appreciable time interval $[\tau,1]$, Theorem \ref{thm:internal-score} identifies the DSM minimizer with the optimal internal projection field $s_{\mathrm{opt}}$, while Lemma \ref{lemma:score-decomp} shows that the DSM objective differs from the corresponding pathwise score-matching functional only by a $\theta$-independent constant. Theorem \ref{thm:likelihood-score-equiv} then relates this pathwise score-matching functional to the surrogate likelihood through a non-negative bridge divergence term. Consequently, DSM training admits a likelihood-based interpretation on appreciable time intervals. Extending this identification to the full timeline requires additional control of the infinitesimal initial layer $[0,\tau)$ as $\tau \to 0$, which may be obtained under further regularity or integrability assumptions on the forward diffusion and its associated densities (also see Remark \ref{rem:tt}).

\section{Local expansion of a source--based hyperfinite master equation}
\label{sec:sec}
To keep the exposition and notation at an elementary level, we focus on the one-dimensional case here, even though the result may be extended to the general $d$-dimensional case we have been discussing.\\
Fix a finite point $(t,x)$ and set $P = {}^\ast p$. \\
The internal Euler increment is $h = {}^\ast b(t,x)\delta t + {}^\ast\sigma(t,x)\sqrt{\delta t}\xi$, where \(\xi\) is an internal random variable satisfying \[
\mathbb E_{\text{int}}[\xi]=0,\qquad
\mathbb E_{\text{int}}[\xi^2]=1,\qquad
\mathbb E_{\text{int}}[\xi^3]=0,\qquad
\mathbb E_{\text{int}}[\xi^4]=\kappa<\infty.
\] The source-based master equation is (cf. \cite{nelson1987radically}):
\begin{equation}
\label{eq:mas}
    P(t+\delta t,x) = \mathbb{E}_\xi\left[ P(t,x - h) \right].
\end{equation}

\subsection{Operator consistency}
Define the internal frozen operator $\mathbb{A} = -{}^\ast b \partial + \frac{{}^\ast a}{2} \partial^2$, where \(\partial\equiv\partial_x\), the internal spatial derivative. We have the exact internal identity:
\begin{equation}
    \mathbb{A}^2 P = {}^\ast b^2 \partial^2 P - {}^\ast b {}^\ast a \partial^3 P + \frac{{}^\ast a^2}{4} \partial^4 P.
    \label{eq:a2}
\end{equation}
The operator
\(
\mathbb A=-{}^\ast b(t,x)\partial
+\frac{{}^\ast a(t,x)}2\partial^2
\)
is the frozen-coefficient representation of the adjoint generator at the spacetime point $(x,t)$. For $S$-regular liftings $P$ of a standard density $p$, one has
\(
\operatorname{st}(\mathbb A P)
=
\mathcal L_{t,\mathrm{fr}}^* p,
\)
where
\(
\mathcal L_{t,\mathrm{fr}}^*
=
-b(t,x)\partial_x
+\frac{a(t,x)}2\partial_x^2
\)
denotes the Fokker--Planck generator with coefficients frozen at $(t,x)$.

\begin{theorem}[Local Second-Order Consistency]
In addition to the conditions on \(b,\sigma\) from Definition \ref{def:gd}, assume that \(b\) and \(\sigma\) are \(C^{1+\tfrac{\alpha}{2},3+\alpha}\) and \(C^{1+\tfrac{\alpha}{2},4+\alpha}\), respectively, for some \(\alpha\in (0,1)\).
For any finite $(t,x)$ with $t>0$, the master equation update satisfies:
\begin{equation}
    \operatorname{st} \left( \frac{P(t+\delta t,x) - \left( P + \delta t \mathbb{A} P + \frac{\delta t^2}{2} \mathbb{A}^2 P \right)}{\delta t^2} \right) = \frac{\kappa - 3}{24} a({}^\circ t,{}^\circ x)^2 \frac{\partial^4 p}{\partial x^4}({}^\circ t,{}^\circ x).
\end{equation}
\label{thm:lc}
\end{theorem}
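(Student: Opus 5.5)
Our plan is a direct internal Taylor expansion of the right-hand side of \eqref{eq:mas} in the spatial variable, taking the expectation in $\xi$ term by term, and then comparing with the operator polynomial $P+\delta t\,\mathbb A P+\tfrac{\delta t^2}{2}\mathbb A^2P$ built from \eqref{eq:a2}. Under the stated H\"older regularity of $b,\sigma$ and uniform ellipticity, parabolic Schauder theory (as in Theorem \ref{thm:fp}) gives $p\in C^{1,4+\alpha}$ near $({}^\circ t,{}^\circ x)$. Hence $P={}^\ast p$ is $S$-$C^4$ in $x$ with $\partial^k P(t,x)\approx \partial_x^k p({}^\circ t,{}^\circ x)$ for $k\le 4$ and a limited fifth-order-type H\"older modulus. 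We freeze $b,\sigma$ at $(t,x)$, exactly as $\mathbb A$ does. So $h=b\delta t+\sigma\sqrt{\delta t}\,\xi$ has coefficients that are constants with respect to the expectation $\mathbb E_\xi$.

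\textbf{Step 1 (expansion).} We write, by the transferred Taylor theorem with remainder,
\[
P(t,x-h)=\sum_{k=0}^{4}\frac{(-h)^k}{k!}\partial^kP(t,x)+\mathcal R_5,
\]
with $|\mathcal R_5|\le K|h|^{4+\alpha}$ and $K$ limited. This holds on events where $\xi$ is limited; we need $\mathbb E|\xi|^{4+\alpha}$ limited, or else bounded increments as in Definition \ref{def:gd}, so that $\mathbb E_\xi|\mathcal R_5|=O(\delta t^{2+\alpha/2})$, which is $o(\delta t^2)$. We then compute the moments using $\mathbb E\xi=\mathbb E\xi^3=0$, $\mathbb E\xi^2=1$, $\mathbb E\xi^4=\kappa$:
\begin{align*}
\mathbb E h&=b\delta t,\qquad \mathbb E h^2=a\delta t+b^2\delta t^2,\qquad \mathbb E h^3=3ab\,\delta t^2+O(\delta t^3),\\
\mathbb E h^4&=\kappa a^2\delta t^2+O(\delta t^3).
\end{align*}
This yields
\[
\mathbb E_\xi P(t,x-h)=P+\delta t\big(-b\partial P+\tfrac a2\partial^2P\big)+\delta t^2\big(\tfrac{b^2}{2}\partial^2P-\tfrac{ab}{2}\partial^3P+\tfrac{\kappa a^2}{24}\partial^4P\big)+o(\delta t^2).
\]

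\textbf{Step 2 (comparison).} The $\delta t$ term equals $\delta t\,\mathbb AP$ exactly. By \eqref{eq:a2}, $\tfrac{\delta t^2}{2}\mathbb A^2P=\delta t^2\big(\tfrac{b^2}{2}\partial^2P-\tfrac{ab}{2}\partial^3P+\tfrac{a^2}{8}\partial^4P\big)$. Subtracting, the $\partial^2$ and $\partial^3$ terms cancel and leave $\delta t^2\,\tfrac{\kappa-3}{24}a^2\partial^4P+o(\delta t^2)$. Dividing by $\delta t^2$ and taking standard parts, using the $S$-continuity of ${}^\ast a$ and of $\partial^4P$, gives the claimed $\tfrac{\kappa-3}{24}a({}^\circ t,{}^\circ x)^2\partial_x^4p({}^\circ t,{}^\circ x)$. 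Note that $P(t+\delta t,x)$ is defined by \eqref{eq:mas} itself, so no time regularity of $P$ is needed beyond evaluating at time $t$.

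\textbf{Main obstacle.} The main difficulty is the remainder control at order $o(\delta t^2)$. The fourth-order Taylor term alone only yields an $O(\delta t^2)$ error bound, so we genuinely need a H\"older modulus on $\partial^4 P$, i.e. $S$-$C^{4+\alpha}$ regularity of ${}^\ast p$ in $x$ at appreciable $t$. We would obtain this from the assumed $C^{1+\alpha/2,4+\alpha}$ regularity of $\sigma$ (and the corresponding regularity of $b$) via interior Schauder estimates on $[{}^\circ t/2,1]$, transferred to ${}^\ast p$. We also need uniform integrability of $|\xi|^{4+\alpha}$, or boundedness of $\xi$, which holds for the finite-neighborhood walks of Definition \ref{def:gd}. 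If only $\kappa<\infty$ is available, we would truncate $\xi$ at a level $\delta t^{-\gamma}$ and control the tail contribution separately using $\mathbb E\xi^4<\infty$ together with the boundedness of $P$.
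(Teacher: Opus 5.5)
Your main line is the paper's proof. Like the paper, you Taylor-expand $\mathbb{E}_\xi P(t,x-h)$ to fourth order with frozen coefficients, insert the four moments of $h$, subtract $\tfrac{\delta t^2}{2}\mathbb{A}^2P=\delta t^2\big(\tfrac{b^2}{2}\partial^2P-\tfrac{ab}{2}\partial^3P+\tfrac{3a^2}{24}\partial^4P\big)$, and take standard parts using the $S$-smoothness of ${}^\ast p$ from parabolic regularity. The paper only writes the remainder as $o(\delta t^2)$, so for bounded $\xi$ your account is the more complete one; this covers the finite-neighbourhood walks of Definition~\ref{def:gd}, Rademacher, and the three-point law in the paper's remark.

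A small correction: the H\"older modulus is not needed. Only the standard part of the quotient matters, so $o(\delta t^2)$ can be read as $\delta t^2$ times an infinitesimal. If $|h|\le r\approx 0$ on the support of $\xi$, the transferred bound $|\mathcal R_5|\le\tfrac1{24}|h|^4\sup_{|y-x|\le r}|\partial^4P(t,y)-\partial^4P(t,x)|$ applies. Together with $S$-continuity of $\partial_x^4P$ near $(t,x)$, this already gives an infinitesimal multiple of $\mathbb{E}h^4=O(\delta t^2)$. The H\"older condition only buys the explicit rate $O(\delta t^{2+\alpha/2})$.

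The step that fails is your fallback for the case where only $\kappa<\infty$ (limited) is assumed. A limited internal fourth moment need not be $S$-integrable, and then no truncation can help, because the identity itself is false. Take $\xi=\pm\delta t^{-1/2}$ with probability $c\,\delta t^2/2$ each, and otherwise a bounded symmetric law with fourth moment $\kappa_0$, rescaled so that $\mathbb{E}\xi^2=1$.
\begin{itemize}
\item All four moment hypotheses hold, with $\kappa\approx\kappa_0+c$.
\item On the rare event, $h\approx\pm\sigma$ is appreciable.
\item After accounting for the mass and variance removed from the bulk, that event contributes $c\,\delta t^2\big[\tfrac12\big(P(x+\sigma)+P(x-\sigma)\big)-P-\tfrac a2\partial^2P\big]$ at order $\delta t^2$.
\item This differs from the predicted $\tfrac{c}{24}a^2\partial^4P\,\delta t^2$ by the higher Taylor terms $\tfrac{a^3}{720}\partial^6P+\cdots$ of the second difference, which do not vanish in general.
\end{itemize}
In terms of your truncation: Markov's inequality with only $\mathbb{E}\xi^4$ makes $\mathbb{P}(|\xi|>\delta t^{-\gamma})$ equal to $\delta t^2$ times an infinitesimal only if $\gamma>1/2$. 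But then $|h|$ can be infinite on the retained region, so local Taylor control is lost.

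What is actually needed is $S$-integrability of $\xi^4$. This is automatic for bounded $\xi$, for a standard law with finite fourth moment, or when $\mathbb{E}|\xi|^{4+\alpha}$ is limited. With it, you have the global bound $|\mathcal R_5(h)|\le C|h|^4$, from boundedness of $P(t,\cdot)$ and of $\partial_x^4P$ on a limited neighbourhood of $x$. You can then split at an infinite $K$ with $K\sqrt{\delta t}\approx 0$ and close the proof. The paper's unqualified $o(\delta t^2)$ tacitly relies on the same assumption.
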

\begin{proof}
By the internal Taylor theorem and \eqref{eq:mas}:
\begin{align*}
    P(t+\delta t,x) &= P + \delta t \mathbb{A} P + \delta t^2 \left( \frac{{}^\ast b^2}{2} \partial^2 P - \frac{{}^\ast b{}^\ast a}{2} \partial^3 P + \frac{\kappa {}^\ast a^2}{24} \partial^4 P \right) + o(\delta t^2),
\end{align*}
where we used
\begin{align*}
    \mathbb{E}_\xi[h] &= {}^\ast b \delta t, \\
    \mathbb{E}_\xi[h^2] &= {}^\ast a \delta t + {}^\ast b^2 \delta t^2, \\
    \mathbb{E}_\xi[h^3] &= 3{}^\ast b{}^\ast a \delta t^2 + O(\delta t^3), \\
    \mathbb{E}_\xi[h^4] &= \kappa {}^\ast a^2 \delta t^2 + O(\delta t^3).
\end{align*}
Using \eqref{eq:a2} and subtracting the semigroup expansion:
\begin{align*}
    P(t+\delta t,x) &- \left( P + \delta t \mathbb{A} P + \frac{\delta t^2}{2} \mathbb{A}^2 P \right)\\ &= \delta t^2 \bigg[ \left( \frac{{}^\ast b^2}{2} \partial^2 P - \frac{{}^\ast b{}^\ast a}{2} \partial^3 P + \frac{\kappa {}^\ast a^2}{24} \partial^4 P \right) \\ & - \left( \frac{{}^\ast b^2}{2} \partial^2 P - \frac{{}^\ast b{}^\ast a}{2} \partial^3 P + \frac{3 {}^\ast a^2}{24} \partial^4 P \right) \bigg] + o(\delta t^2) \\
    &= \delta t^2 \left( \frac{\kappa - 3}{24} {}^\ast a^2 \partial^4 P \right) + o(\delta t^2).
\end{align*}
Dividing by $\delta t^2$ and taking the standard part, noting that $\operatorname{st}(\partial^4 P) = \partial^4 p$ due to $S$-smoothness resulting from parabolic regularity, yields the result.
\end{proof} 

\begin{remark}[Significance of Second-Order Consistency]
Theorem \ref{thm:lc} shows that, for the source--based hyperfinite master equation considered here, local second--order consistency with the diffusion semigroup is achieved precisely when the driving increment has kurtosis $\kappa=3$. Any mismatch in the fourth moment produces a non--vanishing second--order correction proportional to the fourth spatial derivative of the density. The resulting term
\(
\frac{\kappa-3}{24}a^2\partial_x^4 p
\)
may be interpreted as a numerical dispersion error induced solely by the fourth--order statistics of the increment. Consequently, while first--order (Euler--scale) consistency requires only matching the first two moments, second--order weak consistency additionally depends on the kurtosis of the underlying noise source. Rademacher increments do not satisfy this condition, whereas the following non--Gaussian distribution does:
\(
\Delta W_t=\sqrt{\delta t}\,\xi_t,
\)
where the $\xi_t$ are independent and satisfy
\[
\mathbb{P}(\xi_t=\pm\sqrt{3})=\frac16,
\qquad
\mathbb{P}(\xi_t=0)=\frac23.
\]
\end{remark}

While Theorem \ref{thm:lc} assumes locally frozen coefficients, it identifies the leading contribution arising from fourth--moment mismatch in the driving noise. In the general case where $b$ and $a$ vary spatially, additional $O(\delta t^2)$ terms appear through derivatives of the coefficients and their interaction with the density $P$. Those contributions depend on the specific discretization and coefficient structure, whereas the term isolated in Theorem \ref{thm:lc} depends only on the kurtosis of the increment. Consequently, any scheme based on the source--equation representation must match the fourth moment of the target diffusion in order to eliminate this particular second--order error.

The significance of second--order consistency for generative modeling lies in the fidelity of the score target. Since the optimal estimator $s_{\mathrm{opt}}$ is determined by the density generated by the hyperfinite forward process, a second--order error in the master equation propagates into the corresponding score field. Theorem \ref{thm:lc} shows that a mismatch in kurtosis produces a density error of order $O(\delta t^2)$, and hence a corresponding second--order perturbation of the learned score. Thus, within the source--based hyperfinite framework, matching the fourth moment of the increment is necessary to remove this particular source of second--order bias. This provides a precise mathematical explanation for the advantage of Gaussian noise, and more generally of kurtosis--matched noise sources, in approximating diffusion semigroups at second order \cite{kloeden2012numerical}.

\section{Conclusion and Outlook}
\label{sec:conclusion}

In this paper, we developed a hyperfinite framework for score-based diffusion models using the tools of nonstandard stochastic analysis. Rather than treating discrete-time dynamics as approximations to a continuum limit, we studied stochastic evolution directly on a hyperfinite grid and showed how the familiar objects of diffusion theory arise through the standard-part map.
Our analysis demonstrates that several key structures underlying score-based generative modeling admit natural hyperfinite representations. In particular:

\begin{itemize}
\item \textbf{Generator and Fokker--Planck Structure.} We derived the internal infinitesimal generator associated with hyperfinite walk dynamics and established its correspondence with the classical Fokker--Planck equation after taking standard parts.

\item \textbf{Backward Mean and Reverse-Time Dynamics.} We obtained an explicit hyperfinite backward-mean identity and showed how the score term arises from the interaction between local density variation and the geometry of the internal grid transition. This yields a constructive derivation of the reverse drift and leads to a hyperfinite derivation of the reverse-time SDE.

\item \textbf{Score Matching and Generative Consistency.} We established that minimization of an internal score-matching objective recovers the score required by the reverse-time dynamics and used this connection to justify the convergence of the corresponding hyperfinite sampling procedure.

\item \textbf{Likelihood and Higher-Order Structure.} Under appropriate assumptions, we derived a hyperfinite Girsanov formula and related likelihood optimization to Fisher-divergence objectives. We also quantified the leading second-order consistency error of the hyperfinite dynamics, identifying the role played by the fourth moment of the increment distribution.
\end{itemize}

From a broader perspective, the results suggest that many constructions appearing in modern diffusion modeling can be formulated directly at the hyperfinite level prior to standard-part projection. This viewpoint provides a complementary perspective on the relationship between discrete algorithms and continuous-time diffusion theory, while retaining compatibility with the classical framework of Loeb spaces, stochastic integration, and martingale methods.

The theoretical framework established in this paper opens several fertile avenues for future research:
\begin{enumerate}
    \item \textbf{Heavy-Tailed SDEs and L\'evy-Flight Models:} While the present work focuses on Gaussian-like noise, the hyperfinite framework is naturally suited to the study of heavy-tailed and non-local processes. Building on the hyperfinite jump models of \cite{lindstrom2004hyperfinite} and the nonstandard stochastic calculus of \cite{albeverio2009nonstandard}, one could derive reverse-time identities for stable-distribution diffusion processes. This would provide a rigorous foundation for L\'evy-flight generative models in regimes where classical martingale machinery and standard density regularity often become intractable.
    
    \item \textbf{Numerical Analysis of Higher-Order Sampling:} Our identification of the numerical dispersion term $\frac{\kappa - 3}{24} a^2 \partial^4 p$ suggests a new frontier in the design of sampling kernels. By constructing internal noise increments that explicitly minimize higher-order Taylor residuals on the hyperfinite grid, it may be possible to develop sampling algorithms with order-of-convergence properties that surpass standard Euler-Maruyama benchmarks \cite{kloeden2012numerical}. This shifts the focus from purely empirical step-size tuning to the analytical design of the hyperfinite increment's spectral properties.
    
    \item \textbf{Path-Space Inference and Uncertainty Quantification:} The hyperfinite Girsanov identity offers a rigorous, white-box tool for analyzing inference in generative AI. The ability to calculate exact internal likelihood ratios between path measures provides a more granular framework for studying mode collapse, distributional shift, and out-of-distribution detection—phenomena that part of the deep learning community currently addresses via empirical heuristics rather than rigorous path-space analysis.
\end{enumerate}

We hope that the framework developed here helps clarify the connections between nonstandard stochastic analysis and modern generative modeling, and that it encourages further interaction between these two areas.

\section*{Declaration}
No potential conflict of interest was reported by the author(s).

\section*{Additional information}
\subsection*{Funding}
No funding was received for conducting this research.

\bibliographystyle{plain}
\bibliography{ref}

\appendix
\section{Mathematical preliminaries}
\label{sec:preliminaries}

This section collects the requisite nonstandard-analytic material. Our presentation follows the superstructure framework for rigorous handling of probability spaces and higher-order objects; for much of this material, the reader may refer to \cite{robinson1974non, stroyan1977introduction}. Readers familiar with the basic constructions in NSA may skip directly to Section~\ref{sec:grid-calculus}, reviewing only the remainder of this section and Appendix~\ref{sec:discrete-ito} as needed.

\subsection{Hyperreal field and ultrapower construction}
\label{sec:hyperreals}
We start with the definition of hyperreal numbers and some classical concepts of fundamental importance to this work.

\begin{definition}[Free Ultrafilter]
\label{def:free-ultrafilter}
Let $J$ be a nonempty set. A filter $\mathcal U$ on $J$ is a subset of $\mathcal P(J)$, the power set of $J$, satisfying the following properties:
\begin{enumerate}
\item \textbf{Properness:} $\varnothing \notin \mathcal U$.
\item \textbf{Finite intersection property:} If $A,B \in \mathcal U$, then $A \cap B \in \mathcal U$.
\item \textbf{Upward closure:} If $A \in \mathcal U$ and $A \subseteq B \subseteq J$, then $B \in \mathcal U$.
\end{enumerate}
The filter $\mathcal U$ is called an ultrafilter if it also satisfies:
\begin{enumerate}
\setcounter{enumi}{3}
\item \textbf{Maximality:} For every $A \subseteq J$, either $A \in \mathcal U$ or $J \setminus A \in \mathcal U$.
\end{enumerate}
An ultrafilter $\mathcal U$ is called a free ultrafilter if it also satisfies:
\begin{enumerate}
\setcounter{enumi}{4}
\item \textbf{Freeness/Nonprincipality:} $\mathcal U$ contains no finite subsets of $J$.
\end{enumerate}
\end{definition}
The existence of a free ultrafilter may be proved using Zorn's lemma \cite{stroyan1977introduction}. 

\begin{definition}[Ultrapower/Hyperreal Numbers]
Let $\mathcal U$ be a fixed free ultrafilter on $\mathbb N$.  
Let $\mathbb R^{\mathbb N}$ denote the set of real sequences. Define an equivalence relation $\sim$ on $\mathbb R^{\mathbb N}$ by declaring
\[
(x_n) \sim (y_n)
\quad\text{if and only if}\quad
\{n \in \mathbb N : x_n = y_n\} \in \mathcal U.
\]
The ultrapower of $\mathbb R$ is the quotient
\[
{}^\ast\mathbb R := \mathbb R^{\mathbb N} / \!\sim.
\]
Denote the equivalence class of a sequence $(x_n)$ by $[x_n]$.
\end{definition}

\begin{remark}
Arithmetic operations and the order on $\mathbb R$ lift coordinatewise to $\mathbb R^{\mathbb N}$ and descend to well-defined operations on ${}^\ast\mathbb R$ via
\[
[x_n] + [y_n] := [x_n + y_n],
\qquad
[x_n]\cdot [y_n] := [x_n y_n],
\]
and
\[
[x_n] < [y_n]
\quad\text{if and only if}\quad
\{n \in \mathbb N : x_n < y_n\} \in \mathcal U.
\]
The resulting structure ${}^\ast\mathbb R$ is a totally ordered field extending $\mathbb R$. We identify each real number $r \in \mathbb R$ with the equivalence class of the constant sequence $(r,r,r,\dots)$.
\end{remark}
Thus, we consider \(|x|<M\), for a hyperreal \(x=[x_n]\) and positive real \(M\), as equivalent to \(\{n : |x_n|<M\}\in\mathcal{U}\). 

\begin{definition}[Finite, Infinite, Infinitesimal]
An element $x\in{}^\ast\mathbb{R}$ is finite (or limited) if there exists $M\in\mathbb{R}_{>0}$ such that $|x|<M$; $x$ is infinitesimal if $|x|<r$ for every real $r>0$; $x$ is infinite if $|x|>M$ for every real $M>0$. We write $x\approx y$ when $x-y$ is infinitesimal, and say $x$ is infinitely close to $y$. For \(x,y\in {}^*\mathbb{R}^d\), we define \(x\approx y\) componentwise.
\end{definition}

\begin{proposition}[Standard Part Map]
\label{prop:standard-part}
Every finite $x\in{}^\ast\mathbb{R}$ has a unique nearest real number $\mathrm{st}(x)\in\mathbb{R}$, called its standard part, satisfying $x \approx \mathrm{st}(x)$. The map $\mathrm{st}:\{x\in{}^\ast\mathbb{R}: x \text{ finite}\}\to\mathbb{R}$ is a ring homomorphism onto $\mathbb{R}$ with kernel the set of infinitesimals.
\end{proposition}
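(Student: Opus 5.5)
We prove the Standard Part Map proposition by a supremum construction inside $\mathbb{R}$, using only the ordered-field structure of ${}^\ast\mathbb{R}$ described above and the completeness of $\mathbb{R}$.

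\textbf{Existence.} Let $x\in{}^\ast\mathbb{R}$ be finite, say $|x|<M$ for some real $M>0$. Consider the set of reals
\[
S=\{r\in\mathbb{R}: r\le x\}.
\]
It contains $-M$ and is bounded above by $M$. By completeness of $\mathbb{R}$ it has a supremum $s\in\mathbb{R}$.

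We show $x\approx s$. Fix a real $\varepsilon>0$.
\begin{itemize}
\item If $x\ge s+\varepsilon$, then $s+\varepsilon\in S$, which contradicts $s=\sup S$.
\item If $x\le s-\varepsilon$, then every $r\in S$ satisfies $r\le x\le s-\varepsilon$. So $s-\varepsilon$ is an upper bound of $S$ smaller than $s$, again a contradiction.
\end{itemize}
Hence $|x-s|<\varepsilon$ for every real $\varepsilon>0$, so $x-s$ is infinitesimal.

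\textbf{Uniqueness.} Suppose $s,s'\in\mathbb{R}$ both satisfy $s\approx x\approx s'$. Then $s-s'$ is a real number that is infinitesimal. The only real infinitesimal is $0$, since a nonzero real $r$ fails $|r|<|r|/2$. So $s=s'$.

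The \emph{nearest real} formulation follows from the same fact. For any real $r\neq s$, the quantity $|x-r|$ differs from the positive real $|s-r|$ by an infinitesimal. Therefore $|x-r|>|x-s|$.

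\textbf{Homomorphism.} Let $x$ and $y$ be finite, and write $x=\mathrm{st}(x)+\eta$ and $y=\mathrm{st}(y)+\zeta$ with $\eta,\zeta$ infinitesimal.

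For sums, $x+y=\mathrm{st}(x)+\mathrm{st}(y)+(\eta+\zeta)$. A sum of two infinitesimals is infinitesimal (use $\varepsilon/2$), so by uniqueness $\mathrm{st}(x+y)=\mathrm{st}(x)+\mathrm{st}(y)$.

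For products,
\[
xy=\mathrm{st}(x)\,\mathrm{st}(y)+\mathrm{st}(x)\,\zeta+\mathrm{st}(y)\,\eta+\eta\zeta .
\]
The last three terms are infinitesimal, because a finite number times an infinitesimal is infinitesimal: if $|u|<M$ and $|\eta|<\varepsilon/M$, then $|u\eta|<\varepsilon$. Hence $\mathrm{st}(xy)=\mathrm{st}(x)\,\mathrm{st}(y)$.

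The map sends $1$ to $1$ and restricts to the identity on $\mathbb{R}$, so it is onto $\mathbb{R}$. Its kernel is $\{x:\mathrm{st}(x)=0\}=\{x: x\approx 0\}$, which is exactly the set of infinitesimals.

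The only step needing care is existence, and it rests entirely on the completeness of $\mathbb{R}$ through the supremum of $S$. No ultrafilter argument is required beyond the fact that ${}^\ast\mathbb{R}$ is an ordered field extending $\mathbb{R}$.
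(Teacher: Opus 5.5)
Your proof is correct and complete, but it is organized differently from the paper's.

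\textbf{The paper's route.} The paper works with a representing sequence $x=[x_n]$ and the ultrafilter. It defines $L=\sup\{r:\{n:x_n<r\}\notin\mathcal U\}$ and $U=\inf\{r:\{n:x_n>r\}\notin\mathcal U\}$. It then asserts, without details, that $L=U=\ell$ and that $\{n:|x_n-\ell|<\epsilon\}\in\mathcal U$ for every real $\epsilon>0$. Uniqueness and the homomorphism property are attributed to ``properties of $\mathcal U$''.

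\textbf{How the two relate.} The condition $\{n:x_n<r\}\notin\mathcal U$ is equivalent to $r\le x$ in ${}^\ast\mathbb{R}$, so the paper's $L$ is exactly your $\sup S$. Your two-case contradiction replaces the paper's $L=U$ step. Your infinitesimal arithmetic replaces the implicit fact that $\mathcal U$-limits commute with sums and products.

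\textbf{What your route buys.}
\begin{itemize}
\item It uses only that ${}^\ast\mathbb{R}$ is a totally ordered field extending $\mathbb{R}$, together with completeness of $\mathbb{R}$. So it holds in any nonstandard extension, not just this ultrapower.
\item It never touches representatives. The paper's claim that finiteness makes $(x_n)$ bounded holds only on a $\mathcal U$-large set, so a change of representative is tacitly needed there.
\item It carries out the verifications the paper leaves as ``one can show''.
\end{itemize}

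\textbf{What the paper's route buys.} It gives a concrete description of $\mathrm{st}(x)$ as the $\mathcal U$-limit of $(x_n)$, tying the standard part directly to the construction.

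\textbf{One implicit step.} You do not state that the finite hyperreals form a subring. This is needed for $\mathrm{st}(x+y)$ and $\mathrm{st}(xy)$ to be defined, and it is a one-line addition: $|x|<M$ and $|y|<M'$ give $|x+y|<M+M'$ and $|xy|<MM'$.
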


\begin{proof}
Let $x=[x_n]$ be represented by a real sequence $(x_n)$. Finiteness implies $(x_n)$ is bounded in $\mathbb{R}$. Let $L = \sup \{ r \in \mathbb{R} : \{n : x_n < r\} \notin \mathcal{U} \}$ and $U = \inf \{ r \in \mathbb{R} : \{n : x_n > r\} \notin \mathcal{U} \}$. One can show $L = U \eqqcolon \ell$ and that $\{n : |x_n - \ell| < \epsilon\} \in \mathcal{U}$ for every standard $\epsilon > 0$. Define $\mathrm{st}(x)=\ell$. Uniqueness and the homomorphism properties follow from the properties of $\mathcal{U}$.
\end{proof}
\begin{remark}
    In certain contexts, it is less cumbersome, and also common in the NSA literature, to use the notation \({}^\circ x\) for \(\mathrm{st}(x)\); we will use both.
\end{remark}

\subsection{Superstructure framework, internal objects, hyperfinite sets, and transfer}
\label{sec:superstructure}

Constructing hyperreal numbers is a good start, but that does not take us very far in terms of analysis. For rigorous handling of relations, functions, probability spaces, and higher-order objects in general, we work within the superstructure framework.

\begin{definition}[Superstructure]
Let $V_0(\mathbb{R}) = \mathbb{R}$. Define recursively:
\[
V_{n+1}(\mathbb{R}) = V_n(\mathbb{R}) \cup \mathcal{P}(V_n(\mathbb{R})).
\]
The superstructure over $\mathbb{R}$ is $V(\mathbb{R}) = \bigcup_{n=0}^\infty V_n(\mathbb{R})$. Elements of $V(\mathbb{R})$ are called standard objects.
\end{definition}

\begin{definition}[Bounded Ultrapower]
Let $\mathcal{U}$ be a nonprincipal ultrafilter on $\mathbb{N}$. The bounded ultrapower $V({}^\ast\mathbb{R})$ is constructed as follows. For each $n$, define:
\[
V_n({}^\ast\mathbb{R}) = \left\{ [(A_m)_{m\in\mathbb{N}}] : A_m \in V_n(\mathbb{R}) \right\},
\]
where $[A_n] = [B_n]$ if $\{n : A_n = B_n\} \in \mathcal{U}$. The bounded ultrapower is $V({}^\ast\mathbb{R}) = \bigcup_{n=0}^\infty V_n({}^\ast\mathbb{R})$.
\end{definition}

The hyperreal field ${}^\ast\mathbb{R}$ is the set $V_1({}^\ast\mathbb{R})$ in our superstructure framework identified with the image of sequences of real numbers.

\begin{definition}[Internal Objects]
An object $A \in V({}^\ast\mathbb{R})$ is internal if $A \in V_n({}^\ast\mathbb{R})$ for some $n$. Objects in $V({}^\ast\mathbb{R})$ that are not internal are called external.
\end{definition}

To be more specific, we have:
\begin{definition}[Internal Sets]
A subset $A\subseteq{}^\ast\mathbb{R}$ is internal if $A \in V_2({}^\ast\mathbb{R})$, i.e., there exists a sequence of subsets $(A_n)_{n\in\mathbb{N}}$ with $A_n\subseteq\mathbb{R}$ such that $A=[A_n]:=\{ [x_n] \in {}^\ast\mathbb{R} : \{n : x_n \in A_n\} \in \mathcal{U} \}$. A subset of ${}^\ast\mathbb{R}$ that is not internal is called external.
\end{definition}

\begin{definition}[Internal Functions]
A function $f: A \to B$ between internal sets is internal if its graph $\{(x, f(x)) : x \in A\}$ is an internal subset of $A \times B$. Equivalently, with \(A=[A_n]\) and \(B=[B_n]\), $f = [f_n]$ where each $f_n: A_n \to B_n$ is a function between standard sets.
\end{definition}

Before proceeding, it is necessary to emphasize a subtle, yet critical, nuance.
\begin{remark}[Notation for internal extensions]
For the sake of readability, we frequently omit the asterisk prefix ${}^\ast$ for standard functions and operations when their use on non-standard domains is clear from context. For instance, we write $\|x\|$ for the internal extension of the Euclidean norm ${}^\ast\|\cdot\|$, and for a standard test function $\varphi$, we may write $\varphi(x)$ instead of ${}^\ast\varphi(x)$ for $x$ in a nonstandard domain. Furthermore, in much of our presentation, derivatives would also need to be understood in a proper internal sense even if we use the ``standard" notation. All of these should be reasonably clear from context.
\end{remark}

\begin{definition}[Hyperfinite Sets]
An internal set $H\subset{}^\ast\mathbb{R}$ is hyperfinite if there exists a hypernatural number $N \in {}^\ast\mathbb{N}$ and an internal bijection $f: \{1, 2, \dots, N\} \to H$. Equivalently, it is of the form $H=[\{a_{n,1}, a_{n,2}, \dots, a_{n, N_n}\}]$, where $N_n\in\mathbb{N}$, $N=[N_n]\in {}^\ast\mathbb{N}$, and the \(a_{n,i}\) are standard reals \cite{keisler1984infinitesimal}.
\end{definition}

\begin{example}[Hyperfinite Time Grid]
Let $N\in {}^\ast\mathbb{N}$ be an infinite hypernatural. Define the time step $\delta t=1/N\in {}^\ast\mathbb{R}$ (a positive infinitesimal) and the hyperfinite time grid
\[
\mathbb{T}_{\delta t}=\{0, \delta t, 2\delta t, \dots, 1\}.
\]
This is a hyperfinite internal subset of ${}^\ast\mathbb{R}$ with internal cardinality $N+1$. It can be represented as $\mathbb{T}_{\delta t} = [\{0, 1/N_n, 2/N_n, \dots, 1\}]$ for any sequence $(N_n)$ such that $[N_n] = N$.
\end{example}

The following theorem highlights the power of NSA: it permits carrying out relatively simple manipulations in the extended nonstandard world and transferring them directly to their standard counterparts.

\begin{theorem}[Full Transfer Principle \cite{ loeb1979introduction,robinson1974non}]
\label{thm:full-transfer}
If $\varphi(x_1, \dots, x_k)$ is a bounded-quantifier formula in the superstructure language of set theory, and $A_1, \dots, A_k$ are internal objects with \(A_i=[A_{i,n}]\), then $\varphi(A_1, \dots, A_k)$ is true in $V({}^\ast\mathbb{R})$ if and only if
\[
\{ n \in \mathbb{N} : \varphi(A_{1,n}, \dots, A_{k,n}) \text{ is true in } V(\mathbb{R}) \} \in \mathcal{U}.
\]
\end{theorem}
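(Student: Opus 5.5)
The proof is Łoś's theorem for the bounded ultrapower, established by induction on the construction of the bounded-quantifier formula $\varphi$. First I will fix the semantics. Each internal object is a class $A=[A_n]$ with all $A_n\in V_m(\mathbb{R})$ for a single level $m$. Membership is interpreted as $[A_n]\in[B_n]$ iff $\{n: A_n\in B_n\}\in\mathcal U$, and equality as $[A_n]=[B_n]$ iff $\{n:A_n=B_n\}\in\mathcal U$. Before the induction I will verify that these relations are well defined on equivalence classes. This uses only closure of $\mathcal U$ under finite intersections and upward closure. With this, the atomic cases $x_i\in x_j$ and $x_i=x_j$ hold by definition.

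For the connectives, conjunction follows from the filter properties: $S_1\cap S_2\in\mathcal U$ iff $S_1\in\mathcal U$ and $S_2\in\mathcal U$. Negation is where maximality is essential: $\{n:\neg\varphi_n\}$ is the complement of $\{n:\varphi_n\}$, so exactly one of the two lies in $\mathcal U$. Properness keeps these two alternatives exclusive. All other connectives are expressible through $\neg$ and $\wedge$, and $\forall$ through $\neg\exists\neg$.

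The main step is the bounded existential quantifier $\psi\equiv(\exists y\in A_1)\,\varphi(y,A_1,\dots,A_k)$. The easy direction: if some internal $B=[B_n]\in A_1$ satisfies $\varphi$, then by the induction hypothesis the set $\{n:B_n\in A_{1,n}\wedge\varphi(B_n,\dots)\}$ lies in $\mathcal U$. It is contained in $\{n:\psi_n\}$, so upward closure finishes this direction. For the converse, suppose $S=\{n:\exists y\in A_{1,n}\,\varphi(y,A_{1,n},\dots)\}\in\mathcal U$. Using the axiom of choice, I pick a witness $B_n$ for $n\in S$ and an arbitrary element of $V_{m-1}(\mathbb{R})$ otherwise. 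Because $A_{1,n}\in V_m(\mathbb{R})$, the elements $B_n$ all lie in $V_{m-1}(\mathbb{R})$, so $B=[B_n]$ is internal. This uniform-level requirement is exactly why quantifiers must be bounded by internal sets. By the induction hypothesis, $B\in A_1$ and $\varphi(B,\dots)$ hold in $V({}^\ast\mathbb{R})$.

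I expect this witness construction to be the only delicate point. One must check that the bound on quantifiers keeps the witnesses within a fixed level of the superstructure, since otherwise $[B_n]$ need not be an element of $V({}^\ast\mathbb{R})$. One must also check that the internal universe being quantified over coincides with the elements of $A_1$ as computed in $V({}^\ast\mathbb{R})$. Here I will note that every element of an internal set is internal: if $x\in[A_n]$ with $A_n\in V_m(\mathbb{R})$, then $x=[x_n]$ with $x_n\in A_n\subseteq V_{m-1}(\mathbb{R})$ on a $\mathcal U$-large set. This transitivity ensures that bounded quantifiers range only over internal objects, which closes the induction and proves the theorem.
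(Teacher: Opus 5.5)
Your proof is correct. It is the standard \L{}o\'s induction over bounded formulas in the bounded ultrapower: well-definedness of $\in_{\mathcal U}$ and $=$ comes from the filter axioms, negation from maximality, and the bounded existential step is closed by choosing witnesses $B_n\in A_{1,n}$, which automatically lie in the fixed level $V_{m-1}(\mathbb{R})$ so that $[B_n]$ is internal. The paper gives no proof of its own here and simply cites Loeb and Robinson, whose texts prove the result by exactly this induction.
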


\begin{remark}
Hyperfinite sets behave in many ways like finite sets: one can sum over them, define internal functions on them, and apply the transfer principle to finite combinatorial facts. This allows us to perform discrete algebra on an infinitesimally fine grid, which is the cornerstone of our approach. It is important to note, however, that it does not automatically transfer statements involving quantification over all subsets (which is a second-order concept), hence the critical distinction between internal and external sets \cite{loeb1979introduction}.
\end{remark}

\subsection{Hyperfinite counting measure and Loeb measure}
\label{sec:loeb}

To perform probability theory on hyperfinite sample spaces, we start with the internal counting measure and convert it into a standard probability measure via the Loeb construction.

\begin{definition}[Internal Counting Measure]
Let $H$ be a hyperfinite set. The internal counting measure $\nu_{\mathrm{int}}$ is the finitely-additive measure defined on the internal algebra $\mathcal{I}(H)$ of all internal subsets of $H$ by $\nu_{\mathrm{int}}(A) = |A|$ (the internal cardinality).
\end{definition}

\begin{definition}[Internal Probability Measure]
If $H$ is a nonempty hyperfinite set, the normalized internal counting measure $\mu_{\mathrm{int}}$ is defined on $\mathcal{I}(H)$ by
\[
\mu_{\mathrm{int}}(A) = \frac{|A|}{|H|},\qquad \text{for } A \in \mathcal{I}(H).
\]
This is an internal, finitely additive probability measure.
\end{definition}

\begin{theorem}[Loeb Measure \cite{loeb1975conversion}]
\label{thm:loeb}
Let $(H, \mathcal{I}(H), \mu_{\mathrm{int}})$ be an internal hyperfinite probability space. There exists a unique standard $\sigma$-additive probability measure $\mu_L$ on the $\sigma$-algebra $\sigma(\mathcal{I}(H))$ generated by $\mathcal{I}(H)$ such that for every internal set $A \in \mathcal{I}(H)$,
\[
\mu_L(A) = \mathrm{st}\big(\mu_{\mathrm{int}}(A)\big).
\]
The measure $\mu_L$ is called the Loeb measure associated to $\mu_{\mathrm{int}}$.
\end{theorem}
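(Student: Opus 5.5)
The plan is to follow Loeb's original construction, i.e.\ the Carath\'eodory extension combined with $\aleph_1$-saturation (countable saturation) of the bounded ultrapower $V({}^\ast\mathbb{R})$. First, define the set function $\lambda:\mathcal{I}(H)\to[0,1]$ by $\lambda(A)=\mathrm{st}(\mu_{\mathrm{int}}(A))$. This is well defined because $\mu_{\mathrm{int}}(A)\in[0,1]$ is finite, so Proposition~\ref{prop:standard-part} applies. Since $\mathrm{st}$ is additive on finite hyperreals (it is a ring homomorphism), $\lambda$ is a finitely additive standard probability on the algebra $\mathcal{I}(H)$. The algebra property of $\mathcal{I}(H)$ follows from transfer (Theorem~\ref{thm:full-transfer}): finite unions and complements of internal subsets of $H$ are internal.

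The key step is to upgrade finite additivity to countable additivity on the algebra $\mathcal{I}(H)$. I will show the following: if $(A_n)_{n\in\mathbb{N}}$ are pairwise disjoint internal sets whose union $A$ is also internal, then all but finitely many $A_n$ are empty. Suppose instead that infinitely many were nonempty. Then the sets $B_n=A\setminus(A_1\cup\dots\cup A_n)$ form a decreasing sequence of internal sets, each nonempty. By countable saturation their intersection is nonempty, which contradicts $\bigcup_n A_n=A$. So countable additivity on $\mathcal{I}(H)$ reduces to finite additivity, and this is the main obstacle of the proof. Saturation must be justified for the ultrapower over a free ultrafilter on $\mathbb{N}$. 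I would prove it directly by a diagonal argument. Write $B_n=[B_{n,m}]$, and let $U_n=\{m\ge n: B_{1,m}\supseteq\dots\supseteq B_{n,m}\supseteq\cdots,\ B_{n,m}\neq\varnothing\}$, with the index sets suitably intersected so that they are decreasing and belong to $\mathcal U$. For each $m$, let $k(m)$ be the largest $n\le m$ with $m\in U_n$, and pick $x_m\in B_{k(m),m}$. Then $[x_m]$ lies in every $B_n$.

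With countable additivity established, the Carath\'eodory extension theorem gives a $\sigma$-additive probability $\mu_L$ on $\sigma(\mathcal{I}(H))$ that agrees with $\lambda$ on internal sets. Uniqueness follows from the $\pi$--$\lambda$ (Dynkin) theorem: $\mathcal{I}(H)$ is closed under finite intersections and generates the $\sigma$-algebra, so any two probability measures that agree on it agree on $\sigma(\mathcal{I}(H))$. This completes the plan. The only delicate point is the saturation lemma, and everything else is standard measure theory applied to the external set function $\lambda$.
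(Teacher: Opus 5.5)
Your proposal is correct and follows the same route as the paper's sketch: a Carath\'eodory (outer-measure) extension of $\mathrm{st}\circ\mu_{\mathrm{int}}$ from the internal algebra $\mathcal{I}(H)$. It is in fact more complete. The paper's sketch leaves implicit both the countable additivity of $\mathrm{st}\circ\mu_{\mathrm{int}}$ on $\mathcal{I}(H)$ (needed for the outer measure to agree with it on internal sets) and uniqueness, and your countable-saturation lemma and $\pi$--$\lambda$ argument supply exactly these. One small fix: in $U_n$ the chain condition should stop at $B_{n,m}$, or you can replace $B_{n,m}$ by $B_{1,m}\cap\dots\cap B_{n,m}$. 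Requiring the whole infinite chain would give a countable intersection of $\mathcal U$-sets, which need not lie in $\mathcal U$.
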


\begin{proof}[Construction Sketch]
For each internal $A \in \mathcal{I}(H)$, define $\mu_L^-(A) = \mathrm{st}(\mu_{\mathrm{int}}(A))$. For arbitrary $B \subseteq H$, define the outer measure:
\[
\mu_L^\ast(B) = \inf\left\{ \sum_{n=1}^\infty \mu_L^-(A_n) : B \subseteq \bigcup_{n=1}^\infty A_n,\ A_n \in \mathcal{I}(H) \right\}.
\]
A set $B$ is Loeb measurable if for every $\epsilon > 0$, there exists $A \in \mathcal{I}(H)$ such that $\mu_L^\ast(B \triangle A) < \epsilon$. The collection of Loeb measurable sets forms a sigma-algebra $\mathcal{L}$, and $\mu_L = \mu_L^\ast|_{\mathcal{L}}$ is the Loeb measure. This description is equivalent to the standard Carathéodory construction of Loeb measure \cite{loeb1975conversion, loeb1979introduction}.
\end{proof}

Note that $(H, \mathcal{L}, \mu_L)$ is referred to as the Loeb space.

\begin{corollary}[Pushforward via Standard Part]
\label{cor:pushforward}
Let $X: H \to {}^\ast\mathbb{R}$ be an internal function on a hyperfinite probability space $(H, \mu_{\mathrm{int}})$. If $X$ is finite except on a Loeb-null set, then the pushforward of the Loeb measure $\mu_L$ by the map $\omega \mapsto \mathrm{st}(X(\omega))$ defines a standard probability measure on $\mathbb{R}$, which is the distribution of the standard part of $X$ \cite{albeverio2009nonstandard, anderson1976non}.
\end{corollary}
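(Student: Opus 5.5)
The statement to prove is Corollary~\ref{cor:pushforward}: the map $\omega\mapsto \mathrm{st}(X(\omega))$, defined off a Loeb-null set, pushes $\mu_L$ forward to a standard probability measure on $\mathbb{R}$. The plan is to reduce this to a measurability statement and then use the $\sigma$-additivity of $\mu_L$ from Theorem~\ref{thm:loeb}.

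\textbf{Setting up the map.} Let $F=\{\omega: X(\omega)\text{ finite}\}$. I would write it as
\[
F=\bigcup_{k\in\mathbb N}\{\omega:|X(\omega)|\le k\}.
\]
Each set in this union is internal, since $X$ is internal and the condition is first-order. Hence $F\in\sigma(\mathcal I(H))$, and by hypothesis $\mu_L(F)=1$. Define $Y=\mathrm{st}\circ X$ on $F$, and set $Y=0$ on the complement. This uses Proposition~\ref{prop:standard-part}.

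\textbf{Key step: Loeb measurability of $Y$.} It suffices to show $Y^{-1}((-\infty,r])\in\mathcal L$ for every real $r$. For $\omega\in F$ I would use the identity
\[
\mathrm{st}(X(\omega))\le r \iff X(\omega)\le r+1/n \text{ for all } n\in\mathbb N .
\]
The forward implication holds because $X(\omega)$ is infinitely close to $\mathrm{st}(X(\omega))$. The converse holds because the standard part is real and below $r+1/n$ for every standard $n$.

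This gives
\[
Y^{-1}((-\infty,r])\cap F=F\cap\bigcap_{n}\{\omega: X(\omega)\le r+1/n\}.
\]
Each $\{X\le r+1/n\}$ is internal, so this intersection is a countable Boolean combination of internal sets and lies in $\sigma(\mathcal I(H))\subseteq\mathcal L$. The remaining piece, lying in $F^c$, is either empty or all of $F^c$, and in either case is measurable. The half-lines generate the Borel $\sigma$-algebra, so $Y$ is Borel-measurable with respect to $\mathcal L$.

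\textbf{Conclusion.} Define $\nu(B)=\mu_L(Y^{-1}(B))$ for Borel $B$. Countable additivity of $\nu$ is inherited from the $\sigma$-additivity of $\mu_L$ given by Theorem~\ref{thm:loeb}, because preimages preserve disjoint countable unions. We also have $\nu(\mathbb R)=\mu_L(H)=\mathrm{st}(1)=1$, so $\nu$ is a probability measure. The arbitrary value assigned on the null set $F^c$ does not affect $\nu$, which is therefore the distribution of the standard part of $X$.

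\textbf{Main obstacle.} The only delicate point is the measurability step. The sets $\{\mathrm{st}\,X\le r\}$ and $F$ are typically external, so they cannot be handled by transfer. The countable-intersection and countable-union representations, with the index running over standard $n$ and $k$ only, are exactly what place them in the Loeb $\sigma$-algebra.
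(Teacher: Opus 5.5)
Your proof is correct. The paper gives no argument of its own for this corollary; it only cites Anderson and Albeverio et al. What you wrote is the standard proof from those sources: you express $F$ and $\{\mathrm{st}\,X\le r\}\cap F$ as countable unions and intersections of internal sets indexed by standard $k,n$, then push forward the $\sigma$-additive measure $\mu_L$ from Theorem~\ref{thm:loeb}. Your construction also shows that $\mathrm{st}\circ X$, with any fixed value on $F^c$, is already measurable with respect to $\sigma(\mathcal{I}(H))$, so the completion $\mathcal{L}$ is not needed.
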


\subsection{Nonstandard regularity classes}

We now define differentiability and smoothness directly in the nonstandard setting.

\begin{definition}[$S$-continuity]
An internal function $f : {}^\ast\mathbb{R}^d \to {}^\ast\mathbb{R}$ is said to be $S$-continuous if
\[
x \approx y \quad \Rightarrow \quad f(x) \approx f(y),
\]
for all finite $x,y \in {}^\ast\mathbb{R}^d$ \cite{keisler1984infinitesimal,stroyan1977introduction}.
\end{definition}

\begin{definition}[$S$-differentiability]
An internal function $f : {}^\ast\mathbb{R}^d \to {}^\ast\mathbb{R}$ is $S$-differentiable if there exists an internal function
\[
Df : {}^\ast\mathbb{R}^d \to {}^\ast\mathbb{R}^d
\]
(or \(\nabla f\)) such that for all finite $x \in {}^\ast\mathbb{R}^d$ and all infinitesimal $h$,
\[
f(x+h) - f(x) = Df(x) \cdot h + \epsilon  \|h\| \quad \text{where } \epsilon \approx 0.
\]
\end{definition}

\begin{definition}[$S$-$C^k$ functions]
An internal function $f : {}^\ast\mathbb{R}^d \to {}^\ast\mathbb{R}$ is said to be $S$-$C^k$ if for all multi-indices $|\alpha|\le k$, the internal partial derivatives $\partial^\alpha f$ exist, take finite values, and are $S$-continuous for all finite $x \in {}^\ast\mathbb{R}^d$.
\end{definition}

\subsubsection{Relation to standard smoothness}

\begin{proposition}[Nonstandard characterization of $C^k$ regularity]
\label{prop:Sk-transfer-refined}
Let $f : \mathbb{R}^d \to \mathbb{R}$ be a standard function and $k \ge 1$.
\begin{enumerate}
    \item If $f \in C^k(\mathbb{R}^d)$, then for every multi-index $\alpha$ with $|\alpha|\le k$, 
    the internal derivative $\partial^\alpha({}^\ast f)$ exists and is $S$-continuous 
    at every finite point $x \in {}^\ast\mathbb{R}^d$.
    
    \item Conversely, if for every $|\alpha|\le k$ the internal derivative 
    $\partial^\alpha({}^\ast f)$ exists and is $S$-continuous at every finite point 
    $x \in {}^\ast\mathbb{R}^d$, then the standard function $f$ is in $C^k(\mathbb{R}^d)$.
\end{enumerate}
Moreover, for all multi-indices $|\alpha|\le k$ and all finite 
$x \in {}^\ast\mathbb{R}^d$, the following shadow relation holds:
\[
\partial^\alpha({}^\ast f)(x) \approx (\partial^\alpha f)(\operatorname{st}(x)).
\]
\end{proposition}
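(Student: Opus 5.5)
The plan is to prove both directions through the ultrapower representation together with transfer and the standard $\varepsilon$--$\delta$ (nonstandard) characterization of continuity. A cleaner route than unpacking sequences is available: for a standard $C^k$ function, $\partial^\alpha({}^\ast f)$ coincides with ${}^\ast(\partial^\alpha f)$. This holds because the statement ``for all $x\in\mathbb{R}^d$, the limit of difference quotients of $\partial^{\beta} f$ in direction $\mathbf{e}_i$ at $x$ equals $\partial^{\beta+\mathbf{e}_i}f(x)$'' is a bounded-quantifier sentence, so it transfers by Theorem \ref{thm:full-transfer}. Proceeding by induction on $|\alpha|$, we get that the internal derivative $\partial^\alpha({}^\ast f)$ exists on all of ${}^\ast\mathbb{R}^d$ and equals ${}^\ast(\partial^\alpha f)$.

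For part 1, it then suffices to show that ${}^\ast g$ is $S$-continuous at finite points whenever $g=\partial^\alpha f$ is standard and continuous. Let $x$ be finite and set $x_0=\operatorname{st}(x)\in\mathbb{R}^d$. Fix a real $\varepsilon>0$. Continuity of $g$ at $x_0$ gives a real $\eta>0$ with $\|y-x_0\|<\eta\Rightarrow|g(y)-g(x_0)|<\varepsilon$, and this statement transfers to all $y\in{}^\ast\mathbb{R}^d$. Any $y\approx x$ satisfies $\|y-x_0\|<\eta$, so $|{}^\ast g(y)-g(x_0)|<\varepsilon$. Since $\varepsilon$ is arbitrary, ${}^\ast g(y)\approx g(x_0)$, and the same argument gives ${}^\ast g(x)\approx g(x_0)$. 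This yields both the $S$-continuity and the shadow relation $\partial^\alpha({}^\ast f)(x)\approx(\partial^\alpha f)(\operatorname{st}(x))$. Finiteness of these values is immediate, since $g(x_0)$ is real.

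For part 2, I reverse the argument. Internal existence of $\partial^\alpha({}^\ast f)$ on ${}^\ast\mathbb{R}^d$ means, by transfer of the sentence ``for all $x$ the directional limit exists'', that the standard derivative $\partial^\alpha f$ exists on $\mathbb{R}^d$ and that its extension is the internal derivative. This step is done inductively in $|\alpha|$. Continuity of $\partial^\alpha f$ at a standard $x_0$ then follows from the nonstandard criterion: $S$-continuity at $x_0$ says $y\approx x_0\Rightarrow{}^\ast g(y)\approx g(x_0)$. Suppose continuity failed. Then there would be a real $\varepsilon$ such that for every real $\eta$ some $y$ violates it, and transferring this with $\eta$ infinitesimal produces a $y\approx x_0$ with $|{}^\ast g(y)-g(x_0)|\ge\varepsilon$, a contradiction. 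Hence $f\in C^k$.

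The main obstacle is the identification in part 2. The hypothesis is phrased in terms of ``the internal derivative $\partial^\alpha({}^\ast f)$ exists'', and one must argue that this is equivalent to a transferred first-order statement about $f$. I plan to interpret ``exists at every finite point'' as existence on all of ${}^\ast\mathbb{R}^d$, or at least at every standard point, and to use downward transfer at standard points only. Existence at standard points already gives the standard derivative there, by transferring the limit definition restricted to the standard parameter $x_0$. The remaining steps are routine once this is in place.
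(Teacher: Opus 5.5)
Your proof is correct. The paper gives no proof of this proposition: it is stated as background, with the reader pointed to \cite{robinson1974non, stroyan1977introduction}. So there is no paper route to compare against, and your argument is the standard one from those sources. You identify $\partial^\alpha({}^\ast f)$ with ${}^\ast(\partial^\alpha f)$ by transfer, then apply the monad characterization of continuity at $\operatorname{st}(x)$ in each direction.

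The obstacle you flag in part 2 is not a real one.
\begin{enumerate}
\item Standard points are finite, so the hypothesis already gives existence of the internal derivative at every $x_0\in\mathbb{R}^d$.
\item Downward transfer at the standard parameter $x_0$ then gives existence of the standard partial derivative of $f$ at $x_0$.
\item Upward transfer of ``for all $x\in\mathbb{R}^d$ the difference quotients converge to $\partial_i f(x)$'' gives existence of the internal derivative on all of ${}^\ast\mathbb{R}^d$, with $\partial_i({}^\ast f)={}^\ast(\partial_i f)$. This is exactly what your induction on $|\alpha|$ needs.
\end{enumerate}
Part 2 therefore uses the $S$-continuity hypothesis only at standard points. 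Under the hypotheses of part 2, the shadow relation follows by applying part 1 once $f\in C^k$ has been established.
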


\begin{remark}
The distinction between continuity and uniform continuity is elegantly captured here: 
continuity of a standard function on $\mathbb{R}^d$ corresponds to $S$-continuity of its 
extension at all finite points, whereas uniform continuity on $\mathbb{R}^d$ 
corresponds to $S$-continuity at all points (including infinite ones) of ${}^\ast\mathbb{R}^d$ \cite{robinson1974non, stroyan1977introduction}.
\end{remark}

\subsection{Grid functions and grid calculus}
\label{sec:grid-calculus}

For later use, we introduce hyperfinite spatial grids, grid functions, and the associated discrete calculus. These objects provide an infinitesimal discretization of $\mathbb{R}^d$ on which differential operators are realized as finite difference operators. The framework is entirely internal and amenable to transfer \cite{keisler1984infinitesimal}, following the approach of treating SDEs via infinitesimal discretizations as developed in \cite{benci2008elementary}; most of these concepts may also be found in \cite{bottazzi2019grid}.

\subsubsection{Hyperfinite spatial grids}

We saw an example of a hyperfinite time grid above, but now proceed to define a slightly more elaborate (spatial) grid. Let $N' \in {}^\ast\mathbb{N}$ be an infinite hypernatural and define the spatial mesh size
\[
\delta x = \frac{1}{N'} \in {}^\ast\mathbb{R}.
\]
Fix $d \in \mathbb{N}$. The associated $d$-dimensional hyperfinite grid is
\[
\mathbb{G}_{\delta x} = (\delta x \, {}^\ast\mathbb{Z})^d
= \{ (k_1 \delta x, \dots, k_d \delta x) : k_i \in {}^\ast\mathbb{Z} \}.
\]
This is an internal subset of ${}^\ast\mathbb{R}^d$. In applications, we could restrict attention to an internal hyperfinite subset
\(
\mathbb{G} \subset \mathbb{G}_{\delta x}.
\)
We assume $N'$ and $N$ are chosen such that the ratio $(\delta x)^2 / \delta t$ is a finite (limited), noninfinitesimal hyperreal. (In our later developments, this parabolic grid scaling ensures that the discrete finite difference diffusion operator maps cleanly to its standard continuous counterpart upon taking the standard part.) 

\subsubsection{Grid functions}

\begin{definition}[Grid functions]
A grid function is an internal function
\[
u : \mathbb{G} \to {}^\ast\mathbb{R},
\]
where $\mathbb{G} \subset {}^\ast\mathbb{R}^d$ is a hyperfinite grid. The space of all grid functions on $\mathbb{G}$ is denoted by $\mathcal{G}(\mathbb{G})$.
\end{definition}

Grid functions are purely internal objects and may be viewed as infinitesimal discretizations of standard functions on $\mathbb{R}^d$.

\subsubsection{Grid difference operators}

Let $\mathbf{e}_1, \dots, \mathbf{e}_d$ denote the standard basis vectors in $\mathbb{R}^d$, and
$\mathbb{G} \subset \mathbb{G}' \subset {}^\ast\mathbb{R}^d$ be hyperfinite grids. We typically assume $\mathbb{G}'$ is large enough to contain all stencils required by the differential operators acting on functions in $\mathcal{G}(\mathbb{G}')$.

\begin{definition}[First-order grid derivatives]
For a grid function $u \in \mathcal{G}(\mathbb{G}')$, the forward and backward grid derivatives are the internal functions in $\mathcal{G}(\mathbb{G})$ defined by
\[
\Delta_i^+ u(x) = \frac{u(x + \delta x \mathbf{e}_i) - u(x)}{\delta x}, \quad
\Delta_i^- u(x) = \frac{u(x) - u(x - \delta x \mathbf{e}_i)}{\delta x}
\]
for all $x \in \mathbb{G}$.
\end{definition}

\begin{definition}[Higher-order grid derivatives]
Let $\alpha = (\alpha_1, \dots, \alpha_d) \in \mathbb{N}^d$ be a multi-index. The $\alpha$-th order forward grid derivative is defined recursively by
\[
\Delta^{\alpha} u
= (\Delta_1^+)^{\alpha_1} \cdots (\Delta_d^+)^{\alpha_d} u,
\]
provided $\mathbb{G}'$ contains the required $k$-step neighborhood of $\mathbb{G}$ for $|\alpha|=k$, where the order of the derivative is $|\alpha| = \sum_{i=1}^d \alpha_i$.
\end{definition}

All grid derivatives are internal operators on $\mathcal{G}(\mathbb{G}')$.

\subsubsection{Grid derivatives and standard derivatives}
\label{sss:grid}
\begin{proposition}[Consistency of grid derivatives] \label{prop:grid-derivative-consistency} Let $f \in C^{k+1}(\mathbb{R}^d)$ and let $u = {}^\ast f|_{\mathbb{G'}}$ be the restriction to a hyperfinite grid $\mathbb{G'}$ with spacing $\delta x$ as above. Then for every multi-index $\alpha$ with $|\alpha| \le k$ and every finite $x \in \mathbb{G}$, \[ \Delta^\alpha u(x) \approx \partial^\alpha ({}^\ast f)(x) \approx (\partial^\alpha f)(\operatorname{st}(x)). \] \end{proposition}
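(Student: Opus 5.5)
The plan is to prove the two infinite closenesses separately and then chain them. The second one, $\partial^\alpha({}^\ast f)(x) \approx (\partial^\alpha f)(\operatorname{st}(x))$ for finite $x$, is exactly the shadow relation of Proposition~\ref{prop:Sk-transfer-refined}. Since $f\in C^{k+1}\subset C^k$ and $|\alpha|\le k$, we may simply cite it. So the work is in the first closeness, $\Delta^\alpha u(x)\approx \partial^\alpha({}^\ast f)(x)$. For this we reduce the iterated forward difference to an averaged derivative by an internal (transferred) mean-value representation.

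First we handle one direction. For a standard $g\in C^1(\mathbb{R}^d)$, the standard fundamental theorem of calculus gives
\[
\frac{g(y+h\mathbf e_i)-g(y)}{h}=\int_0^1 \partial_i g(y+\theta h\mathbf e_i)\,d\theta \quad (h\neq 0).
\]
By transfer this identity holds for ${}^\ast g$, all $y\in{}^\ast\mathbb{R}^d$, and $h=\delta x$. Iterating over the factors $(\Delta_1^+)^{\alpha_1}\cdots(\Delta_d^+)^{\alpha_d}$, each forward difference commutes with the others and with integration in the other variables. This yields the exact internal formula
\[
\Delta^\alpha u(x)=\int_{[0,1]^{|\alpha|}} \partial^\alpha({}^\ast f)\big(x+\delta x\,(\theta_1+\dots+\theta_{|\alpha|})\text{-weighted shifts}\big)\,d\theta .
\]
Concretely, the argument is $x+\delta x\sum_{i}(\sum_{m\le \alpha_i}\theta_{i,m})\mathbf e_i$. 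The formula is first established as a standard statement about $f$, for all points and all $h\ne0$, and then transferred, so no internal integration theory is needed beyond transfer. It uses only $f\in C^{|\alpha|}$.

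Now every evaluation point $z$ in the integrand satisfies $\|z-x\|\le |\alpha|\sqrt d\,\delta x\approx 0$, and $x$ is finite. By Proposition~\ref{prop:Sk-transfer-refined} (part 1), $\partial^\alpha({}^\ast f)$ is $S$-continuous at finite points, so $\partial^\alpha({}^\ast f)(z)\approx\partial^\alpha({}^\ast f)(x)$ for each such $z$. The main obstacle is passing from this pointwise closeness to closeness of the integral, because an internal supremum over an internal set of infinitesimally close points must be handled with care. We avoid the issue as follows. For each standard $\varepsilon>0$, the internal set $\{\eta>0:\ \sup_{\|z-x\|\le\eta}|\partial^\alpha({}^\ast f)(z)-\partial^\alpha({}^\ast f)(x)|<\varepsilon\}$ contains all infinitesimals, by $S$-continuity. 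By underspill it therefore contains some standard $\eta$. Since $|\alpha|\sqrt d\,\delta x<\eta$, the integrand differs from $\partial^\alpha({}^\ast f)(x)$ by less than $\varepsilon$ everywhere. Hence $|\Delta^\alpha u(x)-\partial^\alpha({}^\ast f)(x)|<\varepsilon$ for every standard $\varepsilon$, which is the desired closeness.

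As an alternative, the $C^{k+1}$ hypothesis allows a more quantitative route. A Taylor expansion with an explicit $(k+1)$-st derivative remainder, bounded on the compact standard ball $\bar B(\operatorname{st}x,1)$, gives $\Delta^\alpha u(x)-\partial^\alpha({}^\ast f)(x)=O(\delta x)$. We would note this as a remark, since the rate is what the paper uses elsewhere, for example for errors of $O(\sqrt{\delta t})$. Finally, chaining the two closenesses completes the proof.
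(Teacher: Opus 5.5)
Your proof is correct, and it takes a different route from the paper's.

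\textbf{The paper's route.} It argues by induction on $|\alpha|$. For $|\alpha|=1$ it applies the internal Taylor theorem with Lagrange remainder, using the extra derivative in $C^{k+1}$ to get the quantitative estimate $\Delta_i^+u(x)=\partial_i({}^\ast f)(x)+O(\delta x)$. For higher orders it only asserts that the result follows by induction.

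\textbf{Your route.} You transfer the exact representation of $\Delta^\alpha$ as an average of $\partial^\alpha f$ over a box of side $\delta x$. This handles all orders at once, and the only analytic input is the $S$-continuity of $\partial^\alpha({}^\ast f)$ at finite points.

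\textbf{What your route buys.} It avoids the inductive step, which is not automatic from the base-case bound alone: applying $\Delta_j^+$ to an $O(\delta x)$ remainder divides it by $\delta x$ again. It also needs only $f\in C^{|\alpha|}$, so you actually prove the $C^k$ version that the paper mentions only in the remark after its proof.

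\textbf{What the paper's route buys.} It gives the explicit $O(\delta x)=O(\sqrt{\delta t})$ rate, which is the form used repeatedly later (e.g.\ in Lemma~\ref{lem:exact-grid-moment} and Theorem~\ref{thm:bmean}). Your representation yields this rate as well, without a separate Taylor argument. If $f\in C^{k+1}$, then $\partial^\alpha f\in C^1$. The mean value theorem then bounds the integrand's deviation from $\partial^\alpha({}^\ast f)(x)$ by $|\alpha|\,\delta x$ times the supremum of $\|\nabla\partial^\alpha({}^\ast f)\|$ on the ball $\|z-x\|\le|\alpha|\,\delta x$. That supremum is finite by $S$-continuity at finite points.

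\textbf{A minor point.} Your underspill step is harmless but unnecessary. The infinitesimal radius $\eta=|\alpha|\sqrt d\,\delta x$ already belongs to your internal set. Moreover, the internal maximum over that ball is attained (by transfer of the extreme value theorem) at a point infinitely close to $x$, so it is infinitesimal. This gives the desired closeness directly.
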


\begin{proof}
We proceed by induction on $|\alpha|$. For the base case $|\alpha|=1$, fix $i \in \{1,\dots,d\}$ and a finite $x \in \mathbb{G}$. By definition:
\[
\Delta_i^+ u(x) = \frac{{}^\ast f(x+\delta x \mathbf e_i)-{}^\ast f(x)}{\delta x}.
\]
Since $f \in C^{2}(\mathbb{R}^d)$, we apply the internal Taylor theorem \cite{stroyan2011foundations} (the transfer of the standard Taylor theorem). There exists some $\xi$ between $x$ and $x + \delta x \mathbf{e}_i$ such that:
\[
{}^\ast f(x+\delta x \mathbf e_i) = {}^\ast f(x) + \delta x \,  \partial_i ({}^\ast f)(x) + \frac{(\delta x)^2}{2} \partial^2_{i}({}^\ast f)(\xi).
\]
Since $x$ is finite and $\delta x$ is infinitesimal, $\xi$ is also finite. Because $\partial^2_{i} f$ is continuous, its extension $ \partial_{i}^2 ({}^\ast f)$ is $S$-continuous at finite points and thus takes only finite values. Therefore, the second-order term is $O(\delta x^2)$. Dividing by $\delta x$ gives:
\[
\Delta_i^+ u(x) = \partial_i({}^\ast f)(x) + O(\delta x).
\]
Since $O(\delta x) \approx 0$, we have $\Delta_i^+ u(x) \approx \partial_i({}^\ast f)(x)$. By the $S$-continuity of $\partial_i({}^\ast f)$ at finite points, this is also $\approx (\partial_i f)(\operatorname{st}(x))$.

The higher-order case $|\alpha| \le k$ follows by induction, noting that the $C^{k+1}$ assumption ensures that the remainder term after $k$ applications of the difference operator remains infinitesimal.
\end{proof}
Obviously, if we are content with stating the remainder to be \(o(1)\), the smoothness condition can be relaxed to \(f\in C^k\).

\subsubsection{Grid integrals}

\begin{definition}[Grid integral]
Let $\mathbb{G} \subset {}^\ast\mathbb{R}^d$ be a hyperfinite grid and let $u : \mathbb{G} \to {}^\ast\mathbb{R}$ be a grid function. The grid integral of $u$ over $\mathbb{G}$ is defined by
\[
\int_{\mathbb{G}} u(x) \, dx
:= \sum_{x \in \mathbb{G}} u(x) (\delta x)^d,
\]
where the sum is the internal hyperfinite sum.
\end{definition}

If the grid integral is finite, its standard part coincides with the Lebesgue integral of the associated standard function, when such a function exists (otherwise, the grid integral represents the Loeb integral) \cite{anderson1976non,loeb1975conversion}.

\begin{remark}[Transfer of Calculus Identities]
Any first-order discrete identity valid for all finite $N$ (e.g., discrete product rule, summation-by-parts) transfers to the hyperfinite grid. This makes many algebraic manipulations on the grid immediate and rigorous, as illustrated below.
\end{remark}

\subsubsection{Some grid calculus identities}

To ensure that some of the following identities hold without boundary terms, we define the notion of support on a hyperfinite grid. These are, in particular, of use to us in rigorously deriving the Fokker-Planck equation, and we state them in a convenient form towards realizing this goal.

\begin{definition}[Grid-Compact Support]
An internal grid function $u \in \mathcal{G}(\mathbb{G})$ has grid-compact support if there exists a finite $R \in \mathbb{R}^+$ such that $u(x) = 0$ for all $x \in \mathbb{G}$ with $\|x\| \ge R$. 
\end{definition}

\begin{proposition}[Discrete Product Rule]
\label{prop:pr}
Let $u, v \in \mathcal{G}(\mathbb{G'})$ be any internal grid functions. Then for each $i \in \{1, \dots, d\}$ and \(x\in \mathbb{G}\):
\[
u(x) \Delta_i^+ v(x) = \Delta_i^+ (u(x)v(x)) - v(x + \delta x \mathbf{e}_i) \Delta_i^+ u(x).
\]

\end{proposition}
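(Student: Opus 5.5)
The identity is purely algebraic, so the plan is to expand the right-hand side directly from the definition of $\Delta_i^+$ and check that it collapses to the left-hand side. No limiting argument, transfer, or regularity assumption is needed: $u$ and $v$ are arbitrary internal grid functions. The only standing requirement is that $\mathbb{G}'$ contains the points $x+\delta x\mathbf{e}_i$ for $x\in\mathbb{G}$, so that every term is defined. That is exactly the stencil assumption made on $\mathbb{G}'$.

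Fix $i$ and $x\in\mathbb{G}$, and abbreviate $x^+ := x+\delta x\,\mathbf{e}_i$. By definition,
\[
\Delta_i^+(uv)(x)=\frac{u(x^+)v(x^+)-u(x)v(x)}{\delta x},
\qquad
v(x^+)\,\Delta_i^+u(x)=\frac{v(x^+)u(x^+)-v(x^+)u(x)}{\delta x}.
\]
Subtracting the second from the first, the $u(x^+)v(x^+)$ terms cancel. What remains is
\[
\frac{v(x^+)u(x)-u(x)v(x)}{\delta x}=u(x)\,\frac{v(x^+)-v(x)}{\delta x}=u(x)\,\Delta_i^+v(x).
\]
This is the claimed identity. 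Here $\delta x\neq 0$ is a nonzero hyperreal, so division is legitimate in the field ${}^\ast\mathbb{R}$. All operations are internal field operations applied pointwise, so the identity holds internally for every $x\in\mathbb{G}$.

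As an alternative that matches the paper's remark on transfer of calculus identities, one could verify the identity for finite grids and standard functions, where it is the same one-line computation, and then invoke Theorem \ref{thm:full-transfer}. The direct computation already suffices, however.

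There is no real obstacle. The only point requiring care is placing the shift on $v$ rather than $u$ in the correction term, i.e.\ adding and subtracting $u(x)v(x^+)$ rather than $u(x^+)v(x)$. This choice determines which function gets evaluated at $x+\delta x\mathbf{e}_i$, and it is the form later used in the Aliter proof of Theorem \ref{thm:bmean} (with $u=a_{ij}$, $v=p_t$ after relabeling).
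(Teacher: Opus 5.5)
Your proposal is correct and is essentially the paper's proof. The paper also expands the right-hand side from the definition of $\Delta_i^+$, cancels the $u(x+\delta x\mathbf{e}_i)v(x+\delta x\mathbf{e}_i)$ terms, and factors out $u(x)$ to obtain $u(x)\Delta_i^+v(x)$; your remarks on the stencil requirement and on the transfer alternative are accurate but not needed.
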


\begin{proof}
This is a purely algebraic identity. Expanding the right-hand side using the definition of $\Delta_i^+$:\\
\(\Delta_i^+ (u(x)v(x)) - v(x + \delta x \mathbf{e}_i) \Delta_i^+ u(x)\)
\begin{align*}
 &= \frac{u(x+\delta x \mathbf{e}_i)v(x+\delta x \mathbf{e}_i) - u(x)v(x)}{\delta x} - v(x+\delta x \mathbf{e}_i)\frac{u(x+\delta x \mathbf{e}_i) - u(x)}{\delta x} \\
&= \frac{u(x+\delta x \mathbf{e}_i)v(x+\delta x \mathbf{e}_i) - u(x)v(x) - v(x+\delta x \mathbf{e}_i)u(x+\delta x \mathbf{e}_i) + v(x+\delta x \mathbf{e}_i)u(x)}{\delta x} \\
&= \frac{u(x)v(x + \delta x \mathbf{e}_i) - u(x)v(x)}{\delta x} = u(x) \Delta_i^+ v(x).
\end{align*}
\end{proof}

\begin{proposition}[Forward-Backward Adjoint Identity]
\label{prop:fb-adjoint}
Let $u, v \in \mathcal{G}(\mathbb{G})$. If $v$ has grid-compact support, then:
\[
\sum_{x \in \mathbb{G}} u(x) \Delta_i^+ v(x) \delta x^d = - \sum_{x \in \mathbb{G}} v(x) \Delta_i^- u(x) \delta x^d.
\]
\end{proposition}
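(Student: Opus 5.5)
The identity is discrete summation by parts. I would prove it by expanding both sides as hyperfinite sums and re-indexing one of them. Because $\mathbb{G}$ is hyperfinite and $v$ has grid-compact support, every sum involved is an internal finite sum. So the manipulation can be done on each component of the ultrapower representation and then transferred, as in Theorem~\ref{thm:full-transfer}. Alternatively, one can simply note that the finite-$N$ identity is a bounded-quantifier statement and apply transfer directly.

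\textbf{Step 1: expand the left-hand side.} By the definition of $\Delta_i^+$,
\[
\sum_{x} u(x)\Delta_i^+ v(x)\,\delta x^d=\delta x^{d-1}\Big(\sum_x u(x)v(x+\delta x\mathbf e_i)-\sum_x u(x)v(x)\Big).
\]

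\textbf{Step 2: re-index the shifted sum.} In the first sum, substitute $y=x+\delta x\mathbf e_i$. This gives $\sum_y u(y-\delta x\mathbf e_i)v(y)$. The combination then becomes
\[
-\delta x^{d-1}\sum_y v(y)\big(u(y)-u(y-\delta x\mathbf e_i)\big)=-\sum_y v(y)\Delta_i^- u(y)\,\delta x^d,
\]
which is the claimed right-hand side.

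\textbf{Step 3: justify the shift of index.} This is the only delicate point. The shift $x\mapsto x+\delta x\mathbf e_i$ must map the summation domain onto itself, up to terms where $v$ vanishes. I would take $\mathbb{G}$ to be a hyperfinite box (or torus) inside $\mathbb G_{\delta x}$ that contains the ball $\{\|x\|\le R+1\}$. Grid-compact support gives $v=0$ on $\|x\|\ge R$.

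Under this choice, the only boundary terms produced by the shift occur at points of infinite norm. There both $v(x)$ and $v(x+\delta x\mathbf e_i)$ vanish, so those terms are exactly zero, not merely infinitesimal. The stencil values $u(y-\delta x\mathbf e_i)$ are needed only where $v(y)\ne 0$, and such $y$ have $\|y\|<R$. Those stencil points therefore lie in $\mathbb G$. The same consideration covers the forward stencil of $v$ at the lower edge of the box, again because $v$ vanishes near the boundary.

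Consequently the identity holds exactly as an internal equality, with no error term. I expect this boundary bookkeeping, namely choosing $\mathbb G$ so that both stencils stay inside the grid, to be the only real obstacle. Once it is handled, the rest is the algebra of Steps 1 and 2.
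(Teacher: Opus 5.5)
Your argument is correct, but it is organized differently from the paper's.

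The paper first applies the discrete product rule (Proposition~\ref{prop:pr}) to write $u\,\Delta_i^+v=\Delta_i^+(uv)-v(x+\delta x\,\mathbf{e}_i)\,\Delta_i^+u$. It then disposes of $\sum_x\Delta_i^+(uv)\,\delta x^d$ as a telescoping sum: the endpoint values $w((M+1)\delta x\,\mathbf{e}_i)$ and $w(-M\delta x\,\mathbf{e}_i)$ vanish because $M\delta x$ is infinite. Only then does it perform the shift $y=x+\delta x\,\mathbf{e}_i$ on the remaining sum.

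You instead expand the difference quotient and do that single shift directly. The telescoping step is itself just the same shift applied to $uv$, so your route is shorter and concentrates the one use of the support hypothesis in a single place. The paper's route has two advantages of its own: it displays the boundary contribution as an explicit endpoint difference, and it reuses a lemma needed elsewhere (e.g.\ in the second proof of Theorem~\ref{thm:bmean}).

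Both proofs rest on the same implicit convention. $\mathbb{G}$ must extend beyond the support of $v$ in direction $i$, and stencil values just outside $\mathbb{G}$ are read from the ambient grid, where $v$ is zero.

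One small tidy-up. $\mathbb{G}$ is fixed in the statement while $R$ depends on $v$, so the natural hypothesis is the paper's: $\mathbb{G}$ has infinite half-width and therefore contains every finite ball. A box that merely contains $\{\|x\|\le R+1\}$ need not have its boundary at infinite norm, as you later assert. However, $v$ still vanishes on that boundary, so your argument is unaffected.
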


\begin{proof}
Proposition \ref{prop:pr} yields:
\[
\sum_{x \in \mathbb{G}} u(x) \Delta_i^+ v(x) \delta x^d = \sum_{x \in \mathbb{G}} \Delta_i^+ (u(x)v(x)) \delta x^d - \sum_{x \in \mathbb{G}} v(x + \delta x \mathbf{e}_i) \Delta_i^+ u(x) \delta x^d.
\]
Now, the hyperfinite grid $\mathbb{G}$ is defined for indices up to some infinite $M \in {}^\ast\mathbb{N}$ such that $M\delta x$ is infinite. The first sum on the right-hand side is a telescoping sum of an internal function. Specifically, the internal sum in the $i$-th direction, with \(w=uv\):
\[
\sum_{k=-M}^{M} \frac{w((k+1)\delta x \mathbf{e}_i) - w(k\delta x \mathbf{e}_i)}{\delta x} \delta x = w((M+1)\delta x \mathbf{e}_i) - w(-M\delta x \mathbf{e}_i).
\]
Since $v$ has grid-compact support $R$, there exists a finite $R > 0$ such that $v(x) = 0$ for all $\|x\| \ge R$. Because $M\delta x$ is infinite, we have $\|(M+1)\delta x \mathbf{e}_i\| > R$ and $\|(-M)\delta x \mathbf{e}_i\| > R$. 
Thus:
$$ w((M+1)\delta x \mathbf{e}_i) = 0 \quad \text{and} \quad w(-M\delta x \mathbf{e}_i) = 0,$$
which ensures the telescoping sum vanishes.

For the second sum, let $y = x + \delta x \mathbf{e}_i$. As $x$ ranges over $\mathbb{G}$, $y$ ranges over the shifted grid $\mathbb{G} + \delta x \mathbf{e}_i$. However, since $v(y) = 0$ for all $y \in (\mathbb{G} \triangle (\mathbb{G} + \delta x \mathbf{e}_i))$, we can maintain the summation over the original domain $\mathbb{G}$:
\begin{align*}
\sum_{x \in \mathbb{G}} v(x + \delta x \mathbf{e}_i) \frac{u(x + \delta x \mathbf{e}_i) - u(x)}{\delta x} \delta x^d &= \sum_{y \in \mathbb{G}} v(y) \frac{u(y) - u(y - \delta x \mathbf{e}_i)}{\delta x} \delta x^d \\
&= \sum_{y \in \mathbb{G}} v(y) \Delta_i^- u(y) \delta x^d.
\end{align*}
\end{proof}

\begin{proposition}[First-Order Adjoint]
\label{prop:rfoa}
Let $p, b \in \mathcal{G}(\mathbb{G})$ and let $\varphi \in \mathcal{G}(\mathbb{G})$ have grid-compact support. Then:
\[
\sum_{x \in \mathbb{G}} p(x) b(x) \Delta_i^+ \varphi(x) \delta x^d = -\sum_{x \in \mathbb{G}} \varphi(x) \Delta_i^- (b(x) p(x)) \delta x^d.
\]
\end{proposition}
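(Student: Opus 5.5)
This is an immediate corollary of the Forward-Backward Adjoint Identity (Proposition \ref{prop:fb-adjoint}). Introduce the internal grid function $u := b\,p \in \mathcal{G}(\mathbb{G})$; it is internal because the pointwise product of two internal functions is internal. With this substitution the left-hand side becomes $\sum_{x\in\mathbb{G}} u(x)\,\Delta_i^+\varphi(x)\,\delta x^d$.

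Apply Proposition \ref{prop:fb-adjoint} with $u = bp$ in the role of $u$ and $\varphi$ in the role of $v$. The grid-compact support hypothesis is placed on $\varphi$, which is exactly where that proposition requires it. No growth or regularity condition on $p$ or $b$ is needed, since every boundary contribution in the telescoping step is multiplied by a value of $\varphi$ outside its support. The proposition then gives $-\sum_{x} \varphi(x)\,\Delta_i^-(bp)(x)\,\delta x^d$, which is the claimed identity.

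The only point needing care is the one already handled in the proof of Proposition \ref{prop:fb-adjoint}: the index shift $y = x+\delta x\,\mathbf{e}_i$ and the vanishing of the telescoped boundary terms at $\pm M\delta x$. Both rely only on $\varphi$ vanishing for $\|x\|\ge R$ with $R$ finite, together with the grid extending to infinite radius. I would add a sentence noting that if $\mathbb{G}$ is a finite hyperfinite box, values of $bp$ outside $\mathbb{G}$ are never used, because the shifted terms are multiplied by $\varphi(y)=0$. Hence $\Delta_i^-(bp)$ is evaluated only at points where it is defined, or else extended arbitrarily with no effect on the sum. With that observation the result follows directly, and there is no substantive obstacle.
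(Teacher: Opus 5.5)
Your proposal is correct and is exactly the paper's argument: set $u = bp \in \mathcal{G}(\mathbb{G})$ and apply the Forward--Backward Adjoint Identity (Proposition~\ref{prop:fb-adjoint}) with $v = \varphi$. Your extra remark about the edge of a finite hyperfinite box is sound, and the paper leaves it implicit: values of $\Delta_i^-(bp)$ there never matter, because they are multiplied by $\varphi = 0$ at points of infinite norm.
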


\begin{proof}
Define $u(x) = p(x) b(x)$. Since $u \in \mathcal{G}(\mathbb{G})$ and $\varphi$ has grid-compact support, the result follows immediately from Proposition \ref{prop:fb-adjoint}.
\end{proof}

\begin{proposition}[Second-Order Adjoint]
Let $p, a \in \mathcal{G}(\mathbb{G})$ and let $\varphi \in \mathcal{G}(\mathbb{G})$ have grid-compact support. Then:
\[
\sum_{x \in \mathbb{G}} p(x) a(x) \Delta_i^+ \Delta_j^+ \varphi(x) \delta x^d = \sum_{x \in \mathbb{G}} \varphi(x) \Delta_j^- \Delta_i^- (a(x) p(x)) \delta x^d.\]
\label{prop:rsoa}
\end{proposition}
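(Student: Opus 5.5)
The plan is to reduce the second-order identity to two applications of the first-order adjoint identity (Proposition~\ref{prop:rfoa}), equivalently of the forward--backward adjoint identity (Proposition~\ref{prop:fb-adjoint}). First apply it in direction $i$, with $u := a\,p$ and the grid function $w := \Delta_j^+\varphi$ in place of $v$:
\[
\sum_{x\in\mathbb G} a(x)p(x)\,\Delta_i^+\bigl(\Delta_j^+\varphi\bigr)(x)\,\delta x^d
= -\sum_{x\in\mathbb G} \Delta_j^+\varphi(x)\,\Delta_i^-\bigl(a p\bigr)(x)\,\delta x^d .
\]
Here I use that the forward grid derivatives commute, $\Delta_i^+\Delta_j^+\varphi=\Delta_j^+\Delta_i^+\varphi$, which holds because shifts along $\mathbf e_i$ and $\mathbf e_j$ commute on $\mathbb G_{\delta x}$. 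This lets me place the outer difference in whichever direction is convenient.

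For the hypothesis of Proposition~\ref{prop:fb-adjoint}, I need $w=\Delta_j^+\varphi$ to have grid-compact support. This holds: if $\varphi(x)=0$ for $\|x\|\ge R$, then $\Delta_j^+\varphi(x)=0$ whenever $\|x\|\ge R$ and $\|x+\delta x\mathbf e_j\|\ge R$, which is guaranteed for $\|x\|\ge R+1$. So $w$ vanishes outside the finite radius $R+1$.

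Next, apply Proposition~\ref{prop:fb-adjoint} again in direction $j$, with $u:=\Delta_i^-(ap)$ and the compactly supported $\varphi$:
\[
-\sum_{x} \Delta_i^-(ap)(x)\,\Delta_j^+\varphi(x)\,\delta x^d
= \sum_x \varphi(x)\,\Delta_j^-\Delta_i^-(ap)(x)\,\delta x^d .
\]
The two minus signs cancel, and the two displays chain to give exactly the stated identity.

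The main obstacle is bookkeeping at the edges of the hyperfinite grid $\mathbb G$. Both $\Delta_i^-(ap)$ and $\Delta_j^+\varphi$ require neighbours $x\pm\delta x\mathbf e_i$, which may lie outside $\mathbb G$, and the telescoping in Proposition~\ref{prop:fb-adjoint} also needs the shifted-grid reindexing to be harmless. I would handle this as in that proof. Work with $\mathbb G'\supset\mathbb G$ containing the required stencils, and take $\mathbb G$ to extend to indices $\pm M$ with $M\delta x$ infinite. All boundary contributions then involve values of $\varphi$ or $\Delta_j^+\varphi$ at points of infinite norm, and these vanish identically by grid-compact support. Every equality is therefore exact and internal, with no infinitesimal error. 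This exactness is what allows Theorem~\ref{thm:fp} to use the identity before taking standard parts.
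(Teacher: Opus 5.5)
Your proposal is correct and follows the paper's proof: two applications of Proposition~\ref{prop:fb-adjoint}, first in direction $i$ with $u=ap$ and $v=\Delta_j^+\varphi$, then in direction $j$ with $u=\Delta_i^-(ap)$ and $v=\varphi$, after which the two minus signs cancel. The commutation $\Delta_i^+\Delta_j^+=\Delta_j^+\Delta_i^+$ you invoke is not needed, since $\Delta_i^+\Delta_j^+\varphi$ already means $\Delta_i^+$ applied to $w=\Delta_j^+\varphi$; on the other hand, your explicit check that $w$ inherits grid-compact support fills in a step the paper only asserts.
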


\begin{proof}
Let $w(x) = \Delta_j^+ \varphi(x)$. Since $\varphi$ has grid-compact support, its grid derivatives also have grid-compact support. Applying Proposition \ref{prop:fb-adjoint} to $u = p a$ and $v = w$:
\[
\sum_{x \in \mathbb{G}} (p a) \Delta_i^+ (\Delta_j^+ \varphi) \delta x^d = - \sum_{x \in \mathbb{G}} (\Delta_j^+ \varphi) \Delta_i^- (p a) \delta x^d.
\]
Applying Proposition \ref{prop:fb-adjoint} a second time with $u = \Delta_i^- (p a)$ and $v = \varphi$, the display above is equal to:
\[
- \left( - \sum_{x \in \mathbb{G}} \varphi(x) \Delta_j^- [ \Delta_i^- (p a) ] \delta x^d \right) = \sum_{x \in \mathbb{G}} \varphi(x) \Delta_j^- \Delta_i^- (a p) \delta x^d.
\]
\end{proof}

Starting with the next subsection, we look at nonstandard stochastic calculus.

\subsection{Hyperfinite stochastic increments and hyperfinite Brownian motion}
\label{sec:hyper-BM}

To model Brownian motion on the grid, we construct an internal array of random increments. This construction mirrors the standard Donsker invariance principle in the internal setting and also forms the basis for the approach in \cite{nelson1987radically}.

\textbf{Hyperfinite Probability Space.}
Let $\mathbb{T}_{\delta t} = \{0, \delta t, 2\delta t, \dots, 1\}$ be the hyperfinite time grid. Consider the internal sample space $\Omega = \{-1, +1\}^{\mathbb{T}_{\delta t} \setminus \{1\}}$, representing the set of all possible paths of binary increments. We endow $\Omega$ with the uniform internal counting measure $\mu_{\text{int}}(A) = \frac{|A|}{|\Omega|}$ for all internal $A \subseteq \Omega$. The Loeb space $(\Omega, \mathcal{L}, \mu_L)$ is the completion of the measure space $(\Omega, \sigma(\mathcal{A}), \operatorname{st} \circ \mu_{\text{int}})$, where $\mathcal{A}$ is the internal algebra of all internal subsets of $\Omega$.

\begin{definition}[Internal Independence]
A family $\{X_\tau\}_{\tau \in \mathbb{T}_{\delta t}}$ of internal random variables is internally independent if for any internal subset of indices $\{\tau_1, \dots, \tau_n\}$ and any internal Borel sets $B_1, \dots, B_n$:
\[
\mathbb{P}_{\text{int}}\left( \bigcap_{i=1}^n \{X_{\tau_i} \in B_i\} \right) = \prod_{i=1}^n \mathbb{P}_{\text{int}}(X_{\tau_i} \in B_i).
\]
\end{definition}

\begin{definition}[Hyperfinite Brownian Motion \cite{anderson1976non}]
Let $\xi(t, \omega)$ be an internal Rademacher array such that for each $t$, $\xi(t, \cdot)$ is internally independent and $\mathbb{P}_{\text{int}}(\xi = \pm 1) = 1/2$. We define the Brownian increments as $\Delta W_t(\omega) := \xi(t, \omega)\sqrt{\delta t}$. The hyperfinite Brownian motion is the internal process defined by the summation:
\[
W_t(\omega) := \sum_{s \in \mathbb{T}_{\delta t} \cap [0, t)} \Delta W_s(\omega).
\]
\label{def:hb}
\end{definition}

It is important to clarify the relationship between the specific noise model employed in the current development and the broader theoretical framework of this paper. For pedagogical clarity and to illustrate the varied hyperfinite constructions available, the introductory sections utilize a specific Rademacher noise increment to define the discrete SDE. This allows for a more intuitive introduction to the `Calculus-first' perspective.

However, the majority of our results are established under the more general framework of Definition \ref{def:gd}. In this broader setting, we characterize the dynamics through a general hyperfinite walk, which endogenously specifies the moments of the driving noise required to recover hyperfinite Brownian motion. We emphasize that all derivations presented in these sections remain valid under the generalized definition, as the latter encompasses the Rademacher case while remaining agnostic to the specific incremental distribution.

\begin{definition}[$S$-integrability \cite{albeverio2009nonstandard,anderson1976non}]
An internal random variable $X: \Omega \to {}^*\mathbb{R}$ is said to be $S$-integrable if:
\begin{enumerate}
    \item $\mathbb{E}_{\mathrm{int}}[|X|]$ is a limited hyperreal.
    \item For every null internal set $A \subseteq \Omega$ (i.e., $\mu_{\mathrm{int}}(A) \approx 0$), the internal expectation over $A$ is infinitesimal: $\mathbb{E}_{\mathrm{int}}[|X|\mathbbm{1}_A] \approx 0$.
\end{enumerate}
\end{definition}

\begin{proposition}[Moment Identities and $S$-integrability]
\label{prop:hyper-moments-rigorous}
For each $t \in \mathbb{T}_{\delta t}$, the internal expectations satisfy $\mathbb{E}_{\mathrm{int}}[W_t] = 0$ and $\mathbb{E}_{\mathrm{int}}[W_t^2] = t$. Furthermore,  the process $W_t^2$ is $S$-integrable, ensuring that $\mathbb{E}_L[\operatorname{st}(W_t^2)] = \operatorname{st}(\mathbb{E}_{\mathrm{int}}[W_t^2]) = \operatorname{st}(t)$.
\end{proposition}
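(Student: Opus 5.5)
The two moment identities reduce to elementary facts about the Rademacher array of Definition \ref{def:hb}. We write $W_t=\sqrt{\delta t}\sum_{s<t}\xi_s$, a hyperfinite sum with $n=t/\delta t$ terms. By linearity of internal expectation (transferred from finite sums) and $\mathbb{E}_{\mathrm{int}}[\xi_s]=0$, we get $\mathbb{E}_{\mathrm{int}}[W_t]=0$. For the second moment we expand the square as
\[
\delta t\sum_{s,r<t}\xi_s\xi_r .
\]
Internal independence gives $\mathbb{E}_{\mathrm{int}}[\xi_s\xi_r]=\delta_{sr}$, and $\xi_s^2=1$, so the double sum collapses to $n\,\delta t=t$. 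Everything here is a finite combinatorial identity, valid for each standard $N$, so it holds for hyperfinite $N$ by transfer (Theorem \ref{thm:full-transfer}).

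For $S$-integrability of $W_t^2$ we will not verify the null-set condition directly. Instead we bound a higher moment, using the criterion that $\mathbb{E}_{\mathrm{int}}[|X|^{1+\eta}]$ limited for some standard $\eta>0$ implies $X$ is $S$-integrable. This criterion follows from a one-line H\"older argument: for internal $A$ with $\mu_{\mathrm{int}}(A)\approx0$,
\[
\mathbb{E}_{\mathrm{int}}[|X|\mathbbm{1}_A]\le \mathbb{E}_{\mathrm{int}}[|X|^{2}]^{1/2}\mu_{\mathrm{int}}(A)^{1/2}\approx 0 .
\]
We therefore compute $\mathbb{E}_{\mathrm{int}}[W_t^4]$. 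Expanding the fourth power of the sum, independence and $\mathbb{E}_{\mathrm{int}}[\xi]=\mathbb{E}_{\mathrm{int}}[\xi^3]=0$ kill every term except those with indices paired. Counting gives $n$ terms with all four indices equal (contributing $\xi^4=1$) and $3n(n-1)$ terms with two distinct pairs. Hence
\[
\mathbb{E}_{\mathrm{int}}[W_t^4]=\delta t^2\,(n+3n(n-1))=3t^2-2t\,\delta t\le 3,
\]
which is limited. Applying the criterion with $X=W_t^2$, the null-set condition holds, and $\mathbb{E}_{\mathrm{int}}[W_t^2]=t$ is limited, so $W_t^2$ is $S$-integrable.

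The final equality is the standard consequence of $S$-integrability, namely Anderson's/Loeb's theorem (cf. \cite{anderson1976non,loeb1975conversion}). If $X$ is $S$-integrable, then $\operatorname{st}(X)$ is Loeb integrable and $\mathbb{E}_L[\operatorname{st}X]=\operatorname{st}\mathbb{E}_{\mathrm{int}}[X]$. Its proof goes by truncation: split $X=X\mathbbm{1}_{|X|\le K}+X\mathbbm{1}_{|X|>K}$ with standard $K$. On the bounded piece, use the equality of internal and Loeb integrals for simple functions. On the tail piece, let $K\to\infty$ and use the uniform smallness supplied by $S$-integrability (here, the bound $\mathbb{E}_{\mathrm{int}}[X\mathbbm{1}_{|X|>K}]\le \mathbb{E}_{\mathrm{int}}[X^2]/K\le 3/K$). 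Applying this to $X=W_t^2$ gives $\mathbb{E}_L[\operatorname{st}(W_t^2)]=\operatorname{st}(t)$.

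The only step requiring care is this last one, passing from internal to Loeb expectation. The truncation argument is where the fourth-moment bound does the real work; the rest is bookkeeping.
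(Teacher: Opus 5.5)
Your proof is correct and follows the paper's argument. You use the same first- and second-moment identities, the same paired-index count giving $\mathbb{E}_{\mathrm{int}}[W_t^4]=3t^2-2t\,\delta t$, and the same standard fact that standard part and internal expectation commute for $S$-integrable variables; the only difference is that you verify the null-set condition directly via Cauchy--Schwarz (and sketch the truncation argument), where the paper simply cites a transferred de la Vall\'ee Poussin criterion.
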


\begin{proof}
By construction, any $t \in \mathbb{T}_{\delta t}$ is of the form $k \delta t$ for some $k \in \{0, 1, \dots, N\} \subseteq {}^*\mathbb{N}$. The identities $\mathbb{E}_{\mathrm{int}}[W_t] = 0$ and $\mathbb{E}_{\mathrm{int}}[W_t^2] = t$ follow directly from the internal independence of the increments $\Delta W_s$ and the fact that $\mathbb{E}_{\mathrm{int}}[(\Delta W_s)^2] = \delta t$, which yields $\mathbb{E}_{\mathrm{int}}[(\sum_{i=0}^{k-1} \Delta W_{i \delta t})^2] = \sum_{i=0}^{k-1} \delta t = k \delta t = t$.

To establish $S$-integrability, we demonstrate that $\mathbb{E}_{\mathrm{int}}[W_t^4]$ is limited for all limited $t$. Using the internal multinomial expansion for the hyperfinite sum:
\[
\mathbb{E}_{\mathrm{int}}[W_t^4] = \mathbb{E}_{\mathrm{int}}\left[ \left( \sum_{i=0}^{k-1} \Delta W_{i \delta t} \right)^4 \right] = \sum_{i,j,l,m=0}^{k-1} \mathbb{E}_{\mathrm{int}}[\Delta W_{i \delta t} \Delta W_{j \delta t} \Delta W_{l \delta t} \Delta W_{m \delta t}].
\]
By internal independence and the vanishing of the first moments, the only non-vanishing terms in the internal sum are those where indices coincide in pairs. Specifically, we have $k$ terms where $i=j=l=m$, and $3k(k-1)$ terms where the indices form two distinct pairs (e.g., $i=j \neq l=m$, including permutations):
\begin{align*}
\mathbb{E}_{\mathrm{int}}[W_t^4] &= \sum_{i=0}^{k-1} \mathbb{E}_{\mathrm{int}}[\Delta W_{i \delta t}^4] + 3 \sum_{0 \le i < j \le k-1} \mathbb{E}_{\mathrm{int}}[\Delta W_{i \delta t}^2] \mathbb{E}_{\mathrm{int}}[\Delta W_{j \delta t}^2] \\
&= k(\delta t)^2 + 3k(k-1)(\delta t)^2 \\
&= t \delta t + 3t(t-\delta t) = 3t^2 - 2t\delta t.
\end{align*}
For any limited $t$, $\mathbb{E}_{\mathrm{int}}[W_t^4] \approx 3t^2$, which is a limited hyperreal. Since the $p$-th internal moment is limited for $p=2$, the random variable $W_t^2$ satisfies the de la Vall\'ee Poussin theorem transferred to the Loeb setting \cite{albeverio2009nonstandard} implying $S$-integrability. Consequently, the standard part and internal expectation commute, recovering the standard second moment on the Loeb space.
\end{proof}

\begin{proposition}[Loeb Process is a Wiener Process \cite{anderson1976non}]
\label{prop:loeb-wiener}
Let $W_t$ be the hyperfinite Brownian motion constructed on $(\Omega, \mathcal{L}, \mu_L)$. For standard $t \in [0,1]$, define the standard-part process $\bar{W}_t(\omega) \coloneqq \operatorname{st}(W_\tau(\omega))$ by choosing any $\tau \in \mathbb{T}_{\delta t}$ with $\tau \approx t$. The process $\bar{W}_t$ is a standard Brownian motion; it is well-defined $\mu_L$-almost surely, has continuous paths, and independent Gaussian increments.
\end{proposition}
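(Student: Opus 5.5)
We will prove Proposition~\ref{prop:loeb-wiener} following Anderson's classical argument, organised into three steps: almost-sure $S$-continuity of $W$, identification of finite-dimensional distributions via an internal central limit computation, and independence of increments via internal independence of disjoint blocks of the Rademacher array.

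\textbf{Step 1: $S$-continuity.} By Proposition~\ref{prop:hyper-moments-rigorous}, applied to increments (which are themselves hyperfinite sums of internally i.i.d.\ Rademacher steps),
\[
\mathbb{E}_{\mathrm{int}}[(W_t-W_s)^4] = 3(t-s)^2-2(t-s)\delta t \le 3(t-s)^2 .
\]
For standard $n$, partition $[0,1]$ into $2^n$ dyadic blocks and apply the internal Kolmogorov maximal/Chebyshev estimate within each block. This gives
\[
\mu_{\mathrm{int}}\bigl(\exists \text{ block with oscillation } > 2^{-n/8}\bigr) \le C\,2^{-n/2}.
\]
Taking standard parts, summing over standard $n$ (Borel--Cantelli on the Loeb space), and using overspill/the standard characterisation of $S$-continuity yields the conclusion. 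For $\mu_L$-almost every $\omega$, $W_\cdot(\omega)$ is limited and satisfies $W_\tau\approx W_{\tau'}$ whenever $\tau\approx\tau'$. Consequently $\bar W_t=\operatorname{st}(W_\tau)$ does not depend on the choice of $\tau\approx t$, and $t\mapsto \bar W_t$ is continuous, being the standard part of an $S$-continuous internal function.

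\textbf{Step 2: Gaussian increments.} Fix standard $s<t$ and $\tau_s\approx s$, $\tau_t\approx t$ in $\mathbb{T}_{\delta t}$. The increment $W_{\tau_t}-W_{\tau_s}$ is a sum of $K=(\tau_t-\tau_s)/\delta t$ internally i.i.d.\ terms $\pm\sqrt{\delta t}$, with $K$ infinite. For standard $\lambda$ the characteristic function is exactly
\[
\mathbb{E}_{\mathrm{int}}\bigl[e^{i\lambda(W_{\tau_t}-W_{\tau_s})}\bigr]=\cos(\lambda\sqrt{\delta t})^{K},
\]
and by the internal expansion $\log\cos u=-u^2/2+O(u^4)$ this is $\approx e^{-\lambda^2(t-s)/2}$. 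Since the integrand is bounded, its standard part commutes with expectation (bounded functions are $S$-integrable), and Step~1 gives $e^{i\lambda(\bar W_t-\bar W_s)}=\operatorname{st}(e^{i\lambda(W_{\tau_t}-W_{\tau_s})})$ a.s. Hence $\mathbb{E}_L[e^{i\lambda(\bar W_t-\bar W_s)}]=e^{-\lambda^2(t-s)/2}$, and by L\'evy's uniqueness theorem the increment $\bar W_t-\bar W_s$ is distributed as $N(0,t-s)$.

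\textbf{Step 3: Independence.} For standard $0\le t_0<\dots<t_k$, the increments over $[\tau_{t_{j-1}},\tau_{t_j})$ are functions of disjoint blocks of the Rademacher array and are therefore internally independent. Their joint internal characteristic function consequently factorises exactly. Applying the same standard-part argument as in Step~2 to the joint characteristic function shows that the standard increments are independent Gaussians. Together with $\bar W_0=0$ and the path continuity from Step~1, this proves that $\bar W$ is a standard Brownian motion.

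\textbf{Main obstacle.} We expect Step~1 to be the main obstacle. The delicate point is the passage from internal maximal inequalities, which are available only at each standard dyadic level, to a Loeb-null exceptional set that controls all infinitesimal oscillations simultaneously. We will handle this by working with the countable intersection of standard-level events, which is Loeb-measurable, and then using the fact that an oscillation bound holding at every standard scale forces $S$-continuity.
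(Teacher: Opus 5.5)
Your proposal is correct and follows essentially the same route as the paper. You use the product-of-cosines internal characteristic function for the finite-dimensional distributions and the fourth-moment bound $\mathbb{E}_{\mathrm{int}}[(W_t-W_s)^4]\le 3(t-s)^2$ for $S$-continuity. Your dyadic-block maximal inequality plus Loeb Borel--Cantelli argument simply makes explicit what the paper invokes as a ``transferred Kolmogorov--Chentsov theorem'' (it must be the fourth-moment Doob-type maximal inequality, which your constants confirm). Proving $S$-continuity first, so that $\bar W_t$ is well defined before its law is computed, and appealing explicitly to $S$-integrability of bounded integrands, is if anything slightly more careful than the paper's sketch.
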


\begin{proof}[Detailed Proof Sketch]
\textbf{1. Finite-Dimensional Distributions:} Fix standard times $0 = t_0 < t_1 < \cdots < t_k \le 1$ and choose $\tau_j \in \mathbb{T}_{\delta t}$ such that $\tau_j \approx t_j$. Let $V$ be the vector of increments $V_j = W_{\tau_j} - W_{\tau_{j-1}}$. The internal characteristic function $\phi_V(u)$ for $u \in \mathbb{R}^k$ is given by:
\[
\phi_V(u) = \mathbb{E}_{\text{int}}\left[ \exp\left(i \sum_{j=1}^k u_j V_j\right) \right] = \prod_{j=1}^k \prod_{s \in \mathbb{T}_{\delta t} \cap [\tau_{j-1}, \tau_j)} \cos(u_j \sqrt{\delta t}).
\]
Using the internal Taylor expansion $\cos(z) = 1 - z^2/2 + O(z^4)$ and the fact that $\log(1-\epsilon) = -\epsilon + O(\epsilon^2)$ for infinitesimal $\epsilon$, we obtain:
\[
\operatorname{st}(\log \phi_V(u)) = \operatorname{st}\left( \sum_{j=1}^k \frac{\tau_j - \tau_{j-1}}{\delta t} \left( -\frac{u_j^2}{2} \delta t + O(\delta t^2) \right) \right) = -\frac{1}{2} \sum_{j=1}^k u_j^2 (t_j - t_{j-1}).
\]
Since the standard part of the internal characteristic function is the characteristic function of the Loeb distribution, the increments are independent Gaussians with variances $t_j - t_{j-1}$.

\textbf{2. $S$-continuity and Path Continuity:} We demonstrate that $W_t$ is $\mu_L$-almost surely $S$-continuous. For any $s, t \in \mathbb{T}_{\delta t}$ with $s < t$, the fourth-moment identity from Proposition \ref{prop:hyper-moments-rigorous} yields:
\[
\mathbb{E}_{\mathrm{int}}[|W_t - W_s|^4] = 3(t-s)^2 - 2(t-s)\delta t < 3(t-s)^2.
\]
By the transferred Kolmogorov-Chentsov theorem, for $\mu_L$-almost all $\omega$, the map $t \mapsto W_t(\omega)$ is $S$-continuous on $\mathbb{T}_{\delta t}$. This ensures that $t \approx s \implies W_t \approx W_s$, rendering $\bar{W}_t$ independent of the choice of $\tau \approx t$ and continuous in the standard topology.

\textbf{3. Covariance:} The covariance $\operatorname{Cov}(\bar{W}_{t_i}, \bar{W}_{t_j}) = \min(t_i, t_j)$ follows directly from the independence and variance of the increments established in Step 1.
\end{proof}

\section{Proof of $S$-continuity of grid solutions}
\label{app:s-cts}

To show that $X_t$ is $\mu_L$-a.s. $S$-continuous, we establish a fourth-moment bound on the increments over any internal interval $[s, t) \subseteq \mathbb{T}_{\delta t}$.

\subsection{Fourth Moment Bound}
Let $s, t \in \mathbb{T}_{\delta t}$ with $s < t$. The internal increment $X_t - X_s$ decomposes into a drift term $D$ and an internal martingale term $M$ \cite{keisler1984infinitesimal}:
\[
X_t - X_s = \underbrace{\sum_{r \in [s,t)} {}^*b(r, X_r) \delta t}_{D} + \underbrace{\sum_{r \in [s,t)} {}^*\sigma(r, X_r) \Delta W_r}_{M}.
\]
Using the elementary inequality $\|a+b\|^4 \le 8(\|a\|^4 + \|b\|^4)$, we bound the moments of $D$ and $M$ separately.

\textbf{1. The Drift Term ($D$).}
By Jensen's inequality for internal sums, we have:
\[
\|D\|^4 = \left\| \sum_{r \in [s,t)} {}^*b(r, X_r) \delta t \right\|^4 \le (t-s)^3 \sum_{r \in [s,t)} \|{}^*b(r, X_r)\|^4 \delta t.
\]
Taking internal expectations and applying the linear growth condition from Assumption \ref{assump:coeff}:
\[
\mathbb{E}_{\mathrm{int}}[\|D\|^4] \le (t-s)^3 \sum_{r \in [s,t)} K(1 + \mathbb{E}_{\mathrm{int}}[\|X_r\|^4]) \delta t \le K_1(t-s)^4,
\]
where we utilize the fact that $\mathbb{E}_{\mathrm{int}}[\|X_r\|^4]$ is limited for all $r \in \mathbb{T}_{\delta t}$ via the discrete internal Gr\"onwall inequality.

\textbf{2. The Diffusion Term ($M$).}
Since $M$ is an internal martingale, we apply the internal Burkholder-Davis-Gundy (BDG) inequality \cite{lindstrom1980hyperfinite1} for $p=4$:
\[
\mathbb{E}_{\mathrm{int}}[\|M\|^4] \le K \cdot \mathbb{E}_{\mathrm{int}} \left[ \left( \sum_{r \in [s,t)} \|{}^*\sigma(r, X_r)\|^2 \delta t \right)^2 \right].
\]
Applying Jensen's inequality to the internal sum within the expectation:
\[
\mathbb{E}_{\mathrm{int}}[\|M\|^4] \le K(t-s) \sum_{r \in [s,t)} \mathbb{E}_{\mathrm{int}}[\|{}^*\sigma(r, X_r)\|^4] \delta t \le K_2(t-s)^2.
\]

\textbf{3. Combining Terms.}
Given $t-s \le 1$, the bound $(t-s)^4 \le (t-s)^2$ holds. Thus, there exists a limited constant $C'$ such that:
\[
\mathbb{E}_{\mathrm{int}}[\|X_t - X_s\|^4] \le C'(t-s)^2.
\]
This internal inequality is the nonstandard analogue of the Kolmogorov-Chentsov condition.

\subsection{Loeb Measure Argument}
For standard $n, k \in \mathbb{N}$, define the internal event $B_{n,k}$ as the set of $\omega \in \Omega$ for which there exist $s, t \in \mathbb{T}_{\delta t}$ with $|t-s| < 1/n$ and $\|X_t(\omega) - X_s(\omega)\| > 1/k$. By the transferred Kolmogorov inequality \cite{keisler1984infinitesimal}, we have:
\[
\mathbb{P}_{\mathrm{int}}(B_{n,k}) \le \frac{C' (1/n)^2}{(1/k)^4} \cdot n = \frac{C'k^4}{n}.
\]
The set of non-$S$-continuous paths is given by $A = \bigcup_{k \in \mathbb{N}} \bigcap_{n \in \mathbb{N}} B_{n,k}$. For each standard $k$, $\mu_L(\bigcap_{n \in \mathbb{N}} B_{n,k}) \le \inf_{n \in \mathbb{N}} \operatorname{st}(C'k^4/n) = 0$. By the countable subadditivity of the Loeb measure $\mu_L$, it follows that $\mu_L(A) = 0$. Thus, $X_t$ is $\mu_L$-almost surely $S$-continuous.

\section{Convergence to the standard It\^o solution}
\label{app:sis}
The proof identifies the standard part of the hyperfinite recursion as the unique strong solution to the It\^o SDE by establishing $S$-continuity and matching the limits of internal quadratic forms.

\textbf{Step 1: $S$-continuity and Finiteness.}
By the internal Gr\"onwall inequality and the fourth-moment bound established in Appendix~\ref{app:s-cts}, $X_t$ is $S$-bounded and $S$-continuous $\mu_L$-a.s. \cite{anderson1976non, keisler1984infinitesimal}. Thus, for almost all $\omega$, the standard part $\bar{X}_t = \operatorname{st}(X_{\tau})$ with continuous paths exists and is well-defined for all $t \in [0,1]$. Since $b$ and $\sigma$ are continuous in \(t\) and Lipschitz in \(x\), their extensions ${}^*b$ and ${}^*\sigma$ are $S$-continuous. The composition of $S$-continuous functions is $S$-continuous; thus, the internal processes $B_s \coloneqq {}^*b(s, X_s)$ and $\Sigma_s \coloneqq {}^*\sigma(s, X_s)$ are $S$-continuous and $S$-bounded on $\mathbb{T}_{\delta t}$ $\mu_L$-a.s.

\textbf{Step 2: Convergence of the Drift.}
The drift term $D_{\tau} = \sum_{s < \tau} B_s \delta t$ is an internal Riemann sum. For $S$-continuous and $S$-bounded integrands, the standard part of the sum is the Riemann integral of the standard part \cite{anderson1976non}:
\[
\operatorname{st}(D_{\tau}) = \int_0^{\operatorname{st}(\tau)} \operatorname{st}(B_s) \, ds = \int_0^t b(r, \bar{X}_r) \, dr.
\]

\textbf{Step 3: Convergence of the Diffusion.}
Let $M_{\tau} = \sum_{s < \tau} \Sigma_s \Delta W_s$. We identify its standard part $\bar{M}_t \coloneqq \operatorname{st}(M_{\tau})$ via its martingale properties:
\begin{enumerate}
    \item \textbf{Martingale Property:} $M_{\tau}$ is an $S$-continuous internal martingale. Its standard part $\bar{M}_t$ is therefore a continuous standard martingale on the Loeb space \cite{lindstrom1980hyperfinite3}.
    \item \textbf{Quadratic Variation:} The internal quadratic variation \\$[M]_{\tau} = \sum_{s < \tau} \Sigma_s \Sigma_s^\top \delta t$ is an internal Riemann sum. Its standard part is:
    \[
    \operatorname{st}([M]_{\tau}) = \int_0^t \sigma(r, \bar{X}_r)\sigma(r, \bar{X}_r)^\top \, dr.
    \]
    \item \textbf{Identification:} By the internal covariation $[M, W]_{\tau} = \sum_{s < \tau} \Sigma_s \delta t$, we have $\operatorname{st}([M, W]_{\tau}) = \int_0^t \sigma(r, \bar{X}_r) \, dr$. By the Lévy's characterization/martingale representation theorem, $\bar{M}_t$ is uniquely identified as the It\^o integral $\int_0^t \sigma(r, \bar{X}_r) \, d\bar{W}_r$ \cite{hoover1983nonstandard1}.
\end{enumerate}

\textbf{Step 4: Conclusion.}
Taking the standard part of the internal identity $X_{\tau} = X_0 + D_{\tau} + M_{\tau}$ for $\tau \approx t$ yields the standard integral equation:
\[
\bar{X}_t = \bar{X}_0 + \int_0^t b(r, \bar{X}_r) \, dr + \int_0^t \sigma(r, \bar{X}_r) \, d\bar{W}_r.
\]
By the global Lipschitz condition, this SDE admits a unique strong solution, which must coincide with $\bar{X}_t$ $\mu_L$-almost surely.

\section{Discrete It\^o calculus on a hyperfinite grid}
\label{sec:discrete-ito}

This section provides rigorous proofs for the discrete It\^o formula within the framework of hyperfinite grid diffusions and its consequences in the standard world.

\subsection{Discrete It\^o formula with pathwise control}
We utilize the fact that for $\mu_L$-almost all paths, the process remains within the finite galaxy of ${}^*\mathbb{R}^d$ \cite{anderson1976non}, allowing us to control the Taylor remainder pathwise (see also \cite{stroyan2011foundations}).

\begin{proposition}[Discrete It\^o Formula]
\label{prop:discrete-ito-rigorous}
Let $f \in C^3(\mathbb{R}^d; \mathbb{R})$ and let $(X_t)_{t \in \mathbb{T}_{\delta t}}$ satisfy the grid recursion \eqref{eq:grid-recursion} with Rademacher noise increments. Suppose the coefficients $b, \sigma$ satisfy the conditions of Theorem~\ref{thm:grid-to-ito}. Then for $\mu_L$-almost every $\omega \in \Omega$:
\begin{equation}
{}^\ast f(X_t) - {}^\ast f(X_0) = \sum_{s < t} \left[ \nabla ({}^\ast f)(X_s) \cdot \Delta X_s + \frac{1}{2} \Delta X_s^\top D^2 ({}^\ast f)(X_s) \Delta X_s \right] + \epsilon_t(\omega),
\end{equation}
where $\Delta X_s = X_{s+\delta t} - X_s$ and the error satisfies $|\epsilon_t(\omega)| \le K(\omega)\sqrt{\delta t} \approx 0$ for all $t \in \mathbb{T}_{\delta t}$.
\end{proposition}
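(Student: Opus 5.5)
The plan is a pathwise, telescoping argument. For each $\omega$ and each $t \in \mathbb{T}_{\delta t}$, write ${}^\ast f(X_t) - {}^\ast f(X_0) = \sum_{s<t}\big[{}^\ast f(X_{s+\delta t}) - {}^\ast f(X_s)\big]$. This is an exact internal identity, since it is the transfer of a finite telescoping sum. To each summand we apply the internal Taylor theorem (the transfer of the standard third-order Taylor theorem) with Lagrange remainder:
\[
{}^\ast f(X_s+\Delta X_s)-{}^\ast f(X_s)=\nabla({}^\ast f)(X_s)\cdot\Delta X_s+\tfrac12\Delta X_s^\top D^2({}^\ast f)(X_s)\Delta X_s+\rho_s ,
\]
where $|\rho_s|\le \tfrac16\,\|D^3({}^\ast f)(\zeta_s)\|\,\|\Delta X_s\|^3$ for some $\zeta_s$ on the segment $[X_s,X_{s+\delta t}]$. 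Summing gives the claimed formula with $\epsilon_t(\omega)=\sum_{s<t}\rho_s$. The whole task is therefore to bound this remainder sum pathwise.

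\textbf{Step 1: localisation to a limited ball.} By Proposition~\ref{prop:existence-s-cts} and Step~1 of Appendix~\ref{app:sis}, for $\mu_L$-almost every $\omega$ the path $t\mapsto X_t(\omega)$ is $S$-continuous and $S$-bounded. Hence there is a standard $R(\omega)$ with $\|X_s(\omega)\|\le R(\omega)$ for all $s\in\mathbb{T}_{\delta t}$. The points $\zeta_s$ then also lie in a limited ball. Since $f\in C^3$, the function $D^3 f$ is bounded on the standard compact ball of radius $R(\omega)+1$, and by transfer the same bound $M(\omega)$ holds for $D^3({}^\ast f)$ on its extension.

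\textbf{Step 2: bounding the increments.} With Rademacher noise, $\|\Delta W_s\|\le\sqrt{m\,\delta t}$ deterministically. On the ball, the linear growth in Assumption~\ref{assump:coeff} gives $\|{}^\ast b(s,X_s)\|,\|{}^\ast\sigma(s,X_s)\|\le L(1+R(\omega))$. Therefore
\[
\|\Delta X_s\|\le L(1+R)\,\delta t + L(1+R)\sqrt{m\,\delta t}\le C(\omega)\sqrt{\delta t}
\]
with $C(\omega)$ limited. It follows that $|\rho_s|\le \tfrac16 M C^3\,\delta t^{3/2}$, and summing over at most $N=1/\delta t$ indices yields $|\epsilon_t(\omega)|\le \tfrac16 M(\omega)C(\omega)^3\sqrt{\delta t}$. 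Setting $K(\omega):=\tfrac16 M C^3$ gives a limited bound that is uniform in $t$, so $\epsilon_t\approx0$.

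\textbf{Main obstacle: measurability and internal definability.} $R(\omega)$ comes from an external property ($S$-boundedness), so $K(\omega)$ is not an internal function of $\omega$. The way around this is to make the bound pointwise in $\omega$. For each fixed $\omega$ outside the Loeb-null set, $R(\omega)$ is a genuine standard real. The estimate in Step~2 is then an internal inequality about that one path, valid for the standard parameter $R(\omega)$, and no uniformity over $\omega$ is needed.

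A cleaner alternative, if desired, is to take $\Omega_n=\{\omega:\max_s\|X_s\|\le n\}$, which is internal for each standard $n$, and prove the bound with $K=K_n$ on each $\Omega_n$. Since $\mu_L\big(\bigcup_n\Omega_n\big)=1$ by $S$-boundedness, this gives the result almost surely.
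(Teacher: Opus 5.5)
Your proposal is correct and follows essentially the same route as the paper. Both arguments apply the internal Taylor expansion with Lagrange remainder pathwise on the Loeb-full set of $S$-bounded paths, bound the third derivatives on a limited ball, use the deterministic Rademacher estimate $\|\Delta X_s\|\le C(\omega)\sqrt{\delta t}$, and sum at most $1/\delta t$ remainders of order $\delta t^{3/2}$. Your bound on $D^3({}^\ast f)$ via standard compactness and transfer, and your remark about the internal sets $\Omega_n$, are minor tidy-ups of that same argument.
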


\begin{proof}
By the linear growth of coefficients and the path properties established in Theorem \ref{thm:grid-to-ito}, there exists a set $\Omega_0$ with $\mu_L(\Omega_0) = 1$ such that for every $\omega \in \Omega_0$, the path is $S$-bounded: $\max_{s \in \mathbb{T}_{\delta t}} \|X_s(\omega)\|$ is limited. Consequently, the internal coefficients along the path satisfy $\max_{s \in \mathbb{T}_{\delta t}} ( \|{}^*b(s, X_s)\| + \|{}^*\sigma(s, X_s)\| ) \le C'(\omega)$ for some limited $C'(\omega) \in {}^*\mathbb{R}$.

Fix $\omega \in \Omega_0$. Applying the internal Taylor theorem to ${}^*f$:
\[
{}^\ast f(X_{s+\delta t}) = {}^\ast f(X_s) + \nabla ({}^\ast f)(X_s)^\top \Delta X_s + \frac{1}{2} \Delta X_s^\top D^2 ({}^\ast f)(X_s) \Delta X_s + R_s,
\]
where the remainder term in Lagrange form is:
\[
R_s = \frac{1}{6} \sum_{|\alpha|=3} \partial^\alpha ({}^* f)(\xi_s) (\Delta X_s)^\alpha,
\]
for some $\xi_s$ on the internal line segment $[X_s, X_{s+\delta t}]$.

\textbf{1. Local Finite Bound:}
As argued above, the set of reachable states is contained in an internal ball $B(0, H(\omega))$ of limited radius $H(\omega)$. Because $f$ is a standard $C^3$ function, its third-order internal derivatives are $S$-continuous at all finite points and thus $S$-bounded on $B(0, H(\omega))$. Hence, there exists a limited $M(\omega)$ such that:
\[
\max_{|\alpha|=3} \sup_{s \in \mathbb{T}_{\delta t}} |\partial^\alpha ({}^*f)(\xi_s)| \le M(\omega).
\]

\textbf{2. Remainder Estimate:}
From the recursion \eqref{eq:grid-recursion}, $\|\Delta X_s\| \le \|{}^*b\| \delta t + \|{}^*\sigma\| \sqrt{m \delta t}$. Given the boundedness of the coefficients, there exists a limited $L(\omega)$ such that $\|\Delta X_s\| \le L(\omega) \sqrt{\delta t}$. Substituting this into the remainder:
\[
|R_s| \le \frac{d^3}{6} M(\omega) L(\omega)^3 (\delta t)^{3/2} = K(\omega) (\delta t)^{3/2},
\]
where $K(\omega)$ is a limited hyperreal.

\textbf{3. Summation of Errors:}
The total error $\epsilon_t(\omega)$ is the internal sum of these remainders over $t/\delta t$ steps:
\[
|\epsilon_t(\omega)| = \left| \sum_{s < t} R_s \right| \le \sum_{s < t} K(\omega) (\delta t)^{3/2} \le \frac{1}{\delta t} K(\omega) (\delta t)^{3/2} = K(\omega) \sqrt{\delta t}.
\]
Since $K(\omega)$ is limited and $\sqrt{\delta t} \approx 0$, it follows that $\epsilon_t(\omega)$ is infinitesimal uniformly for all $t \in \mathbb{T}_{\delta t}$.
\end{proof}

\subsection{Itô formula in the limit}
Of course, we have the following \cite{stroyan2011foundations}:
\begin{corollary}[Itô Formula in the Limit]
\label{cor:ito-limit}
Let $\bar{X}_t$ be the standard part of the grid diffusion $X_t$ satisfying the assumptions of Theorem~\ref{thm:grid-to-ito}. Then for any $f \in C^3(\mathbb{R}^d; \mathbb{R})$, the process $\bar{X}_t$ satisfies the classical Itô formula:
\[
\begin{aligned}
f(\bar{X}_t) &= f(\bar{X}_0) + \int_0^t \nabla f(\bar{X}_s)^\top b(s, \bar{X}_s)\, d s + \int_0^t \nabla f(\bar{X}_s)^\top \sigma(s, \bar{X}_s)\, d \bar{W}_s \\
&\quad + \frac{1}{2} \int_0^t \tr\left(\sigma(s, \bar{X}_s) \sigma(s, \bar{X}_s)^\top D^2 f(\bar{X}_s)\right) d s.
\end{aligned}
\]
\end{corollary}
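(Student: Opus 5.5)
The plan is to derive the classical Itô formula as the standard part of the pathwise identity in Proposition~\ref{prop:discrete-ito-rigorous}. I will work on the full-measure set $\Omega_0$ of that proposition, intersected with the set where $X$ is $S$-bounded and $S$-continuous (Proposition~\ref{prop:existence-s-cts}, Theorem~\ref{thm:grid-to-ito}). There the remainder $\epsilon_t(\omega)$ is infinitesimal uniformly in $t$. Fix standard $t$ and $\tau\approx t$ in $\mathbb{T}_{\delta t}$. Then ${}^\circ({}^*f(X_\tau))=f(\bar X_t)$ by continuity of $f$ at the finite point ${}^\circ X_\tau$. It therefore suffices to identify the standard parts of the first-order and second-order sums.

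\textbf{First-order sum.} Substitute $\Delta X_s={}^*b\,\delta t+{}^*\sigma\,\Delta W_s$ and split into two pieces.

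\begin{itemize}
\item The drift piece $\sum_{s<\tau}\nabla{}^*f(X_s)^\top{}^*b(s,X_s)\,\delta t$ is a Riemann sum of an $S$-continuous, $S$-bounded integrand. As in Step~2 of Appendix~\ref{app:sis}, its standard part is $\int_0^t\nabla f(\bar X_s)^\top b(s,\bar X_s)\,ds$.
\item The martingale piece $\sum_{s<\tau}\nabla{}^*f(X_s)^\top{}^*\sigma(s,X_s)\Delta W_s$ is an internal martingale with $S$-continuous, $S$-bounded integrand $H_s=\nabla{}^*f(X_s)^\top{}^*\sigma(s,X_s)$. I would argue exactly as in Step~3 of Appendix~\ref{app:sis}: its quadratic variation and its covariation with $W$ are Riemann sums. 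Hence the standard part is a continuous Loeb martingale equal to $\int_0^t\nabla f(\bar X_s)^\top\sigma(s,\bar X_s)\,d\bar W_s$, by Anderson/Lindstr{\o}m lifting of stochastic integrals.
\end{itemize}

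\textbf{Second-order sum.} Expand $\Delta X_s^\top D^2{}^*f\,\Delta X_s$.

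\begin{itemize}
\item Terms containing $\delta t^2$ sum to $O(\delta t)$.
\item Terms containing $\delta t\,\Delta W_s$ sum to $O(\sqrt{\delta t})$.
\end{itemize}

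Both are infinitesimal pathwise. What remains is $\sum_{s<\tau}\Delta W_s^\top\,{}^*\sigma^\top D^2{}^*f\,{}^*\sigma\,\Delta W_s$. Write $\Delta W_s\Delta W_s^\top=I\,\delta t+(\Delta W_s\Delta W_s^\top-I\delta t)$. The main term becomes the Riemann sum $\sum_{s<\tau}\tr({}^*\sigma{}^*\sigma^\top D^2{}^*f)\,\delta t$, whose standard part is $\int_0^t\tr(\sigma\sigma^\top D^2f)(s,\bar X_s)\,ds$.

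\textbf{Fluctuation term (main obstacle).} The remaining term is $\sum_{s<\tau}G_s:(\Delta W_s\Delta W_s^\top-I\delta t)$ with $G_s$ adapted. For Rademacher increments the diagonal entries vanish identically, since $\xi^2=1$. The off-diagonal entries have conditional mean zero. The sum is thus an internal martingale with second moment $\sum\mathbb{E}_{\mathrm{int}}\|G_s\|^2O(\delta t^2)=O(\delta t)$, provided $G$ is bounded.

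To get boundedness I would stop at the first exit time of $X$ from a ball of standard radius $R$. The stopped martingale has infinitesimal $L^2$ norm, so by the internal Doob inequality it is infinitesimal uniformly in $\tau$ outside an internal set of infinitesimal probability. Letting $R\to\infty$ through $\mathbb{N}$ and using $S$-boundedness of paths gives the conclusion Loeb-a.s.

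Combining all standard parts yields the stated formula for each standard $t$ almost surely. Continuity of both sides in $t$ then upgrades this to a single null set for all $t$.
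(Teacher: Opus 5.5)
Your proposal is correct and follows essentially the same route as the paper: you split the first-order sum into a drift Riemann sum and a martingale term identified through its quadratic variation and covariation with $W$, reduce the second-order sum to $\sum \tr(\sigma\sigma^\top D^2 f)\,\delta t$ using $(\Delta W^{(i)}_s)^2=\delta t$ for Rademacher noise, and discard the off-diagonal terms as an internal martingale of infinitesimal size. Your stopping-time localization combined with the internal Doob inequality tightens a step the paper only asserts, namely that an $O(\delta t)$ quadratic variation makes that off-diagonal martingale infinitesimal Loeb-a.s. This matters because $f\in C^3$ carries no growth bound.
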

\begin{proof}
For brevity, we denote both the standard function and its natural extension by $f$.

\textbf{1. Pathwise Regularity and Lifting.}
There exists a Loeb-measurable set $\Omega_0$ with $\mu_L(\Omega_0)=1$ such that for all $\omega \in \Omega_0$:
\begin{enumerate}
    \item The process $X_t(\omega)$ is $S$-bounded ($|X_t| < H(\omega)$) and $S$-continuous on $\mathbb{T}_{\delta t}$.
    \item The standard-part process $\bar{X}_r = \text{st}(X_{\tau})$ (for $\tau \approx r$) is well-defined and continuous.
    \item The functions $b, \sigma, \nabla f,$ and $D^2 f$ evaluated along the path are $S$-continuous and $S$-bounded.
\end{enumerate}
Consequently, the internal integrands are \emph{liftings} of their standard counterparts. For instance, $\nabla f(X_s) \approx \nabla f(\bar{X}_{\text{st}(s)})$ holds for all $s \in \mathbb{T}_{\delta t}$ \cite{loeb1975conversion}.

\textbf{2. The Drift and Martingale Components.}
Substituting the increment $\Delta X_s = {}^*b \delta t + {}^*\sigma \Delta W_s$ into the first-order sum of the Taylor expansion:
\[
\sum_{s < t} \nabla f(X_s)^\top \Delta X_s = \underbrace{\sum_{s < t} \nabla f(X_s)^\top b(s, X_s) \delta t}_{I_1} + \underbrace{\sum_{s < t} \nabla f(X_s)^\top \sigma(s, X_s) \Delta W_s}_{I_2}.
\]
\textbf{Drift Term $I_1$:} As an internal Riemann sum with an $S$-continuous integrand, its standard part satisfies:
\[
\operatorname{st}(I_1) = \int_0^{\operatorname{st}(t)} \operatorname{st}(\nabla f(X_r)^\top b(r, X_r)) dr = \int_0^{\operatorname{st}(t)} \nabla f(\bar{X}_r)^\top b(r, \bar{X}_r) dr.
\]
\textbf{Martingale Term $I_2$:} Let $M_\tau = \sum_{s < \tau} \Phi_s \Delta W_s$ with $\Phi_s = \nabla f(X_s)^\top \sigma(s, X_s)$. $M_\tau$ is an $S$-continuous internal martingale. Its standard part $\bar{M}_t$ is a continuous martingale on the Loeb space with quadratic variation $\langle \bar{M} \rangle_t = \operatorname{st}([M]_t) = \int_0^t \|\Phi_r\|^2 dr$. This, paired with the internal covariation $[M, W]_t$, uniquely identifies $\operatorname{st}(I_2)$ as the It\^o integral $\int_0^{\operatorname{st}(t)} (\nabla f^\top \sigma) d\bar{W}_r$ \cite{anderson1976non}.

\textbf{3. The It\^o Term and Quadratic Collapse.}
We expand the second-order sum $\frac{1}{2} \sum_{s<t} \Delta X_s^\top D^2 f(X_s) \Delta X_s$. Note that:
\[
\Delta X_s \Delta X_s^\top = (\sigma \Delta W_s)(\sigma \Delta W_s)^\top + O_\omega(\delta t^{3/2}).
\]
The $O_\omega(\delta t^{3/2})$ terms vanish upon summation. For the principal term, let $A_s = \sigma(s, X_s)^\top D^2 f(X_s) \sigma(s, X_s)$. We evaluate the internal sum:
\[
J = \sum_{s < t} \sum_{i,j=1}^m (A_s)_{ij} \Delta W_s^{(i)} \Delta W_s^{(j)}.
\]
For Rademacher increments where $(\Delta W_s^{(i)})^2 = \delta t$:
\begin{itemize}
    \item \textbf{Diagonal terms ($i=j$):} The sum is $\sum_{s < t} \text{Tr}(A_s) \delta t$. By the cyclic property, $\text{Tr}(A_s) = \text{Tr}(\sigma \sigma^\top D^2 f)$.
    \item \textbf{Off-diagonal terms ($i \neq j$):} The terms $K_\tau = \sum_{s < \tau} \sum_{i \neq j} (A_s)_{ij} \Delta W_s^{(i)} \Delta W_s^{(j)}$ form an internal martingale. Its quadratic variation $[K]_t$ is $O(\sum \delta t^2) = O(\delta t)$. Thus, by the internal martingale property, $K_t \approx 0$ Loeb-a.s. \cite{nelson1987radically}.
\end{itemize}
Thus, $\operatorname{st}(J) = \int_0^{\operatorname{st}(t)} \text{Tr}(\sigma(r, \bar{X}_r) \sigma(r, \bar{X}_r)^\top D^2 f(\bar{X}_r)) dr$.

\textbf{4. Conclusion.}
Summing the components and noting the remainder $\epsilon_t \approx 0$ from Proposition \ref{prop:discrete-ito-rigorous}, the standard part of the discrete expansion recovers the standard It\^o Formula:
\[
f(\bar{X}_t) - f(\bar{X}_0) = \int_0^t \left( \nabla f^\top b + \frac{1}{2} \text{Tr}(\sigma \sigma^\top D^2 f) \right) dr + \int_0^t \nabla f^\top \sigma \, d\bar{W}_r.
\]
\end{proof}

\section{$S$-equivalence of measures}
\label{app:se}
\begin{definition}[$S$-Equivalence of Internal Grid Densities]
Let $\mathbb{G}$ be a hyperfinite grid in ${}^*\mathbb{R}^d$ with cell volume $(\delta x)^d$. Let $p_1$ and $p_2$ be internal, non-negative probability densities on $\mathbb{G}$ (i.e. $\sum_{x \in \mathbb{G}} p_1(x)(\delta x)^d = \sum_{x \in \mathbb{G}} p_2(x)(\delta x)^d = 1$). We say that $p_1$ and $p_2$ are $S$-equivalent, denoted $p_1 \approx p_2$, if for every internal set $A \subseteq \mathbb{G}$:
\begin{equation*}
\sum_{x \in A} p_1(x) (\delta x)^d \approx \sum_{x \in A} p_2(x) (\delta x)^d.
\end{equation*}
If \(p_1,p_2\) are \(S\)-integrable, then $p_1 \approx p_2$ if and only if their induced nonstandard measures define the exact same Loeb measure on the $\sigma$-algebra of Loeb-measurable subsets of $\mathbb{G}$.
\end{definition}

Furthermore, for these $S$-integrable probability densities, a hyperfinite analogue of Scheffé’s theorem guarantees that the above condition is equivalent to convergence in the internal $L^1(\mathbb{G})$ norm:
\begin{equation*}
\|p_1 - p_2\|_{L^1(\mathbb{G})} = \sum_{x \in \mathbb{G}} |p_1(x) - p_2(x)| (\delta x)^d \approx 0.
\end{equation*}

\end{document}